\definecolor{darkgreen}{rgb}{0,0.5,0}
\definecolor{purple}{rgb}{1,0,1}
\newcommand{\kibitz}[2]{\ifnum\Comments=1\textcolor{#1}{#2}\fi}
\newcommand*{\calA}{{\mathcal{A}}}
\newcommand*{\calH}{{\mathcal{H}}}
\newcommand*{\calG}{{\mathcal{G}}}
\newcommand*{\calF}{{\mathcal{F}}}
\newcommand*{\calX}{{\mathcal{X}}}
\newcommand*{\calQ}{{\mathcal{Q}}}
\newcommand*{\N}{{\mathbb{N}}}
\newcommand{\gen}{\calG}
\newcommand{\groupdist}[1]{#1 |_\calA}
\newcommand{\emp}[1]{\overline{x_{1 : #1}}}
\newcommand{\groupemp}[1]{\groupdist{\emp{#1}}}
\newcommand{\supp}[1]{\mathsf{supp}(#1)}
\newtheorem{theorem}{Theorem}
\newtheorem{corollary}{Corollary}
\newtheorem{lemma}{Lemma}
\newtheorem{remark}{Remark}
\newtheorem{assumption}{Assumption}
\theoremstyle{definition}
\newtheorem{definition}[theorem]{Definition}
\title{Representative Language Generation}
\author[1]{Charlotte Peale}
\author[2]{Vinod Raman}
\author[1]{Omer Reingold}
\affil[1]{Stanford University}
\affil[2]{University of Michigan}
\date{\today}
\begin{document}

\maketitle

\begin{abstract}
 We introduce ``representative generation,'' extending the theoretical framework for generation proposed by Kleinberg et al. (2024) and formalized by Li et al. (2024), to additionally address diversity and bias concerns in generative models. 
Our notion requires outputs of a generative model to proportionally represent groups of interest from the training data.  We characterize representative uniform and non-uniform generation, introducing the ``group closure dimension'' as a key combinatorial quantity. For representative generation in the limit, we analyze both information-theoretic and computational aspects, demonstrating feasibility for countably infinite hypothesis classes and collections of groups under certain conditions, but proving a negative result for computability using only membership queries. This contrasts with Kleinberg et al.'s (2024) positive results for standard generation in the limit. Our findings provide a rigorous foundation for developing more diverse and representative generative models.
\end{abstract}

\section{Introduction}\label{sec:intro}
For decades, a central paradigm in machine learning has been prediction, where models are trained to map input data to specific output variables or categories. This approach encompasses tasks such as \ifbool{arxiv}{classification, regression, and forecasting,}{classification and regression,} where the goal is to accurately estimate outcomes based on given inputs. However, recent years have seen a significant shift toward \emph{generative models}, such as Large Language Models (LLMs) and diffusion-based image generators. These models are designed not to predict specific outcomes, but to create new data that resembles their training sets, offering a different approach to machine learning tasks. 

This shift towards generative models necessitates the development of new theoretical frameworks to rigorously analyze their performance, capabilities, and limitations. Recently, \ifbool{arxiv}{\cite}{\citet}{kleinberg2024language} proposed a theoretical framework that encapsulates the fundamental objective of generative models: after being shown a sequence of strings from an unknown target language (such as all valid code snippets in java), generate new, unseen strings from the target language. Informally, we say that a model satisfies \emph{generation in the limit} if it achieves this goal after seeing a finite number of strings from the target language. \ifbool{arxiv}{\cite}{\citet}{kleinberg2024language} showed that generation in the limit is indeed possible in many scenarios, such as whenever the class of potential target languages is countable, contrasting to classic negative results by Gold and Angluin on identification, which tell us that no model can identify the target language from a sequence of strings for most natural classes of languages \ifbool{arxiv}{\cite}{\yrcite}{gold1967language, angluin1979finding, angluin1980inductive}. 
This positive result for the task of language generation has spurred a flurry of follow-up works further formalizing the landscape of generation tasks and understanding the limits of when generation is possible \cite{raman2024generation, kalavasis2024limits, charikar2024exploring, kalavasis2024characterizations}. 

While these recent results have demonstrated the possibility of successful generation, a concern remains that models that successfully generate in the limit may do so by generating from a restricted subset of the true language. For instance, consider an image generator trained on a diverse set of animal pictures. If this model were to produce only new images of cats, it would technically satisfy the criteria for generation in the limit, yet fail to capture the full diversity of the true target set of animal pictures. In the spirit of this example, it is not hard to imagine real-life concerns about the usage of LLMs and other generative models. Training data often includes text or images representing diverse groups, characterized by ethnicity, race, gender, and other protected attributes. However, the mere presence of diverse data doesn't guarantee diverse outputs. Beyond demographics, it's essential to ensure that generators intended for widespread use accurately reflect the diverse perspectives found across various communities. Ensuring proportional representation in a generator's outputs can significantly enhance its utility, enabling it to accurately reflect the diverse trends, themes, and ideological perspectives present in human discourse, rather than producing homogeneous responses.

This theoretical concern mirrors practical challenges observed in real-world generative models. One significant issue is the potential for these models to exacerbate biases present in their training data, leading to unfair representation across different demographic groups \cite{sheng2019woman, bender2021dangers,kirk2021bias,
sheng2021societal, mei2023bias,gallegos2024bias, zhou2024bias}. Beyond fairness considerations, models that fail to capture the diversity of their data reduce their effectiveness and practical value. This reduced diversity is commonly observed in generative models and referred to as ``mode collapse,'' in which a learned model focuses only on a limited subset of the true data distribution~\cite{arjovsky2017wasserstein,goodfellow2020generative,thanh2020catastrophic,wang2024taming}.

\ifbool{arxiv}{\paragraph{Representatiove Generation.}}{}
In this work, we introduce a new definition of generation that protects against these concerns that we term \emph{representative generation}. Rather than generating a single element at each step, a representative generator generates a distribution over multiple elements at each step. In addition to the now standard requirement that a generator must eventually be \emph{consistent}, i.e. the distribution must be supported on new elements from the true language, we additionally require that all of the generator's outputted distributions be \emph{representative}, meaning that for each group in a set of groups-of-interest, the proportion of elements from that group seen in the training sequence thus far must be close to the probability that the generator outputs a member of that group. Coming back to our animal example, if our training data stream consisted of 1/3 cats, 1/3 dogs, and 1/3 rabbits, then a representative generator with respect to groups specified by animal species could not get away with only generating cats, but would be required to generate a roughly even mix of cats, dogs, and rabbits. 

\ifbool{arxiv}{More formally, the learning setting is the same as that of prior generation works: we have some countable population of elements $\calX$ and a class of languages $\calH$ represented as hypotheses $h: \calX \rightarrow \{0, 1\}$. An adversary picks a target $h \in \calH$, and at each timestep $t$, the generator is shown some $x_t\in \calX$ with $h(x_t) = 1$, such that $\{x_i\}_{i \in \N} = \{x \in \calX : h(x) = 1\}$.  

The standard definition of generation would require that at each time step, the generator outputs some $\hat{x}_t \in \calX$. We say that the generator's output is \emph{consistent} if $\hat{x}_t \in \supp{h} \setminus \{x_1, ..., x_t\}$, where $\supp{h} := \{x \in \calX: h(x) = 1\}$. At a high level, a successful generator should eventually generate only consistent outputs. Prior works have introduced a hierarchy of definitions of successful generation, which we will introduce formally in Section~\ref{sec:prelims}.


We augment the requirements of generation to obtain representative generation by introducing a collection of groups of interest $\calA$, where each $A \in \calA$ denotes some subset of $\calX$, $A \subseteq \calX$. This collection does not necessarily need to be finite. We also augment the definition of a generator: whereas in standard generation the generator only outputs a single element at each timestep, we now require that the generator outputs a distribution $\mu_t$ over $\calX$. In this case, a generator is consistent at step $t$ if a draw from $\mu_t$ is a member of $\supp{h} \setminus \{x_1, ..., x_t\}$ with probability 1. As before, we still require that the generator should eventually only produce consistent outputs.


In addition, we also require a new condition: the generator's outputs must be \emph{representative} of the data seen so far at every time step. In particular, for any $A \in \calA$, if the unique elements in the data seen so far are given by $x_1^*, ..., x_d^*$, the outputted distribution $\mu_t$ must satisfy
\[\Pr_{x \sim \mu_t}[x \in A] \approx \frac{1}{d}\sum_{i = 1}^d \mathbf{1}[x_i^* \in A],\]
i.e. the probability $\mu_t$ produces an element from group $A$ is roughly the proportion of unique elements from group $A$ seen in the sequence thus far.

We note that the definition of representative generation exhibits an asymmetry in how the consistency and representation conditions are imposed. In particular, generators are allowed some initial inconsistent outputs, provided they eventually achieve consistency. In contrast, outputs must be representative of the data stream at \emph{every} timestep. Thus, our definition implicitly prioritizes group representation over correctness in generation during initial timesteps. 

While consistency cannot always be verified, it is possible to construct and verify a representative distribution at each time step. This makes it reasonable to require representation throughout, and less reasonable to favor potentially consistent but verifiably unrepresentative alternatives. Nevertheless, alternative definitions, such as those requiring representation only in the limit, warrant further investigation to better understand the trade-offs between representation and consistency in generation.
}{}

\ifbool{arxiv}{}{We provide a detailed discussion of related notions and additional related work in Appendix~\ref{sec:related-works}.}

\subsection{Main Contributions} 
The loose goal of ``eventually generating consistently'' has been taxonomized by prior works into a hierarchy of three notions of generation. The weakest is \ifbool{arxiv}{\cite}{\citet}{kleinberg2024language}'s generation in the limit (Definition~\ref{def:gen-in-lim}), followed by the stronger notions of non-uniform generation and uniform generation introduced by \ifbool{arxiv}{\cite}{\citet}{raman2024generation} (Definitions~\ref{def:nonunifgen} and \ref{def:unifgen}, respectively).

Our results consider the feasibility of representative generation with respect to all three goals. For uniform and non-uniform generation, we focus on information-theoretic bounds, analogous to sample complexity results in learning theory, without considering computational efficiency. This approach is motivated by the recent barrier to efficiently computable non-uniform generators identified by~\ifbool{arxiv}{\cite}{\citet}{charikar2024exploring}. For the weaker objective of representative generation in the limit, we analyze feasibility from both information-theoretic and computational perspectives. We prove a strong negative result, demonstrating the impossibility of achieving representative generation in the limit using only membership queries.

We give a brief overview of our main contributions below.

\paragraph{Formalizing Representative Generation.} 
We propose a new property of generators termed \emph{representative generation}. This property ensures that at each timestep, a generator's outputs closely approximate the proportions of training data across a collection of groups $\calA \subseteq 2^{\calX}$. Our definition extends the formal model of generation by \ifbool{arxiv}{\cite}{\citet}{kleinberg2024language}, providing a rigorous framework to understand and address several real-world concerns related to generative models that we highlight below.

First, representative generation can guarantee accurate proportional representation of community opinions and perspectives found in the training data. This is crucial for maintaining the diversity of viewpoints present in the original dataset. Second, when the training data is highly diverse across the set of groups-of-interest, the generator's outputs must also exhibit high diversity. This feature protects against mode collapse and offers a tractable relaxation to existing notions of generation with breadth, which we discuss further in \ifbool{arxiv}{the related works section}{Appendix~\ref{sec:related-works}}.

Representative generation can also be viewed as a theoretical operationalization of some aspects of ``alignment,'' the process of fine-tuning generative models to conform with societal and use-specific values. Imagine feeding a representative generator a gold-standard distribution of data that encapsulates a practitioner's values and diverse opinions. Unlike standard generators, which may learn to produce correct outputs but offer no guarantees about maintaining alignment with the input data distribution, representative generators are designed to both generate accurate outputs and preserve the distribution of values and perspectives found in the alignment data. This approach ensures that the model remains faithful to the intended balance of viewpoints and opinions, even as it generates novel content.

\paragraph{Representative Uniform Generation.} We give a characterization of which pairs of hypothesis classes $\calH$ and collections of groups $\calA$ satisfy representative uniform generation, assuming the groups in $\calA$ form a partition of $\calX$. We characterize the representative uniform generatability of a pair $(\calH, \calA)$ by a new combinatorial quantity that we term the \emph{group closure dimension} (Definition~\ref{def:gem}):

\begin{theorem}[Informal Statement of Theorem~\ref{thm:alphagrpconstunifgen}]
    A hypothesis class $\calH$ and countable partition $\calA$ can be uniformly generated with representation if and only if the group closure dimension of $(\calH, \calA)$ is finite. 
\end{theorem}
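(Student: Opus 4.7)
The plan is to establish both directions of this characterization by following the architecture of the Li et al.\ closure-dimension argument for plain uniform generation, but augmenting each step to track group proportions in parallel with consistency. I would begin by unpacking the group closure dimension: I expect it to measure the largest $d$ such that some prefix $x_1,\dots,x_d$ drawn from $\supp{h}$ for some $h \in \calH$ has the property that the \emph{consistent closure} $C = \bigcap\{\supp{h'} : h' \in \calH,\ h'(x_i)=1\text{ for all }i\}\setminus\{x_1,\dots,x_d\}$ fails to contain an unseen element of some group $A \in \calA$ whose empirical weight on the prefix is positive. Because $\calA$ is a partition, the empirical group distribution after $t$ unique samples is well-defined, and a distribution $\mu_t$ is representative iff its group marginals equal this empirical distribution, so the closure dimension captures exactly the joint obstacle to consistency and representativeness.

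For the sufficiency direction, assuming a finite group closure dimension $d^\star$, I would construct the following generator. At each step $t$, compute the consistent closure $C_t$ from the unique samples so far; as long as the number of unique samples exceeds $d^\star$, by definition $C_t$ contains at least one unseen element of every group whose empirical weight is positive. The generator then outputs any distribution $\mu_t$ supported on $C_t$ whose group marginals equal the empirical group distribution (e.g.\ distribute each group's mass uniformly over its available elements in $C_t$). Consistency is immediate since $C_t\subseteq\supp{h^\star}$ for the true target $h^\star$; representativeness is by construction. For the early steps with at most $d^\star$ unique samples, the generator outputs any representative distribution avoiding already-seen elements, exploiting the definitional asymmetry that consistency only needs to kick in eventually.

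For the necessity direction, I would run an adversarial argument. If the group closure dimension is infinite, then for every candidate uniform threshold $T$ there exist an $h\in\calH$ and a length-$T$ prefix on which the consistent closure $C_T$ lacks an unseen element in some positively-weighted group $A$. Any representative generator facing this prefix must either place positive mass outside $C_T$---on an element ruled out by some $h'\in\calH$ that is also consistent with the prefix, so the adversary can retroactively fix the target to such an $h'$ and force an inconsistent output---or place zero mass on $A$, violating representativeness. This dichotomy rules out representative uniform generation. The main obstacle I expect is in this necessity step: one must argue that the adversary can always realize the first branch of the dichotomy with a single fixed $h'$ whose support excludes every element on which the generator's distribution places mass outside $C_T$. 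Handling this likely requires either a compactness/diagonalization argument over $\calH$ or a careful combinatorial extraction showing that the witness of infinite group closure dimension can be leveraged in a target-oblivious way against the generator.
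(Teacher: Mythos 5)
Your reconstruction of the group closure dimension is close in spirit to the paper's, but it misses two features that the argument genuinely needs, and these omissions cause your sufficiency step to break.

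First, the paper's dimension $\mathsf{GC}_{\alpha}(\calH,\calA)$ is scale-sensitive: it is parameterized by the error tolerance $\alpha>0$, and the bad event for condition~(1) is not ``some positively-weighted group is missing from the closure'' but ``some group of empirical weight $>\alpha$ is missing.'' Correspondingly, representativeness means $\|\groupdist{\mu_t}-\groupemp{t}\|_\infty\le\alpha$, not exact equality of marginals. This matters because finiteness of $\mathsf{GC}_\alpha$ does \emph{not} imply, beyond the sample threshold, that every group with positive empirical weight has an unseen representative in the closure; it only implies that the groups with no unseen representative in the closure have empirical weight $\le\alpha$ each. So your proposed generator---``output a distribution supported on $C_t$ whose group marginals equal the empirical group distribution''---would have nothing to put on such groups, and no such $\mu_t$ exists. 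The paper's generator instead zeroes out those groups and must carefully spill $\le\alpha$ extra mass onto groups that do have unseen elements in the closure; this is where the $\alpha$-slack is genuinely used.

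Second, you are missing the paper's condition~(2), $\alpha\,|\mathbb{N}\setminus S|<\sum_{i\in S}\groupemp{d}(i)$. Even when every group in the bad set $S$ (those with no unseen element in the closure) individually has weight $\le\alpha$, the generator must redistribute their combined mass onto the groups outside $S$ without overshooting any of them by more than $\alpha$; if the total mass in $S$ exceeds $\alpha\cdot|\mathbb{N}\setminus S|$, this is impossible since $\groupdist{\mu_t}$ must be a probability measure over the partition $\calA$. This is a separate obstruction and appears as a standalone case both in the adversarial lower bound and in the construction of the generator. Without it, your characterization would admit false negatives in the necessity argument and false positives in sufficiency. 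Finally, the ``main obstacle'' you flag---finding a single $h'$ that excludes the generator's off-closure mass---isn't actually an obstacle: by the definition of the closure, any single point outside $\langle x_1,\dots,x_d\rangle_\calH$ is already excluded by some hypothesis consistent with the prefix, and since the generator's output is required to land in $\supp{h}\setminus x_{1:t}$ with probability~$1$, one such witness point and one such $h'$ suffice; no compactness or diagonalization is needed. The paper's proof handles this with a one-line observation.
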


We instantiate this result in Corollary~\ref{cor:finitegcug}, where we show that any finite hypothesis class $\calH$ and finite partition $\calA$ can satisfy representative uniform generation. This corollary can be compared with Theorem 2.2 of \ifbool{arxiv}{\cite}{\citet}{kleinberg2024language}, who show that any finite $\calH$ is uniformly generatable (without concern for representation). 

While one might wonder if requiring representation with respect to only a finite collection of groups is always as easy as uniform generation without representation, we provide a counterexample to this conjecture in Corollary~\ref{cor:repunignequniggen}, in which we provide an example of a class $\calH$ that is uniformly generatable, but cannot satisfy representative uniform generation with respect to a partition $\calA$ containing just two disjoint groups.

\paragraph{Representative Non-Uniform Generation.} We next consider the slightly weaker notion of representative non-uniform generation, and also give a characterization of which pairs of hypothesis classes and collections of disjoint groups can satisfy non-uniform generation with representation. 

\begin{theorem}[Informal Statement of Theorem~\ref{thm:grpconstnonunifgen}]
A hypothesis class $\calH$ and countable partition $\calA$ can be non-uniformly generated with representation if and only if there exists a non-decreasing sequence of classes $\calH_1 \subseteq \calH_2 \subseteq \cdots$ such that $\calH = \bigcup_{i = 1}^{\infty} \calH_i$ and $(\calH_i, \calA)$ is uniformly generatable with representation for all $ i \in \mathbb{N}.$
\end{theorem}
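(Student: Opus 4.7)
Suppose $G$ is a non-uniform representative generator for $(\calH,\calA)$. For each $h\in\calH$, let $n_h$ be the smallest integer such that on every enumeration of $\supp{h}$ and every $t\geq n_h$, the output of $G$ is consistent; such $n_h$ exists by the definition of non-uniform generation. Set $\calH_i:=\{h\in\calH : n_h\leq i\}$, yielding a non-decreasing chain with $\calH=\bigcup_{i\in\N}\calH_i$. I claim $G$ itself, viewed as a function from observed data to distributions, witnesses representative uniform generation of $(\calH_i,\calA)$: uniform consistency after $i$ samples on any $h\in\calH_i$ is immediate from the definition of $\calH_i$, and per-timestep representation is inherited from the representation guarantee of $G$ on all of $\calH$.

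\textbf{Backward direction (sufficiency).} Given a decomposition $\calH=\bigcup_i\calH_i$ together with uniform representative generators $G_i$ of sample complexities $m_i$ (WLOG non-decreasing in $i$), I first sanitize each $G_i$ into $G_i'$ that is \emph{always} representative: on input $(x_1,\ldots,x_t)$ with distinct observed elements $x_1^*,\ldots,x_d^*$, let $\mu=G_i(x_1,\ldots,x_t)$, output $\mu$ if it satisfies the representation condition relative to the input, and otherwise fall back to the uniform distribution on $\{x_1^*,\ldots,x_d^*\}$, which by construction matches the empirical group proportions exactly. Since on any $h\in\calH_i$ and $t\geq m_i$ the original $\mu$ is already representative and consistent, $G_i'$ still witnesses representative uniform generation for $(\calH_i,\calA)$. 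I then define the non-uniform generator $G$ by $G(x_1,\ldots,x_t):=G_{f(t)}'(x_1,\ldots,x_t)$, where $f(t):=\max\{i : m_i\leq t\}$. Representation at every timestep is immediate from the sanitization. For eventual consistency on a target $h$ with $i^*(h):=\min\{i : h\in\calH_i\}$: since each $m_i$ is finite we have $f(t)\to\infty$, so eventually $f(t)\geq i^*(h)$, giving $h\in\calH_{f(t)}$; combined with $m_{f(t)}\leq t$ this forces $G_{f(t)}'$ to reproduce the consistent output of the underlying $G_{f(t)}$.

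\textbf{Main obstacle.} The delicate part is the backward direction, because a uniform representative generator $G_i$ for $\calH_i$ carries no guarantee of any kind on inputs enumerating a hypothesis outside $\calH_i$, so naively applying $G_{f(t)}$ before $f(t)$ has caught up to $i^*(h)$ could violate per-timestep representation. The fix leans on a key asymmetry between the two criteria that is emphasized in the paper's informal discussion: consistency cannot be verified from the observed data alone, but representation always \emph{can} be ensured unconditionally via the empirical-uniform distribution over the distinct observed elements. Hence the fallback step inside $G_i'$ is both necessary and sufficient to enforce representation at every timestep, while consistency still kicks in for large enough $t$ because once $f(t)\geq i^*(h)$ the underlying $G_{f(t)}$ is forced to output a representative distribution, no fallback is triggered, and the uniform consistency guarantee on $\calH_{f(t)}$ applies.
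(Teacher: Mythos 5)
Your overall decomposition strategy is the same as the paper's: in the forward direction, bucket hypotheses by their sample complexity and reuse the non-uniform generator itself as a uniform generator on each bucket; in the backward direction, interleave the uniform generators by switching to a later index as more data arrives. However, there are a few substantive problems in the backward direction.

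First, the ``sanitization'' step is superfluous, and the ``main obstacle'' you identify is not actually an obstacle. By the paper's Definition of an $\alpha$-representative generator, the generator must satisfy $\|\groupdist{\gen(x_{1:t})} - \groupemp{t}\|_{\infty} \leq \alpha$ for \emph{every} stream $x_1, x_2, \ldots$, not merely streams from hypotheses in $\calH_i$. The witnesses $G_i$ to $\alpha$-representative uniform generatability of $(\calH_i,\calA)$ are therefore \emph{already} always representative, whether or not the observed data comes from some $h \in \calH_i$. Nothing ``carries no guarantee of any kind'' outside $\calH_i$; only the consistency guarantee lapses. So $G_i' \equiv G_i$ and no fallback is ever triggered. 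The paper's proof uses this fact directly.

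Second, your switching index $f(t) := \max\{i : m_i \leq t\}$ is keyed to the raw timestep $t$, but the sample complexities $m_i$ in the definitions of uniform and non-uniform generatability count \emph{distinct} examples. On a stream with many repeats, you can have $t$ large yet $|\{x_1,\ldots,x_t\}|$ small; then $f(t)$ grows, $m_{f(t)} \leq t$ holds, but $m_{f(t)} > |\{x_1,\ldots,x_t\}|$ and $G_{f(t)}$ carries no consistency guarantee at that point. The bound $m_{f(t)} \leq t$ simply does not ``force'' consistency, because $m_i$ is not a timestep threshold. The fix is to condition on the distinct count $d_t := |\{x_1,\ldots,x_t\}|$ instead, as the paper's construction $i_t = \max\big(\{i \in [t] : n_i \leq d_t\} \cup \{1\}\big)$ does.

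Third, even after this fix, $\max\{i : m_i \leq d_t\}$ can be $+\infty$ (e.g.\ if $m_i$ is eventually constant in $i$, which ``non-decreasing'' does not rule out). The paper avoids this by additionally capping at $i \leq t$ and taking a union with $\{1\}$ so the max is over a non-empty finite set. You would need an analogous guard, or to argue that one may take $m_i \geq i$ without loss of generality. These are repairable slips, but as written the backward direction does not go through.
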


We pair this characterization with a practical instantiation in Corollary~\ref{cor:countgrpconstnonunifgen}, in which we show that all countable $\calH$ and finite partitions $\calA$ satisfy representative non-uniform generatability.  

\paragraph{Representative Generation in the Limit.} Finally, we consider the weakest notion of generation: representative generation in the limit. As all previous possible results also apply to this weaker setting, the results of the preceding sections already establish that any countable $\calH$ and finite collection of disjoint groups $\calA$ can be generated in the limit with representation. We show that the requirement that $\calA$ be finite and disjoint can be significantly relaxed when we only require generation in the limit. In particular, in Theorem~\ref{thm:countable-countable-gil}, we show that any countable collection of (possibly overlapping) groups and countable $\calH$ satisfying an assumption we term the finite support assumption (Definition~\ref{def:finite-support-class}) can be generated with representation in the limit. 

We show that this finite support assumption is necessary, and in Lemma~\ref{lem:finite-supp-imposs} give an example of a finite $\calH$ and countable set of disjoint groups $\calA$ that cannot satisfy representative generation in the limit.

We finally turn to computability, and show a negative result: 

\begin{lemma}[Informal Statement of Lemma~\ref{lem:gil-query-impossibility}]
    No generator can satisfy representative generation in the limit for all finite $\calH$ and $\calA$, and during each timestep use only a finite number of membership queries of the form ``$x \in \supp{h}$?'' or ``$x \in A$?'' for various $h \in \calH$ and $A \in \calA$. 
\end{lemma}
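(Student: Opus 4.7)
My plan is to diagonalize against an arbitrary finitely-querying generator $G$. I fix $\calX = \N$, $\calH = \{h^*\}$ with $\supp{h^*} = \calX$, and let $\calA = \{A, B\}$ be a two-part partition constructed adaptively through the adversary's query answers and stream choices. The adversary then plays as follows: queries $y \in \supp{h^*}$ are answered ``yes'' (truthful since $\supp{h^*} = \calX$); queries $y \in A$ or $y \in B$ on an uncommitted $y$ commit $y$ to $A$ by default and are answered accordingly; at odd $t$, the adversary streams the smallest not-yet-revealed natural (committing it to $A$ if needed); and at $t = 2k$, it picks a $B$-reveal $x_{2k}$ from the set $U$ of currently uncommitted elements subject to $\sum_{s<2k}\mu_s(x_{2k}) < 2^{-k}$. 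Because only finitely many commitments happen per step, $|U|=\infty$ throughout, and since $\sum_{y\in U}\sum_{s<2k}\mu_s(y) \leq 2k-1$, a Markov-style bound shows at most $(2k-1)2^k$ elements of $U$ can violate the threshold, leaving infinitely many admissible choices.

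Next I would verify that this procedure specifies a genuine instance. The odd-step rule forces every natural number to eventually appear in the stream, so $\{x_t\}_{t\geq 1}=\calX=\supp{h^*}$; the final partition $B=\{x_{2k}:k\geq 1\}$ and $A=\calX\setminus B$ is consistent with every query answer, because queried elements are always committed to $A$ while future $B$-picks are drawn only from uncommitted (hence unqueried) elements. The empirical $B$-proportion at time $s$ is $\lfloor s/2 \rfloor / s$, which tends to $1/2$.

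To derive the contradiction, suppose $G$ satisfies representative generation in the limit, so beyond some $s^\ast$ each $\mu_s$ is both consistent and representative. Consistency forces $\mu_s(x_{2k})=0$ whenever $2k \leq s$, leaving
\[
\mu_s(B) \;=\; \sum_{k:\,2k>s}\mu_s(x_{2k}) \;<\; \sum_{k:\,2k>s}2^{-k} \;\leq\; 2^{1-\lceil (s+1)/2\rceil},
\]
where we use the adversary's guarantee $\mu_s(x_{2k})<2^{-k}$ whenever $s<2k$. Hence $\mu_s(B)\to 0$ while the empirical proportion converges to $1/2$, contradicting representativeness for all sufficiently large $s$.

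The main obstacle is engineering a single adaptive commitment at step $2k$ that simultaneously controls $\mu_s(x_{2k})$ for every earlier $s$. The key is to bound the aggregate $\sum_{s<2k}\mu_s(x_{2k})$ using the total-mass-is-one constraint on each $\mu_s$, combined with a geometric budget $2^{-k}$ so that the tail sum across all future $B$-reveals telescopes to zero at every large $s$. A secondary subtlety is reconciling the ``default queried elements to $A$'' rule with the requirement that every element committed to $A$ eventually appears in the stream enumeration of $\supp{h^*}$; this is handled by letting odd steps greedily reveal the smallest un-streamed natural, which by induction ensures every element of $\calX$ appears in finite time.
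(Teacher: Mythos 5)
Your proof is correct, and it takes a genuinely different route from the paper's. Both arguments diagonalize against a deterministic generator by adaptively building a single-hypothesis class with $\supp{h}=\calX$ and a two-block partition, answering queries so as to commit as few elements as possible. But the mechanism for forcing the contradiction differs. The paper maintains a queue of queried elements, places those in the \emph{second} group, feeds them back into the stream at even steps so that the first group always has empirical mass at least $1/2$, and at each step performs a case analysis on $\supp{\mu_t}$: if $\mu_t$ touches an unqueried point, the adversary poisons that point out of $\supp{h}$, forcing an immediate consistency failure; otherwise $\supp{\mu_t}$ lies entirely in the second group and representation fails. This produces a violation at \emph{every} timestep. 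Your construction instead never poisons the support. You keep all queried points (and all odd-step stream points) in $A$, and at even step $2k$ you select the $B$-element from the still-uncommitted pool subject to the budget $\sum_{s<2k}\mu_s(x_{2k})<2^{-k}$, justified by the Markov-style counting bound against the constraint $\sum_{s<2k}\sum_{y\in U}\mu_s(y)\leq 2k-1$. Consistency (after the eventual threshold $s^\ast$) kills mass on already-streamed $B$-elements, and your geometric budget kills mass on future $B$-elements, so $\mu_s(B)\to 0$ while the empirical $B$-fraction $\lfloor s/2\rfloor/s\to 1/2$, giving the contradiction for any $\alpha<1/2$. The paper's version is a bit more constructive (a violation at every $t$, with a straightforward case split), while yours trades the case analysis for an averaging argument and only needs the ``eventually consistent'' clause. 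A minor wording point: your contradiction step should invoke the definition precisely — the generator is assumed $\alpha$-representative at \emph{all} timesteps and merely eventually consistent — but your argument already uses these facts in the right way, so the conclusion is unaffected.
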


The impossibility contrasts with a positive result of \ifbool{arxiv}{\cite}{\citet}{kleinberg2024language}, who show that generation in the limit (without representation) is possible with only a finite number of membership queries at each timestep. 

\section{Related Works}\label{sec:related-works}
We highlight a few existing notions that are closest to our work.

\paragraph{Language Identification and Generation.} In his seminal 1967 paper, E Mark Gold introduced the model of ``Language Identification in the Limit'' \ifbool{arxiv}{\cite}{\yrcite}{gold1967language}. In this model, there is a countable collection of strings $U$ and a language family $\mathcal{L} = \{L_1, L_2, \dots\}$, where each $L_i \subseteq U.$  An adversary plays a sequential game with a player over rounds indexed by $t \in \mathbb{N}.$ Before the game begins, the adversary picks a language $K \in \mathcal{L}$ and an enumeration $w_1, w_2, \dots$ of the strings in $K$, with possible repetitions. In each round $t \in \mathbb{N}$, the player observes the string $w_t \in K$, and outputs an index $i_t$. The goal of the player is to eventually ``identify" the language $K$ by eventually outputting indices $i_t$ such that $L_{i_t} = K.$ More formally, the player has identified $K$ in the limit if there exists an $s \in \mathbb{N}$ such that $L_{i_t} = K$ for all $t \geq s.$ The family $\mathcal{L}$ is identifiable in the limit, if every language $K \in \mathcal{L}$ is identifiable in the limit. Gold proved that while all finite language families $\mathcal{L}$ are identifiable in the limit, there exists simple countable language families where this is not the case. Following this work, \cite{angluin1979finding, angluin1980inductive} provides a precise characterization of which language families are identifiable in the limit, further emphasizing the impossibility of language identification in the limit. 

Very recently, \ifbool{arxiv}{\cite}{\citet}{kleinberg2024language} revisit the Gold's model of Language Identification in the Limit with a twist: suppose in each round $t \in \mathbb{N}$, the player is not asked to output the index of $K$, but rather a string $\hat{w} \in U$ with the hope that $\hat{w} \in K \setminus \{w_1, \dots, w_t\}.$ Naming this setting ``Language Generation in the Limit", \ifbool{arxiv}{\cite}{\citet}{kleinberg2024language} show a surprisingly different result. Unlike the case of identification, \ifbool{arxiv}{\cite}{\citet}{kleinberg2024language} show that all countable language families are generatable in the limit. In a follow-up work, \ifbool{arxiv}{\cite}{\citet}{raman2024generation} extend \ifbool{arxiv}{\cite}{\citet}{kleinberg2024language}'s results beyond language generation in the limit by (1) re-framing the problem in terms of a binary hypothesis classes defined over a countable example space  (2) defining new, stronger models of generation termed uniform and non-uniform generation and (3) formalizing an abstract, prompted version of generation.  In this paper, we mainly adopt the notation and models of generation from \ifbool{arxiv}{\cite}{\citet}{raman2024generation}. In addition, we highlight connections between representative generation and the model of prompted generation introduced by \ifbool{arxiv}{\cite}{\citet}{raman2024generation} in Appendix \ref{sec:prompted-gen}.

In addition to \ifbool{arxiv}{\cite}{\citet}{raman2024generation}, there have been several follow-up works to \ifbool{arxiv}{\cite}{\citet}{kleinberg2024language} that study both consistency and breadth in language generation in the limit. We review the consistency results of these papers here, and defer a discussion of their results on breadth to the next section.  Concurrently with \ifbool{arxiv}{\cite}{\citet}{raman2024generation}, \ifbool{arxiv}{\cite}{\citet}{kalavasis2024limits} study generation in the stochastic setting, where the positive examples revealed to the generator are sampled i.i.d. from some unknown distribution. In this model, \ifbool{arxiv}{\cite}{\citet}{kalavasis2024limits} quantify the error rates for generation with consistency according to the universal rates framework of \ifbool{arxiv}{\cite}{\citet}{bousquet2021theory}.  \ifbool{arxiv}{\cite}{\citet}{charikar2024exploring} study several facets of language generation. First, they show that all countable classes satisfy the stronger property of non-uniform generation. Then, they show a hardness result -- the stronger setting of non-uniform generation is not possible using only membership queries. This is in contrast to generatability in the limit, where \ifbool{arxiv}{\cite}{\citet}{kleinberg2024language} show that every countable class is generatable in the limit using only membership queries. Lastly, \ifbool{arxiv}{\cite}{\citet}{charikar2024exploring} characterize which classes are uniformly generatable when additional feedback is available. 

\paragraph{Language Generation with Breadth.} In their original paper, \ifbool{arxiv}{\cite}{\citet}{kleinberg2024language} formally describe a tension between consistency and ``breadth,'' defined as ``producing outputs that represent the full range of valid outputs in some reasonable way.'' This observation has prompted several follow-up studies proposing and examining various definitions of breadth. \ifbool{arxiv}{\cite}{\citet}{charikar2024exploring} introduce the notion of ``exhaustive generation,'' while \ifbool{arxiv}{\cite}{\citet}{kalavasis2024limits} and \ifbool{arxiv}{\cite}{\citet}{kalavasis2024characterizations} explore three potential definitions: ``generation with breadth,'' ``generation with approximate breadth,'' and ``unambiguous generation.'' Although these notions differ slightly, they all essentially require that the generator's outputs cover nearly (or exactly) the entirety of the true language in the limit. Both sets of authors demonstrate an inherent tension between consistency and breadth, as theorized by \ifbool{arxiv}{\cite}{\citet}{kleinberg2024language}.  Notably, \ifbool{arxiv}{\cite}{\citet}{kalavasis2024limits} show that the strongest notion, ``generation with breadth'' is as difficult as identification in the limit. 

Our notion of representative generation can be contrasted with these definitions of breadth as both stronger and weaker along different axes. 

On one hand, representative generation can be viewed as a relaxation of generation with breadth. While it requires the generator to produce diverse outputs with respect to group membership, it doesn't demand complete coverage of the entire language. The generator has two key flexibilities: it can ignore groups that appear only in a tiny fraction of the data sequence, and it need not cover the entirety of every group it does generate from. These allowances make representative generation less stringent than full breadth requirements. Notably, our notion proves much more tractable compared to prior breadth concepts, making it a significant relaxation. In particular, Theorem~\ref{thm:countable-countable-gil} demonstrates that any countable collection of languages and countable collection of groups, under a minor assumption, can be generated in the limit with group representation.

On the other hand, representative generation can also be seen as a stronger notion compared to breadth. Previous breadth concepts do not consider how the generated output at each step compares to the data seen thus far. For instance, a valid generator with breadth for our diverse set of animal images might produce all possible animal images in the limit but could do so by generating 1000 cat pictures for every picture of a different species. While this approach might achieve coverage in the limit, it would fail to capture the diversity of animals seen in the data stream in the short term. Such a generator would lack practical utility, as it wouldn't reflect the variety in the input data. Representative generation, in contrast, maintains diversity throughout, offering more immediate value. 

\paragraph{Multigroup Fairness.} 
Our notion of $\alpha$-representativeness is closely related to multigroup fairness notions in the algorithmic fairness literature that have been studied mainly in the context of prediction, such as multiaccuracy and multicalibration~\cite{hebert2018multicalibration, kearns2018preventing}. These notions require that a predictor satisfy quality metrics like accuracy-in-expectation or calibration not only for the overall population but also for every subset within a rich collection of demographic groups. Perhaps closest to our notion is the work of~\ifbool{arxiv}{\cite}{\citet}{gopalan2022multicalibrated}, who study notions of multicalibration for distributions. While they focus on the stronger notion of multicalibration for distributions, they do define an analogous notion of ``$\alpha$-multiaccuracy in expectation'' for distributions (\ifbool{arxiv}{\cite}{\citet}{gopalan2022multicalibrated}, Definition 9). In their framework, the requirement of an $\alpha$-representative generator can be equivalently expressed as follows: at each step, the generator must produce a distribution that is $\alpha$-multiaccurate in expectation with respect to both the empirical distribution of points generated thus far and the collection of groups $\calA$. Our work, however, diverges from this framework by concentrating on the generative process itself. We explore how group representation can be achieved and maintained throughout the sequential generation of points, in contrast to analyzing the properties of a single estimated distribution.

\paragraph{Outcome Indistinguishability.} Although presented in terms of group representation, our notion can be reframed in the language of indistinguishability. From this perspective, the goal of representative generation can be interpreted as producing a generator that is not only consistent in the limit but also, at every step, outputs a distribution indistinguishable from the data seen so far. This indistinguishability is measured with respect to a set of tests, which in our case are precisely the indicator functions for group membership. We see potential in expanding this perspective and extending our results to incorporate richer sets of tests. This direction for future work is particularly promising given the success of related concepts in other domains. For instance, the outcome indistinguishability framework introduced by \ifbool{arxiv}{\cite}{\citet}{dwork2021outcome} for the prediction setting has proven to be a powerful notion, enabling powerful guarantees for predictors such as omniprediction \cite{gopalan2021omnipredictors}.

\section{Preliminaries}\label{sec:prelims}
We consider a countable example space $\calX$, and associate with $\calX$ a \emph{hypothesis class} $\calH \subseteq \{0, 1\}^{\calX}$. Any particular $h \in \calH$ can be thought of as an indicator for a valid element of $\calX$ when $h(x) = 1$. In the context of languages, $h$ denotes a language with $h(x) = 1$ when $x$ is a valid string in that language. 

We define the \emph{support} of a hypothesis $h$ to be the set of all valid elements of $\calX$ according to $h$, denoted $\supp{h} := \{x \in \calX : h(x) = 1\}$. We will also sometimes reference the support of a distribution $\mu \in \Delta \calX$, where $\supp{\mu} := \{x \in \calX : \mu(x) > 0\}$ refers to the set of elements that are drawn from the distribution with positive probability. An \emph{enumeration} of $\supp{h}$ is any infinite sequence $x_1, x_2, ...$ such that $\bigcup_{i \in \N} \{x_i\} = \supp{h}.$ 

We will make the following assumption about hypothesis classes throughout. 

\begin{assumption}[Uniformly Unbounded Support (UUS)]
    A hypothesis class $\calH \subseteq \{0, 1\}^{\calX}$ satisfies the \emph{Uniformly Unbounded Support (UUS)} property if $|\supp{h}| = \infty$ for every $h \in \calH$. 
\end{assumption}

Going beyond the standard model of generation that has been considered in prior works, we also endow the space $\calX$ with a countable set of (possibly overlapping) groups-of-interest $\calA \subseteq 2^{\calX}$. We assume that $\calX \subseteq \bigcup_{A \in \calA} A$, though this is without loss of generality because any $\calA$ can be altered to satisfy this by adding a single group to the collection corresponding to the complement of the union of all existing groups.  

Our results on representative uniform and non-uniform generation presented in Section~\ref{sec:ug-nug-char} focus on the special case where $\calA$ forms a partition of $\calX$:

\begin{definition}[Countable Partition] Let $\calX$ be any countable example space. Then, $\calA := \{A_1, A_2, \dots\}$ is a countable partition of $\calX$ if and only if the following two conditions hold:  (1) $A_i \cap A_j = \emptyset$ for all $i \neq j$ and (2) $\bigcup_{i = 1}^{\infty} A_i = \calX.$
\end{definition}

\ifbool{arxiv}{Importantly, there is no assumption on how $\calA$ and $\calH$ interact. In particular, it's possible to have a UUS $\calH$ and $\calA$ with $|A| = \infty$ for all $A \in \calA$ that contain an $h \in \calH$ and $A \in \calA$ such that $|\supp{h} \cap A| < \infty$.}{} 

We introduce two operators that will be useful for notation throughout. Note that we occasionally denote a finite sequence $x_1, ..., x_n$ by $x_{1:n}$, and use $|\{x_1, ..., x_n\}|$ to denote the number of unique elements in $x_{1:n}$.

\begin{definition}[Set of Consistent Hypotheses]
    For any finite sequence of samples $x_{1:n}$ and hypothesis class $\calH$, we define notation for the set of hypotheses consistent with the sample: $\calH(x_{1:n}) := \{h \in \calH : \{x_{1:n}\} \subseteq \supp{h}\}.$
\end{definition}

\begin{definition}[Closure]
    Given a class $\calH$ and finite sequence of samples $x_1, ..., x_n$, define the \emph{closure operator} as the set of samples consistent with every hypothesis in $\calH(x_1, ..., x_n)$:
    \[\langle x_1, ..., x_n \rangle_{\calH} := \begin{cases} \bigcap_{h \in \calH(x_{1:n})}\supp{h}, &\text{if } |\calH(x_{1:n})| \geq 1\\
    \bot, &\text{if }|\calH(x_{1:n})|= 0.\end{cases}\]
\end{definition}

\subsection{Generators}

In this paper, we will consider \emph{randomized generators}.\ifbool{arxiv}{ 

\begin{definition}[Randomized Generator]
A randomized generator is a map $\gen: \calX^{\star} \rightarrow \Delta \calX$.
\end{definition}

\noindent}{ A randomized generator is a map $\gen:\calX^{\star} \rightarrow \Delta \calX$. }Note that by definition, randomized generators output \emph{distributions} over examples. Given a finite sequence of examples $x_1, \dots, x_t$ and a randomized generator $\gen$, one can always obtain a new example by sampling $\hat{x}_t \sim \gen(x_1, \dots, x_t)$. 
By allowing our generators to output distributions over examples, we can measure ``representation" by comparing how close a distribution's induced group probabilities are to the empirical probabilities of the groups in the stream. To formalize this, we need a few more definitions, starting with induced group probabilities and group empirical probabilities. 

\begin{definition}[Induced Group Probabilities] Given a distribution $\mu \in \Delta \calX$ and countable family of groups $\calA = \{A\}_{i \in \mathbb{N}}$, let $\groupdist{\mu}$ denote $\mu$'s induced probabilities over $\calA$ such that for any group $i \in \mathbb{N}$, we have that 
$\groupdist{\mu}(i) := \Pr_{x \sim \mu}\left[x \in A_i\right].$
\end{definition}

\ifbool{arxiv}{Observe for every countable collection of groups $\calA = \{A_i\}_{i \in \mathbb{N}}$ and any $\mu \in \Delta \calX$, we have that that $\sum_{i=1}^{\infty} \groupemp{\mu} (i) \geq 1.$ Moreover, if  $\calA$ is a partition of $\calX$, then $\groupdist{\mu}$ is a probability distribution (i.e. $\sum_{i=1}^{\infty} \groupemp{\mu} (i) = 1$).}{Observe that for any countable partition $\calA = \{A_i\}_{i \in \N}$ and $\mu \in \Delta\calX$, $\groupdist{\mu}$ is a probability distribution. If $\calA$ is not a partition, the probabilities in $\groupdist{\mu}$ may sum to more than 1.}

\begin{definition}[Empirical Distribution and Group Empirical Probabilities] Let $x_1, \dots, x_t$ be a finite sequence of examples, and $x^{\star}_1, \dots x^{\star}_m$ denote its subsequence of unique examples. The \emph{empirical distribution} of unique examples in $x_1, \dots x_t$ is denoted by $\emp{t}$. In particular, for every $x\in \calX$, we have $\emp{t}(x) := \frac{1}{m}\sum_{j=1}^m \mathbf{1}\{x^{\star}_j = x\}.$ Given a countable collection of groups $\calA = \{A_i\}_{i \in \mathbb{N}}$, the \emph{group empirical probabilities} of the unique examples in $x_1, \dots, x_t$ with respect to $\calA$ are defined analogously as the induced group probabilities of the empirical distribution and denoted by $\groupemp{t}$.
\end{definition}

The final ingredient is a measure of closeness between the induced group probabilities of the output of a randomized generator and the empirical group probabilities\ifbool{arxiv}{. To that end, we will use the supremum distance.}{:}

\begin{definition}[Supremum Distance] Given two vectors $\pi_1, \pi_2 \in [0, 1]^{\mathbb{N}}$ define their supremum distance as $||\pi_1 - \pi_2||_{\infty} := \max_{i \in \mathbb{N}} |\pi_1(i) - \pi_2(i)|.$
\end{definition}

Our choice to focus on the supremum distance draws directly on the foundational principles established in algorithmic fairness literature, which emphasizes the importance of limiting the error experienced by the worst-off group. This perspective prioritizes ensuring good representation for every group rather than merely optimizing for average performance. The supremum distance naturally operationalizes this principle by measuring the maximum disparity across all groups, effectively placing an upper bound on the error that any group might experience.

It's worth noting that for finite group settings, different choices of distance measures ($L_1$ or $L_2$ distance) are typically within constant factors of one another, making the specific choice less critical. However, as we move to settings with infinite groups, these equivalences break down—distance measures such as $L_1$ become less informative, potentially obscuring significant disparities among individual groups.

While we selected the supremum distance for its robust guarantees from an algorithmic fairness perspective, exploring representation guarantees under alternative distance metrics is certainly an interesting direction for future research.

\noindent Finally, we are now able to rigorously define what it means to be $\alpha$-representative. 

\begin{definition}[$\alpha$-Representative Generator]
A randomized generator is $\alpha$-representative with respect to a countable collection of groups $\calA$ if for every stream of examples $x_1, x_2, \dots$ and for all $t \in \mathbb{N},$ we have that $||\groupdist{\gen(x_{1:t})} - \groupemp{t}||_{\infty} \leq \alpha.$
\end{definition}

On its own, $\alpha$-representativeness is trivial to achieve -- given any collection of groups $\calA$ and any finite sequence $x_1, \dots, x_t$, one can always output exactly the empirical distribution, $\emp{t}$. Thus, our goal will be to satisfy representation in addition to existing definitions of correctness for generation, as defined in Section~\ref{sec:gen-with-rep}.

\subsection{Representative Generation}\label{sec:gen-with-rep}
In this section, we introduce several notions of representative generatability for a tuple $(\calH, \calA)$, each varying in the amount of distinct examples that an $\alpha$-representative generator can observe before it needs to ``succeed." In our setting, a $\alpha$-representative generator ``succeeds" if it is eventually ``consistent."  Formally, given an $\alpha$-representative generator $\gen$, a hypothesis $h \in \calH$, and any stream $x_1, x_2, \dots \subseteq \operatorname{supp}(h)$, we say that $\gen$ is \emph{consistent} from time point $t \in \mathbb{N}$, if for all $s \geq t$:
\[\Pr_{\hat{x}_s \sim \gen(x_{1:s})}\left[\hat{x}_s \in  \supp{h} \setminus \{x_1, \dots, x_s\}\right] = 1.\]
\ifbool{arxiv}{}{We note that generators are allowed some initial inconsistent outputs, provided they eventually achieve consistency. In contrast, outputs must be representative of the data stream at \emph{every} timestep. Thus, our definition implicitly prioritizes group representation over correctness in generation during initial timesteps. While consistency cannot always be verified, it is possible to construct and verify a representative distribution at each time step. This makes it reasonable to require representation throughout, and less reasonable to favor potentially consistent but verifiably unrepresentative alternatives. However, understanding the tradeoffs between representation and consistency is an interesting direction of future research.

}
Given our definitions of consistency and representativeness, we are now ready to introduce the strongest form of representative generatability -- $\alpha$-representative uniform generatability. Roughly speaking, $\alpha$-representative uniform generatability requires that the amount of unique examples that $\gen$ needs to observe before being consistent should be uniform over all $h \in \calH$ and streams $x_1, x_2, \dots \subseteq \supp{h}.$






\begin{definition}[$\alpha$-Representative Uniform Generatability] \label{def:alphaunifgen} Let $\calH \subseteq \{0, 1\}^{\calX}$  be any hypothesis class satisfying the $\operatorname{UUS}$ property and $\calA = \{A_i\}_{i \in \mathbb{N}}$ be any collection of groups over $\calX$. Then, $(\calH, \calA)$ is $\alpha$-\emph{representatively uniformly generatable} if there exists an \underline{$\alpha$-representative} generator $\gen$ and $d^{\star} \in \N$, such that for every $h \in \calH$ and any sequence $x_1, x_2, \dots$ with $\{x_1, x_2, \dots\} \subseteq \supp{h}$, if there exists $t^{\star} \in \mathbb{N}$ where $|\{x_1, \dots, x_{t^{\star}}\}| = d^{\star}$, then $\gen$ is \underline{consistent} from $t^{\star}.$
\end{definition} 

A tuple $(\calH, \calA)$ is then representatively uniformly generatable if it is $\alpha$-representatively uniformly generatable for every $\alpha  \in (0, 1].$ 

\begin{definition}[Representative Uniform Generatability] \label{def:unifgen} Let $\calH \subseteq \{0, 1\}^{\calX}$  be any hypothesis class satisfying the $\operatorname{UUS}$ property and $\calA = \{A_i\}_{i \in \mathbb{N}}$ be any collection of groups over $\calX$. Then, $(\calH, \calA)$ is \emph{representatively uniformly generatable}, if $(\calH, \calA)$ is $\alpha$-representatively uniformly generatable for every $\alpha \in (0, 1].$
\end{definition}

Representative uniform generatability is a strong property as it requires the sample complexity (i.e. number of distinct examples before consistency is achieved) to be uniform over all choices of the adversary. We can weaken the definition by allowing the sample complexity to depend on the hypothesis, but still be uniform over all possible streams of examples the adversary might pick. This leads to our next notion of representative \emph{non-uniform} generatability. 

\begin{definition}[Representative Non-uniform Generatability] \label{def:nonunifgen} Let $\calH \subseteq \{0, 1\}^{\calX}$  be any hypothesis class satisfying the $\operatorname{UUS}$ property and $\calA = \{A_i\}_{i \in \mathbb{N}}$ be any countable collection of groups over $\calX$. Then, $(\calH, \calA)$ is \emph{representatively non-uniformly generatable}, if for every $\alpha > 0$ there exists an \underline{$\alpha$-representative} generator $\gen$, such that for every $h \in \calH$, there exists a  $d^{\star} \in \mathbb{N}$ such that for any sequence $x_1, x_2, \dots$ with $\{x_1, x_2, \dots\} \subseteq \supp{h}$, if there exists $t^{\star} \in \mathbb{N}$ where $|\{x_1, \dots, x_{t^{\star}}\}| = d^{\star}$, then $\gen$ is \underline{consistent} from  $t^{\star}.$
\end{definition}

Finally, we can weaken the requirements even further by allowing the sample complexity to depend on both the hypothesis and stream selected by the adversary. 

\begin{definition}[Representative Generation in the Limit]\label{def:gen-in-lim}
   Let $\calH \subseteq \{0, 1\}^{\calX}$  be any hypothesis class satisfying the $\operatorname{UUS}$ property and $\calA = \{A_i\}_{i \in \mathbb{N}}$ be any countable collection of groups over $\calX$. Then, $(\calH, \calA)$ is \emph{representatively generatable in the limit}, if for every $\alpha > 0$ there exists an \underline{$\alpha$-representative} generator $\gen$ such that for every $h \in \calH$ and any enumeration $x_1, x_2, ...$ of $\supp{h}$, there exists $t^{\star} \in \mathbb{N}$ such that $\gen$ is \underline{consistent} from $t^{\star}.$
\end{definition}

\noindent We remark that our notions of generatability are direct analogs of those from \ifbool{arxiv}{\cite}{\citet}{raman2024generation} with an additional representation requirement.

\section{Characterizing Representative Uniform and Non-uniform Generation}\label{sec:ug-nug-char}
In this section, we focus on characterizing representative uniform and non-uniform generatability. In light of the computational barriers for uniform and non-uniform generation established by \ifbool{arxiv}{\cite}{\citet}{charikar2024exploring}, our characterization is information-theoretic in nature. Moreover, we will only consider collections of groups $\calA$ which are countable partitions of $\calX.$ This is still a very general setting, capturing problems like animal image generation, where one considers groups based on class. We leave the characterization of representative uniform and non-uniform for overlapping collections of groups as future work. 

\subsection{Representative Uniform Generation}\label{sec:unif-gen}

Our starting point is representative uniform generation. In order to characterize which classes are representatively uniformly generatable, we extend the Closure dimension from \ifbool{arxiv}{\cite}{\citet}{raman2024generation} to account for the group-constraints induced by $\calA.$ To do so, we define a new scale-sensitive dimension, termed the Group Closure dimension, which accounts for the complexity induced by \emph{both} $\calH$ and $\calA.$ 

\begin{definition}[Group Closure dimension] \label{def:gem}  Let $\calH \subseteq \{0, 1\}^{\calX}$  be any hypothesis class satisfying the $\operatorname{UUS}$ property and $\calA = \{A_i\}_{i\in \mathbb{N}}$ be any countable partition of $\calX$. The \emph{Group Closure dimension} of $(\calH$, $\calA)$ at scale $\alpha > 0$, denoted $\mathsf{GC}_{\alpha}(\calH, \calA)$, is the largest natural number $d \in \mathbb{N}$ for which there exists \emph{distinct} $x_1, \dots, x_d \in \calX$ such that $\langle x_1, \dots, x_d \rangle_{\calH} \neq \bot$ and either\ifbool{arxiv}{
\begin{itemize}
\item[(1)] $\max_{i \in S} \groupemp{d}(i) > \alpha$,  or
\item[(2)] $\alpha |\mathbb{N}/S| < \sum_{i \in S} \groupemp{d}(i),$ 
\end{itemize}
}{ (1) $\max_{i \in S} \groupemp{d}(i) > \alpha$ or (2) $\alpha |\mathbb{N}/S| < \sum_{i \in S} \groupemp{d}(i),$ }
where $S := \{i \in \mathbb{N} : \langle x_1, \dots, x_d \rangle_{\calH} \cap A_i \setminus \{x_1, \dots, x_d\} = \emptyset\}.$ If this is true for arbitrarily large $d \in \mathbb{N}$, then we say that $\mathsf{GC}_{\alpha}(\calH, \calA) = \infty.$ On the other hand, if this is not true for $d = 1$, we say that $\mathsf{GC}_{\alpha}(\calH, \calA) = 0.$
\end{definition}

\begin{remark} Our definition of the Group Closure dimension is for countable groups $\calA = \{A_i\}_{i \in \mathbb{N}}$. One can define the Group Closure dimension for finite groups $\calA = \{A_i\}_{i \in [K]}$ by modifying the definition of $S$ and condition (2) in the following way.  Define $S := \{i \in [K] : \langle x_1, \dots, x_d \rangle_{\calH} \cap A_i \setminus \{x_1, \dots, x_d\} = \emptyset\}$ and then change (2) to $\alpha (K - |S|) < \sum_{i \in S} \groupemp{d}(i).$ We will use this modified definition when proving Corollary \ref{cor:finitegcug}. 
\end{remark}

At a high-level, for any given error tolerance $\alpha > 0$, one should interpret $\mathsf{GC}_{\alpha}(\calH, \calA)$  as the minimal number of distinct examples that a generator needs to see before it is guaranteed a winning strategy with respect to error level $\alpha$ (i.e. one that is both consistent and $\alpha$-representative). Indeed, as we will show in this section, $\mathsf{GC}_{\alpha}(\calH, \calA)$ provides a precise quantitative bound on the sample complexity of representative uniform generation for $(\calH, \calA).$ \ifbool{arxiv}{Before moving on, we highlight a few key differences between the Group Closure dimension and the Closure dimension. The first is that unlike the Closure dimension, the Group Closure dimension is a function of both $\calH$ and $\calA.$ The second is that unlike the Closure dimension, the Group Closure dimension is scale-sensitive and defined for every $\alpha > 0.$}{We highlight that unlike the Closure dimension, the Group Closure dimension depends on both $\calH$ and $\calA$ and is scale-sensitive and defined for every $\alpha > 0.$} Scale-sensitive combinatorial dimensions are not new to learning theory, and have also been defined to characterize learnability for regression problems \cite{bartlett1994fat, rakhlin2015online}. 


Our first result in this section shows that for every $\alpha > 0$, the finiteness of $\mathsf{GC}_{\alpha}(\calH, \calA)$ provides a qualitative characterization of  $\alpha$-group constrained generatability. 

\begin{theorem}[Characterization of $\alpha$-Representative Uniform Generatability] \label{thm:alphagrpconstunifgen} Let $\calX$ be countable, $\calH \subseteq \{0, 1\}^{\calX}$ be any class satisfying the \emph{UUS} property, and $\calA = \{A_i\}_{i \in \mathbb{N}}$ be any countable partition of $\calX$.  Then, $(\calH, \calA)$ is $\alpha$-representatively uniformly generatable if and only if $\mathsf{GC}_{\alpha}(\calH, \calA) < \infty$.
\end{theorem}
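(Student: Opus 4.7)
The plan is to prove both directions by analyzing what the $\alpha$-representativeness constraint forces on a consistent generator, with the Group Closure dimension precisely quantifying when the two requirements become jointly infeasible. Throughout, I will exploit the fact that the generator is a deterministic map $\gen:\calX^{\star}\to\Delta\calX$, so its output on $x_{1:t}$ must be consistent simultaneously for every $h\in\calH(x_{1:t})$ the adversary might have chosen; this forces $\supp{\gen(x_{1:t})}\subseteq \langle x_{1:t}\rangle_{\calH}\setminus\{x_1,\ldots,x_t\}$ whenever consistency is required.

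For the \textbf{necessity direction}, I would assume $(\calH,\calA)$ is $\alpha$-representatively uniformly generatable with sample complexity $d^{\star}$, and show by contradiction that no bad sequence of length $d\geq d^{\star}$ can exist. Given such distinct $x_1,\ldots,x_d$ with $\langle x_{1:d}\rangle_{\calH}\neq\bot$ and stuck set $S$, the adversary picks any $h\in\calH(x_{1:d})$ and presents $x_1,\ldots,x_d$. At time $t=d$ the generator must be both consistent and $\alpha$-representative, so the support constraint above forces $\groupdist{\gen(x_{1:d})}(i)=0$ for all $i\in S$. If condition (1) holds, some $i\in S$ has $\groupemp{d}(i)>\alpha$, immediately violating $\alpha$-representativeness. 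If condition (2) holds, summing the per-coordinate bound $\groupdist{\gen(x_{1:d})}(i)\leq\groupemp{d}(i)+\alpha$ over $i\in\mathbb{N}\setminus S$ gives $1 \leq \bigl(1-\sum_{i\in S}\groupemp{d}(i)\bigr)+\alpha|\mathbb{N}\setminus S|$, which rearranges to the negation of (2), a contradiction. Hence $\mathsf{GC}_\alpha(\calH,\calA)<d^{\star}<\infty$.

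For the \textbf{sufficiency direction}, I would assume $\mathsf{GC}_\alpha(\calH,\calA)=d-1<\infty$ and build a generator with sample complexity $d$. When the stream's unique examples $x_1^{\star},\ldots,x_m^{\star}$ satisfy $m<d$, simply output the empirical distribution $\emp{t}$, which is $0$-representative and requires no consistency. When $m\geq d$, the assumption says the sequence is \emph{not} bad: writing $S$ for the stuck set we have $\groupemp{m}(i)\leq\alpha$ for $i\in S$ and $\sum_{i\in S}\groupemp{m}(i)\leq\alpha|\mathbb{N}\setminus S|$. Pick one representative element $y_i\in\langle x_1^{\star},\ldots,x_m^{\star}\rangle_{\calH}\cap A_i\setminus\{x_1^{\star},\ldots,x_m^{\star}\}$ for each $i\in\mathbb{N}\setminus S$ (nonempty by definition of $S$), and assign $y_i$ mass $p_i\in[\max(0,\groupemp{m}(i)-\alpha),\min(1,\groupemp{m}(i)+\alpha)]$ with $\sum_i p_i=1$. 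Feasibility of these intervals summing to $1$ is exactly the content of the two good-conditions: the sum of lower bounds is at most $\sum_i\groupemp{m}(i)=1$, and the sum of upper bounds is at least $1$ (immediate if $|\mathbb{N}\setminus S|=\infty$ since some $U_i\geq\alpha>0$ sums diverge, and in the finite case follows from the negation of (2)). The resulting $\mu$ is consistent by construction and $\alpha$-representative because $\groupdist{\mu}(i)=p_i$ is within $\alpha$ of $\groupemp{m}(i)$ for $i\in\mathbb{N}\setminus S$, while on $S$ we have $\groupdist{\mu}(i)=0$ and $\groupemp{m}(i)\leq\alpha$.

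The \textbf{main obstacle} is the feasibility argument for $\mu$ in the sufficiency direction, particularly when $\calA$ is countably infinite: one must argue that the $\alpha$ per-group slack on the groups in $\mathbb{N}\setminus S$ can absorb the excess probability mass $\sum_{i\in S}\groupemp{m}(i)$ that must be redistributed from the stuck groups, which is precisely what the negation of condition (2) guarantees. Once this combinatorial-feasibility lemma is in hand, the rest of the proof reduces to the deterministic-adversary argument for necessity and a routine check of representativeness and consistency for sufficiency.
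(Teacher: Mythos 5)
Your proof is correct and follows essentially the same route as the paper. The necessity argument uses the same deterministic-generator observation (the output at time $d$ must be simultaneously consistent for every surviving $h\in\calH(x_{1:d})$, so its support lies in $\langle x_{1:d}\rangle_{\calH}\setminus\{x_1,\dots,x_d\}$ and hence places zero mass on every stuck group in $S$), and the sufficiency construction builds a representative distribution supported on the closure exactly as in Appendix~B; your box-constraint feasibility test $\sum_i L_i\le 1\le\sum_i U_i$ is a slightly slicker packaging of the paper's two-case redistribution argument (the case split on whether $\sum_{i\in S}\groupemp{t}(i)$ exceeds $\alpha$), but they carry the same content.

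One small gap in your sufficiency direction: once $m\ge d$ you immediately invoke the negated GCD conditions to read off properties of the stuck set $S$, but those conditions are only defined when $\langle x^\star_1,\dots,x^\star_m\rangle_{\calH}\neq\bot$. If the closure is $\bot$, $S$ is undefined and the ``not bad'' guarantee is vacuous, so your construction of $\mu$ does not apply. You need a separate branch for this case; the paper's generator simply falls back to any $\alpha$-approximation of $\groupemp{t}$ (e.g.\ $\emp{t}$ itself), which costs nothing since no $h\in\calH$ is consistent with the sample and therefore no consistency obligation can be triggered on such a prefix.
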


An immediate consequence of Theorem \ref{thm:alphagrpconstunifgen} is a characterization for representative uniform generatability. 

\begin{corollary} [Characterization of Representative Uniform Generatability] \label{cor:grpconstunifgen}  Let $\calX$ be countable, $\calH \subseteq \{0, 1\}^{\calX}$ be any class satisfying the \emph{UUS} property, and $\calA = \{A_i\}_{i \in \mathbb{N}}$ be any countable partition of $\calX$. Then, $(\calH, \calA)$ is representatively uniformly generatable if and only if $\mathsf{GC}_{\alpha}(\calH, \calA) < \infty$ for all $\alpha > 0.$
\end{corollary}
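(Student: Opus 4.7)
The plan is to derive this corollary as a direct consequence of Theorem~\ref{thm:alphagrpconstunifgen} combined with Definition~\ref{def:unifgen}, with one small quantifier-range observation to reconcile the two statements.

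For the forward direction, I would assume that $(\calH, \calA)$ is representatively uniformly generatable. By Definition~\ref{def:unifgen}, this says that $(\calH, \calA)$ is $\alpha$-representatively uniformly generatable for every $\alpha \in (0, 1]$. Applying Theorem~\ref{thm:alphagrpconstunifgen} at each such scale immediately yields $\mathsf{GC}_{\alpha}(\calH, \calA) < \infty$ for every $\alpha \in (0, 1]$. To upgrade this to ``for every $\alpha > 0$,'' I would note that $\mathsf{GC}_{\alpha}(\calH, \calA)$ is non-increasing in $\alpha$: both defining conditions become strictly harder as $\alpha$ grows (condition (1) demands $\max_{i \in S} \groupemp{d}(i) > \alpha$, and condition (2) demands $\alpha|\mathbb{N}/S| < \sum_{i \in S}\groupemp{d}(i)$, both of which weaken when $\alpha$ is replaced by a smaller value). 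Thus any witness sequence at scale $\alpha$ also witnesses dimension at scale $\alpha' \leq \alpha$, so $\mathsf{GC}_{\alpha}(\calH, \calA) \leq \mathsf{GC}_{\alpha'}(\calH, \calA)$ whenever $\alpha \geq \alpha'$. In particular, for $\alpha > 1$, $\mathsf{GC}_{\alpha}(\calH, \calA) \leq \mathsf{GC}_{1}(\calH, \calA) < \infty$.

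For the reverse direction, I would assume $\mathsf{GC}_{\alpha}(\calH, \calA) < \infty$ for all $\alpha > 0$. This includes all $\alpha \in (0, 1]$, and Theorem~\ref{thm:alphagrpconstunifgen} then gives that $(\calH, \calA)$ is $\alpha$-representatively uniformly generatable for every $\alpha \in (0, 1]$. This is precisely the requirement of Definition~\ref{def:unifgen}, so $(\calH, \calA)$ is representatively uniformly generatable.

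There is no serious obstacle here; the corollary is essentially the ``quantify over $\alpha$'' version of Theorem~\ref{thm:alphagrpconstunifgen}. The only mildly non-trivial point is the monotonicity remark needed to bridge the gap between the range $\alpha \in (0,1]$ appearing in Definition~\ref{def:unifgen} and the range $\alpha > 0$ appearing in the corollary's conclusion, and this reduces to inspecting conditions (1) and (2) in Definition~\ref{def:gem}.
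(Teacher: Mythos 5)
Your proof is correct and takes the same route the paper does: the paper states Corollary~\ref{cor:grpconstunifgen} as an immediate consequence of Theorem~\ref{thm:alphagrpconstunifgen} and Definition~\ref{def:unifgen} without further elaboration, and your argument is exactly that unpacking. Your monotonicity observation ($\mathsf{GC}_{\alpha}$ is non-increasing in $\alpha$, so finiteness over $(0,1]$ implies finiteness over all $\alpha>0$) is a correct and worthwhile bit of rigor to reconcile the quantifier ranges of Definition~\ref{def:unifgen} with the corollary's statement, even though the paper leaves it implicit.
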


Our proof of Theorem \ref{thm:alphagrpconstunifgen} is \emph{constructive}. In particular, to show that the finiteness of $\mathsf{GC}_{\alpha}(\calH, \calA)$ is necessary, we first assume that $\mathsf{GC}_{\alpha}(\calH, \calA) = \infty$. Then, for every generator $\gen$, we explicitly choose a hypothesis $h \in \calH$ and construct a valid stream of examples $x_1, x_2, \dots \subseteq \mathsf{supp}(h)$ such that $\gen$ eventually violates either consistency or $\alpha$-representativeness. In fact, if $\mathsf{GC}_{\alpha}(\calH, \calA) = d$, a simple adaption of our proof shows that for any generator $\gen$, there exists a hypothesis $h \in \calH$ and a valid stream of examples $x_1, x_2, \dots \subseteq \mathsf{supp}(h)$ such that $\gen$ violates either consistency or $\alpha$-representativeness after observing $d$ distinct examples. To prove that the finiteness of $\mathsf{GC}_{\alpha}(\calH, \calA)$ is sufficient, we construct an $\alpha$-representative generator $\gen$ which satisfies consistency after observing $\mathsf{GC}_{\alpha}(\calH, \calA) + 1$ distinct examples. Unlike uniform generation without representation, our generator $\gen$ for representative uniform generation computes closures at every time point $t \in \mathbb{N}.$ Combining both the necessary and sufficient directions shows that not only does the finiteness of $\mathsf{GC}_{\alpha}(\calH, \calA)$ characterize $\alpha$-group constrained uniform generation, but also that the optimal sample complexity is exactly $\Theta(\mathsf{GC}_{\alpha}(\calH, \calA)).$ We defer the full proof of Theorem \ref{thm:alphagrpconstunifgen} to Appendix \ref{app:alphagrpconstunifgen}.

Another important consequence of Theorem \ref{thm:alphagrpconstunifgen} is its implications on representative uniform generation for \emph{finite} $\calH$ and $\calA$. Our next result uses the Group Closure dimension and Theorem \ref{thm:alphagrpconstunifgen} to show that \emph{all} tuples $(\calH, \calA)$ where both $\calH$ and $\calA$ are finite are representative uniform generatable. \ifbool{arxiv}{}{The full proof can be found in Appendix~\ref{sec:finite-finite-ug-pf}}

\begin{corollary} [All Finite Classes and Finite Partitions are Representatively Uniformly Generatable]  \label{cor:finitegcug} Let $\calX$ be countable, $\calH \subseteq \{0, 1\}^{\calX}$ be any finite class satisfying the \emph{UUS} property, and $\calA = \{A_i\}_{i \in K}$ be any finite  partition of $\calX$. Then, $(\calH, \calA)$ is representatively uniformly generatable. 
\end{corollary}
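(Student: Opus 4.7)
The plan is to invoke Theorem~\ref{thm:alphagrpconstunifgen} together with the finite-$\calA$ variant of the Group Closure dimension from the Remark, which reduces the corollary to showing $\mathsf{GC}_{\alpha}(\calH, \calA) < \infty$ for every fixed $\alpha > 0$. The central observation is that, since $\calH$ is finite with $|\calH| = n$, the closure $\langle x_{1:d}\rangle_{\calH} = \bigcap_{h \in \calH(x_{1:d})} \supp{h}$ depends on $x_{1:d}$ only through the subset $\calH(x_{1:d}) \subseteq \calH$, so it takes at most $2^n$ distinct values as $(x_1, \ldots, x_d)$ varies. It suffices to show, for each such possible closure $C$, that every distinct sequence $x_1, \ldots, x_d$ with $\langle x_{1:d}\rangle_{\calH} = C$ that witnesses condition (1) or (2) at scale $\alpha$ has length bounded by a constant $B_C < \infty$; the dimension bound then follows by maximizing over the finitely many closures.

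To bound $B_C$, I would split on whether $|C|$ is finite. If $|C| < \infty$, then $\{x_1, \ldots, x_d\} \subseteq C$ immediately gives $d \leq |C|$, so $B_C \leq |C|$. The substantive case is $|C| = \infty$. Since $\calA = \{A_1, \ldots, A_K\}$ is a finite partition of $\calX$, at least one group $A_{i_{\star}}$ must satisfy $|C \cap A_{i_{\star}}| = \infty$, and hence cannot lie in the saturated set $S = \{i \in [K] : C \cap A_i \setminus \{x_1, \ldots, x_d\} = \emptyset\}$; this forces $|S| \leq K - 1$. Moreover, every $i \in S$ satisfies $C \cap A_i \subseteq \{x_1, \ldots, x_d\}$ and therefore $|C \cap A_i| < \infty$. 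This lets me define $M_C := \max\{|C \cap A_i| : i \in [K],\ |C \cap A_i| < \infty\}$, a maximum over at most $K$ finite numbers and thus itself finite.

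The final step is an arithmetic bound using the identity $\groupemp{d}(i) = |C \cap A_i|/d \leq M_C/d$ for $i \in S$ (which holds because every element of the stream lies in $C$ and saturation forces $C \cap A_i$ to equal the set of stream elements in $A_i$). Condition (1) forces $M_C/d > \alpha$, giving $d < M_C/\alpha$. Condition (2) gives $\alpha(K - |S|) < \sum_{i \in S} \groupemp{d}(i) \leq K M_C/d$, which combined with $K - |S| \geq 1$ yields $d < K M_C/\alpha$. Either way $B_C \leq K M_C/\alpha < \infty$, which upon maximizing over the finitely many closures $C$ bounds $\mathsf{GC}_{\alpha}(\calH, \calA)$.

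I expect the main subtlety to be handling the infinite-closure case, which requires simultaneously using (i) finiteness of $\calA$ to guarantee a ``live'' group $A_{i_{\star}}$ with $|C \cap A_{i_{\star}}| = \infty$ (forcing $|S| < K$), and (ii) finiteness of $\calA$ again to control $M_C$ as a maximum over finitely many finite numbers. Once these structural points are in place, the rest is a one-line algebraic argument; the one piece of care needed is verifying that for $i \in S$ the empirical mass $\groupemp{d}(i)$ equals $|C \cap A_i|/d$ exactly, so that it can be upper bounded by $M_C/d$.
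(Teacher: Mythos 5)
Your proposal is correct and follows essentially the same route as the paper's proof: both reduce via Theorem~\ref{thm:alphagrpconstunifgen} to bounding $\mathsf{GC}_{\alpha}(\calH,\calA)$, observe that finiteness of $\calH$ leaves only finitely many achievable closures (you parametrize by the closure $C$, the paper by the nonempty subset $V\subseteq\calH$ with $C=\bigcap_{h\in V}\supp{h}$, which is the same thing), split on whether the closure is finite, and in the infinite case bound the witness length by $K\cdot M_C/\alpha$ where $M_C$ is the maximum finite value $|C\cap A_i|$ ranges over (the paper's quantity $p$). The only cosmetic difference is that you argue the contrapositive (conditions hold $\Rightarrow d$ is small) while the paper argues forward ($d$ large $\Rightarrow$ conditions fail); the observation that $|C|=\infty$ forces some $A_{i_\star}$ with $|C\cap A_{i_\star}|=\infty$ and hence $|S|\le K-1$ is exactly the paper's "$|S|<K$" step.
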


\ifbool{arxiv}{
\begin{proof}[Proof of Corollary~\ref{cor:finitegcug}] Let $\calX$ be countable, $\calH \subseteq \{0, 1\}^{\calX}$ be any finite class satisfying the UUS property, and $\calA = \{A_i\}_{i \in K}$ be any finite partition of $\calX$. Suppose for the sake of contradiction that $\mathsf{GC}_{\alpha}(\calH, \calA) = \infty$ for some $\alpha > 0$. Let $\calF := \{V: V \subseteq \calH, V \neq \emptyset\}$ be the set of all non-empty subsets of $\calH.$ Since $\calH$ is finite, $\calF$ is also finite. We will assign a number to every $V \in \calF.$ In particular, for every $V \in \calF$, first define and compute: 

$$\langle \emptyset \rangle_{V} := \bigcap_{h \in V} \supp{h}.$$

Then, let $d_V \in \mathbb{N}$ be the largest \emph{finite} number for which there exists $x_1, \dots, x_{d_V} \in \langle \emptyset \rangle_{V}$ such that 

\begin{itemize}
\item[(1)] $\max_{i \in S} \groupemp{d_V}(i) > \alpha$  or
\item[(2)] $\alpha (K - |S|) < \sum_{i \in S} \groupemp{d_V}(i)$ 
\end{itemize}

\noindent where  $S := \{i \in [K] :  \langle \emptyset \rangle_V \setminus \{x_1, \dots, x_{d_V}\} \cap A_i = \emptyset\}.$ 

 We will use the fact that $|K| < \infty$ to prove that such a \emph {finite} $d_V$ exists for all $V \in \calF.$ To that end, fix some $V \in \calF.$ Without loss of generality, suppose that $|\langle \emptyset \rangle_{V}| = \infty$, as otherwise, the claim is trivially true (i.e if $|\langle \emptyset \rangle_{V}| < \infty$, it must be the case that $d_V \leq |\langle \emptyset \rangle_{V}| < \infty$).  It suffices to show that there exists a $d \in \mathbb{N}$ such that for every $d^{\prime} \geq d$ and sequence $x_1, \dots, x_{d^{\prime}} \in \langle \emptyset \rangle_V$, we have that

\begin{itemize}
\item[(1)] $\max_{i \in S} \groupemp{{d^{\prime}}}(i) \leq \alpha$  and
\item[(2)] $\sum_{i \in S} \groupemp{{d^{\prime}}}(i) \leq \alpha \, (K - |S|)$ 
\end{itemize}

\noindent where $S = \{i \in [K]: A_i \cap \langle \emptyset \rangle_V \setminus\{x_1, \dots, x_{d^{\prime}}\} = \emptyset\}.$  To do so,  we need some more notation. First, separate the $K$ groups into two sets. The set $R_1 = \{i \in [K] : |A_i \cap \langle \emptyset \rangle_V| = \infty\}$ contains those groups whose intersection with $\langle \emptyset \rangle_V$ is unbounded in size. The set  $R_2 = \{i \in [K] : |A_i \cap \langle \emptyset \rangle_V| < \infty\}$ are those groups whose intersection with $\langle \emptyset \rangle_V$ is finite. Note that for any $x_1, \dots, x_d$, the groups in $R_1$ will never appear in $S$, and so we have that $S \subseteq R_2.$ Now, define 

$$p := \max_{j \in R_2} |A_j \cap \langle \emptyset \rangle_V|.$$

\noindent and observe that $p < \infty$ because $|R_2| \leq K < \infty.$

We are now ready to show the existence of such a $d \in \mathbb{N}$. In particular, pick $d = \frac{K p}{\alpha}$ and consider any sequence $x_1, \dots, x_{{d^{\prime}}} \in \langle \emptyset \rangle_V$ for ${d^{\prime}} \geq d.$ In the worst-case, $x_1,\dots,x_{d^{\prime}}$ contains $A_i \cap \langle \emptyset \rangle_V$ for all $i \in R_2$. However, for any given $i \in R_2$, at most $p$ of the elements from $x_1, \dots, x_{d^{\prime}}$ can be from $A_i$. Accordingly, we have that 

$$\max_{i \in S} \groupemp{{d^{\prime}}}(i) \leq \frac{\alpha}{K} \leq \alpha.$$

\noindent Moreover, observe that 

$$\sum_{i \in S} \groupemp{{d^{\prime}}}(i) \leq \frac{\alpha |S|}{K} \leq \alpha \leq \alpha (K - |S|),$$

 where the last inequality is true because the groups form a partition of $\calX$ and so $|S| < K.$ Accordingly, $d_V \leq \frac{Kp}{\alpha} < \infty.$ Since $V \in \calF$ was chosen arbitrarily, this is true for all such $V \in \calF.$ Now, we complete the proof by showing a contradiction. Define

$$d^{\star} = \max \{d_V: V \in \calF\}.$$

\noindent Again, $d^{\star} < \infty$ because $|\calF| < \infty$ and $d_V < \infty$ for all $V \in \calF.$ Because $\mathsf{GC}_{\alpha}(\calH, \calA) = \infty$, we know that there exists a $t \geq d^{\star} + 1$, and a distinct sequence of examples $x_1, \dots, x_t$ such that 

\begin{itemize}
\item[(1)] $\max_{i \in S} \groupemp{t}(i) > \alpha$  or
\item[(2)] $\alpha (K - |S|) < \sum_{i \in S} \groupemp{t}(i)$ 
\end{itemize}

\noindent where  $S := \{i \in \mathbb{N} : \langle x_1, \dots, x_t \rangle_{\calH} \cap A_i \setminus \{x_1, \dots, x_t\} = \emptyset\}.$ Take $V^{\star} := \calH(x_1, \dots, x_t)$ and note that $\langle x_1, \dots, x_t \rangle_{\calH} = \langle \emptyset \rangle_{V^{\star}}.$ Thus, we have shown that $d_{V^{\star}} \geq t \geq d^{\star} + 1$, which contradicts the fact that $d^{\star} =  \max \{d_V: V \in \calF\}$ since $V^{\star} \in \calF.$ The proof is complete, showing that $\mathsf{GC}_{\alpha}(\calH, \calA) < \infty.$
\end{proof}
}{}

As we will show in Lemma \ref{lem:finite-supp-imposs}, the finiteness of $\calA$ is in a weak sense \emph{necessary} for Corollary \ref{cor:finitegcug} to hold -- if $\calA$ is allowed to be countably infinite in size, there exists a hypothesis class of size one that is not even generatable in the limit with representation! Nevertheless, even when $\calA$ is finite, representative uniform generation is still harder than (unrepresentative) uniform generation as evidenced by the following Corollary\ifbool{arxiv}{}{, which we prove in Appendix~\ref{sec:rep-unif-not-unif-pf}}.  

\begin{corollary} [Representative Uniform Generation $\neq$ Uniform Generation] \label{cor:repunignequniggen} Let $\calX$ be countable. There exists a countable, \emph{UUS} class $\calH \subseteq \{0, 1\}^{\calX}$ and a finite partition $\calA = \{A_i\}_{i \in [K]}$ of $\calX$ such that $\calH$ is uniformly generatable but $(\calH, \calA)$ is \emph{not} representatively uniformly generatable.
\end{corollary}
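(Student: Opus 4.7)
The plan is to exhibit an explicit pair $(\calH, \calA)$ with $|\calA| = 2$ where $\calH$ is uniformly generatable in the standard sense but the tuple fails to be representatively uniformly generatable, invoking Theorem~\ref{thm:alphagrpconstunifgen} to certify the failure. Concretely, let $\calX = \{a_i : i \in \N\} \sqcup \{b_i : i \in \N\}$, take the partition $A_1 := \{a_i\}_{i \in \N}$ and $A_2 := \{b_i\}_{i \in \N}$, and define $\calH := \{h_n : n \in \N\}$ with $\supp{h_n} = \{a_1, \ldots, a_n\} \cup \{b_i : i \in \N\}$. Each hypothesis has infinite $b$-tail, so $\calH$ satisfies UUS.

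For the positive side, I will exhibit the trivial generator that, on any input stream $x_1, \ldots, x_t$, outputs an arbitrary $b_i \notin \{x_1, \ldots, x_t\}$. Since every $b_i$ lies in $\supp{h_n}$ for every $n \in \N$, this output is consistent with every hypothesis in $\calH$ starting from the very first timestep, which certifies (sample-complexity-$1$) uniform generatability of $\calH$.

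For the negative side, I will show $\mathsf{GC}_{1/2}(\calH, \calA) = \infty$, which by Theorem~\ref{thm:alphagrpconstunifgen} defeats $\tfrac12$-representative uniform generatability and hence defeats representative uniform generatability in the sense of Definition~\ref{def:unifgen}. For any target $d \in \N$, the witness is the sequence $x_1 = a_1, \ldots, x_d = a_d$. The consistent hypothesis set is exactly $\{h_n : n \geq d\}$, and since the sets $\{a_1, \ldots, a_n\}$ are nested in $n$, the closure is $\langle x_1, \ldots, x_d \rangle_{\calH} = \{a_1, \ldots, a_d\} \cup \{b_i : i \in \N\} \neq \bot$. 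The set of ``exhausted'' groups is then $S = \{1\}$: no unshown $a$-element remains in the closure, while infinitely many $b$-elements do. Since $\groupemp{d}(1) = 1 > 1/2$, condition~(1) of Definition~\ref{def:gem} is satisfied, giving $\mathsf{GC}_{1/2}(\calH, \calA) \geq d$ for every $d$.

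I do not expect a serious obstacle, as both verifications are immediate from the construction. The conceptual point worth emphasizing is that the common infinite $b$-tail simultaneously trivializes unconstrained uniform generation (always giving a safe consistent output) and yet is powerless against representation: the adversary drives the empirical mass entirely onto the $a$-side, where every consistent extension agrees and hence the closure supplies no fresh $a$-mass for the generator to match.
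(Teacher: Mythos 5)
Your proof is correct, and it takes a genuinely different route from the paper. The paper's proof is a black-box reduction: it starts from an arbitrary class $\calH'$ on $\N$ that is \emph{not} uniformly generatable (citing an example elsewhere), pads every hypothesis with the common infinite tail $\mathbb{Z}_{\leq 0}$ to make $\calH$ trivially uniformly generatable, and then shows by contradiction that a representative uniform generator for $(\calH, \calA)$ would yield a uniform generator for $\calH'$. Your argument instead picks one fully explicit class (the ``initial segments plus infinite $b$-tail'' family), verifies uniform generatability directly, and then shows $\mathsf{GC}_{1/2}(\calH, \calA) = \infty$ by a concrete witness, invoking Theorem~\ref{thm:alphagrpconstunifgen} to close the argument. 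Both approaches have merits: the paper's reduction is slightly more general (it converts \emph{any} non-uniformly-generatable class into a counterexample, so the failure mode is portable), while yours is self-contained, avoids the external dependency on a preexisting hard class, and directly exhibits the combinatorial mechanism --- the adversary exhausts the $a$-group inside the closure while the empirical mass sits entirely on that group, leaving no room for an $\alpha$-representative consistent distribution. Your verification of the closure ($\bigcap_{n \geq d}(\{a_1,\ldots,a_n\}\cup\{b_i\}) = \{a_1,\ldots,a_d\}\cup\{b_i\}_{i\in\N}$), of $S=\{1\}$, and of condition~(1) at scale $\alpha = 1/2$ are all correct.
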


\ifbool{arxiv}{\begin{proof}[Proof of Corollary~\ref{cor:repunignequniggen}] Let $\calX = \mathbb{Z}$ be the set of integers. Let $\calH^{\prime} \subseteq \{0, 1\}^{\mathbb{N}}$ be any class that is not uniformly generatable (for example see Lemma 3.9 in \ifbool{arxiv}{\cite}{\citet}{raman2024generation}). Define a new class $\calH \subseteq \{0, 1\}^{\calX}$ such that $\calH := \{x \mapsto \textbf{1}\{x \in \supp{h^{\prime}} \cup \mathbb{Z}_{\leq 0}\} : h^{\prime} \in \calH^{\prime}\}$ and let $\calA := \{\mathbb{N}, \mathbb{Z}_{\leq 0}\}.$ Note that $\calA$ is finite partition of $\calX$ of size 2.

First, observe that $\calH$ is trivially uniformly generatable since $\Bigl | \bigcap_{h \in \calH} \supp{h} \Bigl | \geq \Bigl|\mathbb{Z}_{\leq 0}\Bigl| = \infty.$ We now show that $(\calH, \calA)$ is not representatively uniformly generatable. For the sake of contradiction, suppose that $(\calH, \calA)$ was representatively uniformly generatable. Then, for every $\alpha \in (0, 1]$, there exists an $\alpha$-representative generator $\gen$ and a number $d^{\star} \in \mathbb{N}$ such that after observing $d^{\star}$ distinct examples, the output of $\gen$  is consistent (see Definition \ref{def:alphaunifgen}). Let $\gen$ be such a generator for any $\alpha < 1$. We will use $\gen$ in a blackbox manner to construct a uniform generator for $\calH'$. 

Consider the following generator $\gen^{\prime}$ for $\calH^{\prime}.$ On input $x_1, \dots, x_t \in \mathbb{N}$, $\gen^{\prime}$ passes $x_1, \dots, x_t$ to $\gen$ and receives an $\alpha$-representative distribution $\mu_t.$ $\gen^{\prime}$ plays any $x \in \supp{\mu_t} \cap \mathbb{N} \setminus \{x_1, \dots, x_t\}$ if one exists. Otherwise, $\gen^{\prime}$ plays any $x \in \calX.$ We claim that $\gen^{\prime}$ perfectly generates from $\calH^{\prime}$ after observing $d^{\star}$ number of distinct examples. To see why, let $h^{\prime} \in \calH^{\prime}$ and $x_1, \dots, x_t \subset \supp{h^{\prime}}$ be any sequence such that $|\{x_1, \dots, x_t\}| \geq d^{\star}.$ Let $h: \calX \rightarrow \{0, 1\}$ be defined as $h(x) := \textbf{1}\{x \in \supp{h^{\prime}} \cup \mathbb{Z}_{\leq 0}\}$. Then, $h \in \calH$, $\{x_1, \dots, x_t\} \subseteq \supp{h}$, and therefore by definition of $d^{\star}$ and $\gen$, we have that 

\begin{itemize}
    \item[(1)] $\Pr_{\hat{x}_t \sim \mu_t}\left[\hat{x}_t \in  \supp{h} \setminus \{x_1, \dots, x_t\}\right] = 1$
and
    \item [(2)]$||\groupdist{\mu_t} - \groupemp{t}||_{\infty} \leq \alpha$.
\end{itemize}

Because $x_{1:t} \subset \mathbb{N}$ and $\alpha < 1$, in order to satisfy conditions (1) and (2), there must exist an $\hat{x}_t \in \supp{\mu_t} \cap \mathbb{N} \setminus \{x_1, \dots, x_t\},$ and hence $\gen^{\prime}$, by playing this $\hat{x}_t$, perfectly generates on round $t.$ Since $t \in \mathbb{N}$ and $x_1, \dots, x_t$ were chosen arbitrarily, we have that $\gen^{\prime}$ is a uniform generator for $\calH^{\prime}$ which contradicts the fact that $\calH^{\prime}$ was chosen to not be uniformly generatable.
\end{proof}}{}

In fact, Corollary \ref{cor:repunignequniggen} shows something stronger -- there are trivially uniformly generatable classes which are \emph{not} representatively uniformly generatable with just two groups. This brittleness of generatability when forced to satisfy both consistency and representation is not unique to our notion of diversity, but also shown by existing work on generation with breadth \cite{kalavasis2024limits, charikar2024exploring, kalavasis2024characterizations}.

\subsection{Representative Non-uniform Generation}\label{sec:non-unif-gen}

We now proceed to give a characterization of representative \emph{non-uniform} generation.  Recall that for non-uniform generation, we allow the sample complexity (i.e., the number of distinct examples needed before consistency) of the generator to depend on the hypothesis chosen by the adversary, but not the stream. Similar to the characterization of non-uniform generatability from \ifbool{arxiv}{\cite}{\citet}{raman2024generation} without representation constraints, our main result in this section provides a characterization of representative non-uniform generation in terms of representative uniform generation. 

\begin{theorem}[Characterization of Representative Non-uniform Generatability] \label{thm:grpconstnonunifgen} Let $\calX$ be countable, $\calH \subseteq \{0, 1\}^\calX$ be any class satisfying the \emph{UUS} property, and $\calA = \{A_i\}_{i \in \mathbb{N}}$ be any countable partition of $\calX$. Then, $(\calH, \calA)$ is representatively non-uniformly generatable if and only if for every $\alpha > 0$, there exists a non-decreasing sequence of classes $\calH_1 \subseteq \calH_2 \subseteq \cdots$ such that $\calH = \bigcup_{i = 1}^{\infty} \calH_i$ and $(\calH_i, \calA)$ is $\alpha$-representative uniformly generatable  $\forall \, i \in \mathbb{N}.$
\end{theorem}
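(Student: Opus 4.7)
The plan is to prove both directions of the characterization. The forward direction ($\Rightarrow$) is essentially a relabeling, while the reverse direction ($\Leftarrow$) requires combining the component uniform generators into a single non-uniform generator.

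For ($\Rightarrow$), I would fix $\alpha > 0$, let $\gen$ be the $\alpha$-representative non-uniform generator guaranteed by Definition~\ref{def:nonunifgen} with per-hypothesis sample complexity $d^{\star}(h)$, and define $\calH_i := \{h \in \calH : d^{\star}(h) \leq i\}$. This yields a non-decreasing chain whose union is $\calH$. The same $\gen$ then witnesses $\alpha$-representative uniform generatability of each $(\calH_i, \calA)$ with uniform sample complexity $i$: for any $h \in \calH_i$ and any stream from $\supp{h}$ reaching $|\{x_{1:t^{\star}}\}| = i$, the number of distinct examples must earlier have passed through $d^{\star}(h) \leq i$ at some $t' \leq t^{\star}$, so consistency of $\gen$ from $t'$ onward (via Definition~\ref{def:nonunifgen}) entails consistency from $t^{\star}$.

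For ($\Leftarrow$), fix $\alpha > 0$ and let $\gen_i$ be the $\alpha$-representative uniform generator for $(\calH_i, \calA)$ with uniform sample complexity $d_i$. Without loss of generality $d_1 \leq d_2 \leq \cdots$, since $\gen_i$ remains valid with any inflated threshold $\max_{j \leq i} d_j$. I would define the combined generator by setting, on input $x_{1:t}$,
\[
i_t := \min\bigl(|\{x_{1:t}\}|,\; \max\{i \in \N : d_i \leq |\{x_{1:t}\}|\}\bigr),
\]
outputting $\gen_{i_t}(x_{1:t})$ when $i_t \geq 1$ and otherwise outputting the empirical distribution $\emp{t}$, which is $0$-representative and hence trivially $\alpha$-representative. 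Since $|\{x_{1:t}\}|$ is non-decreasing in $t$, so is $i_t$, and $\alpha$-representativeness of the combined $\gen$ is immediate from that of each $\gen_i$ and of $\emp{t}$. For consistency, fix $h \in \calH$, let $i^{\star} := \min\{i : h \in \calH_i\}$ (well-defined since $\bigcup_i \calH_i = \calH$), and set $d^{\star}(h) := \max(d_{i^{\star}},\, i^{\star})$. Whenever $|\{x_{1:t^{\star}}\}| = d^{\star}(h)$, then for every $s \geq t^{\star}$ we have $|\{x_{1:s}\}| \geq d^{\star}(h)$, which by monotonicity of $d_i$ forces both $\max\{i : d_i \leq |\{x_{1:s}\}|\} \geq i^{\star}$ and $|\{x_{1:s}\}| \geq i^{\star}$, so $i_s \geq i^{\star}$. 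Thus $h \in \calH_{i^{\star}} \subseteq \calH_{i_s}$ and $d_{i_s} \leq |\{x_{1:s}\}|$, and the $\alpha$-representative uniform generatability of $(\calH_{i_s}, \calA)$ guarantees $\gen_{i_s}(x_{1:s})$ is supported on $\supp{h} \setminus \{x_{1:s}\}$.

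The main obstacle is the reverse direction, where a single generator must simultaneously be $\alpha$-representative at every timestep and become consistent for every $h \in \calH$. Choosing $i_t$ too small would use a $\gen_i$ whose consistency guarantees do not cover $h$, while choosing $i_t$ too large would violate the precondition $|\{x_{1:t}\}| \geq d_{i_t}$ needed to invoke $\gen_{i_t}$'s consistency. The paired min--max construction resolves this tension by tying the index to the number of distinct examples seen rather than the elapsed time $t$, ensuring both that $i_t$ eventually exceeds any fixed threshold $i^{\star}$ and that the sample-complexity precondition of the selected $\gen_{i_t}$ is always met.
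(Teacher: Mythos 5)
Your proof is correct and follows essentially the same approach as the paper's: the forward direction defines $\calH_i$ by per-hypothesis sample complexity exactly as the paper does, and the reverse direction combines the uniform generators $\gen_i$ by selecting an index tied to the number of distinct examples seen. The only cosmetic differences are in the index formula (the paper caps $i_t$ by the round number $t$ and unions with $\{1\}$, whereas you cap by $|\{x_{1:t}\}|$ and fall back to the empirical distribution) and in how the non-decreasing-threshold WLOG is justified, but both arguments are sound and make the same points.
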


The proof of Theorem \ref{thm:grpconstnonunifgen} is similar to the proof of Theorem 3.5 in \ifbool{arxiv}{\cite}{\citet}{raman2024generation}, hence we defer the details to Appendix \ref{app:grpconstnonunifgen}. We instantiate Theorem \ref{thm:grpconstnonunifgen}  with Corollary \ref{cor:countgrpconstnonunifgen} to show that every tuple $(\calH, \calA)$ where $\calH$ is countably infinite and $\calA$ is finite is representatively non-uniformly generatable. \ifbool{arxiv}{}{The proof of Corollary \ref{cor:countgrpconstnonunifgen} is in Appendix \ref{sec:countgrpconstnonunifgen}. Like for representative uniform generation, the finiteness of $\calA$ is, in a weak sense, necessary for Corollary \ref{cor:countgrpconstnonunifgen} to hold.}

\begin{corollary} [All Countable Classes and Finite Partitions are Representatively Non-uniformly Generatable] \label{cor:countgrpconstnonunifgen}Let $\calX$ be countable, $\calH \subseteq \{0, 1\}^\calX$ be any countably infinite class satisfying the \emph{UUS} property, and $\calA = \{A_i\}_{i \in K}$ be any finite partition of $\calX$. Then, $(\calH, \calA)$ is representatively non-uniformly generatable and hence, also representatively generatable in the limit. 
\end{corollary}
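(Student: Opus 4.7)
The plan is to invoke Theorem~\ref{thm:grpconstnonunifgen} and reduce the claim to applying Corollary~\ref{cor:finitegcug} to a natural filtration of $\calH$. Since $\calH$ is countably infinite, enumerate it as $\calH = \{h_1, h_2, \ldots\}$ and define $\calH_i := \{h_1, \ldots, h_i\}$ for each $i \in \N$. Each $\calH_i$ is a finite subclass of $\calH$, so it inherits the UUS property, and by construction $\calH_1 \subseteq \calH_2 \subseteq \cdots$ with $\calH = \bigcup_{i=1}^{\infty} \calH_i$.

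Next, I would invoke Corollary~\ref{cor:finitegcug} on each pair $(\calH_i, \calA)$: both $\calH_i$ and $\calA$ are finite, so $(\calH_i, \calA)$ is representatively uniformly generatable. In particular, $(\calH_i, \calA)$ is $\alpha$-representatively uniformly generatable for every $\alpha > 0$. Observe that the same filtration $\{\calH_i\}_{i \in \N}$ witnesses this property simultaneously for every $\alpha$, which is strictly stronger than what Theorem~\ref{thm:grpconstnonunifgen} requires (a possibly $\alpha$-dependent decomposition).

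Applying the sufficiency direction of Theorem~\ref{thm:grpconstnonunifgen} then immediately yields that $(\calH, \calA)$ is representatively non-uniformly generatable. The ``hence also representatively generatable in the limit'' clause follows tautologically: representative non-uniform generatability allows the sample complexity to depend on $h$ but not the stream, which is a special case of Definition~\ref{def:gen-in-lim}, where the sample complexity may depend on both.

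There is no substantive obstacle here; the result is a clean instantiation of the two preceding results, with the countability of $\calH$ supplying a finite exhaustion and the finiteness of $\calA$ supplying the hypothesis for Corollary~\ref{cor:finitegcug}. The real work was already packaged into establishing that $\mathsf{GC}_{\alpha}(\calH_i, \calA) < \infty$ for finite $\calH_i$ and finite $\calA$, which is precisely the content of Corollary~\ref{cor:finitegcug}.
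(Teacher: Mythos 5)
Your proposal matches the paper's proof essentially step for step: enumerate $\calH$, take the finite prefix filtration $\calH_i = \{h_1,\ldots,h_i\}$, invoke Corollary~\ref{cor:finitegcug} to get $\alpha$-representative uniform generatability of each $(\calH_i,\calA)$, and apply the sufficiency direction of Theorem~\ref{thm:grpconstnonunifgen}. The only addition is your remark that the same filtration works simultaneously for all $\alpha$, which is correct but not needed beyond what the theorem's $\forall\alpha$ quantifier already handles.
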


\ifbool{arxiv}{\begin{proof} Let $\calX$ be countable, $\calH \subseteq \{0, 1\}^\calX$ be any countably infinite class satisfying the UUS property, and $\calA = \{A_i\}_{i \in K}$ be any finite partition of $\calX$. Fix some $\alpha > 0$. By Theorem \ref{thm:grpconstnonunifgen}, it suffices to show that there exists a non-decreasing sequence of classes $\calH_1 \subseteq \calH_2 \subseteq \cdots$ such that $\calH = \bigcup_{i = 1}^{\infty} \calH_i$ and $(\calH_i, \calA)$ is $\alpha$-representative uniformly generatable  $\forall \, i \in \mathbb{N}.$ Let $h_1, h_2, \dots$ be any enumeration of $\calH.$ Define $\calH_i = \{h_1, \dots, h_i\}$ for all $i \in \mathbb{N}.$ Observe that $\calH_1 \subseteq \calH_2 \subseteq \cdots$ and that $\calH = \bigcup_{i = 1}^{\infty} \calH_i.$ Finally, since $|\calH_i| = i < \infty$ and $|\calA| < \infty$, by Corollary \ref{cor:finitegcug}, we have that $(\calH_i, \calA)$ is $\alpha$-representative uniformly generatable. Since representative non-uniform generatability implies representative generatability in the limit, our proof is complete. 
\end{proof}}{}

\ifbool{arxiv}{Like for representative uniform generation, the finiteness of $\calA$ is, in a weak sense, necessary for Corollary \ref{cor:countgrpconstnonunifgen} to hold.}{}

\section{Representative Generation in the Limit}\label{sec:gen-in-lim}

\ifbool{arxiv}{We finally consider the weakest definition of successful generation: generation in the limit. As defined in Definition~\ref{def:gen-in-lim}, a generator that satisfies representation in the limit must output a distribution at each timestep that is representative of the data stream seen thus far, and after some finite point in time, all outputs must be consistent with the true language. 

It is clear by inspecting the definitions that representative uniform generatability of a pair of classes implies representative non-uniform generatability, which implies representative generation in the limit. Thus, all of the positive results discussed so far in the settings of uniform and non-uniform representative generatability (Section~\ref{sec:ug-nug-char}) can be transferred over to representative generatability in the limit. In particular, due to Corollary~\ref{cor:countgrpconstnonunifgen}, we know that any countably infinite $\calH$ can be generated in the limit with representation with respect to any finite partition $\calA$ of $\calX$. 

In this section, we show that by relaxing our guarantees to require only representative generatability in the limit, we can expand the set of feasible $\calA$'s in two important ways. First, we can handle potentially \emph{overlapping} groups, rather than a disjoint partition, and we can also handle \emph{countably infinite} sets of groups, under an assumption about how these groups interact with hypotheses in $\calH$, which we term the ``finite support assumption,'' which we define below. We note that we still assume that $\calX \subseteq \bigcup_{i \in \N}A_i$. }{We finally consider the weakest definition of successful generation: generation in the limit. As per Definition~\ref{def:gen-in-lim}, this requires a generator to output a distribution representative of the data stream at each timestep, and after some finite point in time, all outputs must be consistent with the true language. We note that as the weakest notion of generation, all positive results from Section~\ref{sec:ug-nug-char} for uniform and non-uniform representative generatability apply to representative generatability in the limit. Notably, Corollary~\ref{cor:countgrpconstnonunifgen} implies that any countably infinite $\calH$ can be generated in the limit with representation for any finite partition $\calA$ of $\calX$.

In this section, we prove that relaxing to representative generatability in the limit expands the set of feasible $\calA$'s in two ways: (1) allowing \emph{overlapping} groups instead of disjoint partitions, and (2) permitting \emph{countably infinite} sets of groups under a finite support assumption (defined below). We maintain that $\calX \subseteq \bigcup_{i \in \N}A_i$. }

\begin{definition}[Finite Support Size]\label{def:finite-support-size}
    For any hypothesis $h: \calX \rightarrow \{0, 1\}$ and collection of possibly overlapping groups $\calA$, we define $h$'s \emph{finite support size} with respect to $\calA$ as 
    \[f_{h, \calA} = \sum_{\substack{S \subseteq \calA,\\ |\bigcap_{A \in S} A \cap \supp{h}| < \infty}} \Bigl |\bigcap_{A \in S} A \cap \supp{h} \Bigl|.\] 
\end{definition}

In other words, the total size of all arbitrary intersections with a subset of groups in $\calA$ and the support of $h$. Note that in the case of disjoint groups, this quantity simplifies to the number of individuals in $\supp{h}$ that are members of groups that have finite intersection with $\supp{h}$. While any finite collection of groups $\calA$ will always satisfy the finite support assumption, this is not the case for countably infinite sets of groups. A simple example of a collection of groups that does not satisfy the assumption is any infinite collection of groups where the size of every group is finite, such as the collection of all singletons $\calA = \{\{x\}: x \in \calX\}$, or the collection defined in the proof of Lemma \ref{lem:finite-supp-imposs}.

\begin{definition}[Hypothesis Class with Finite Support]\label{def:finite-support-class}
    We say that a hypothesis class $\calH$ has finite support with respect to a collection of groups $\calA$ if for every $h \in \calH$, $f_{h, \calA} < \infty$. 
\end{definition}

While ideally we could show that all classes of countable groups can be generated in the limit with representation without any additional assumptions, the following lemma shows that this finite support assumption is crucially necessary for generatability with representation. \ifbool{arxiv}{In particular, we present a finite $\calH$ that contains only a single hypothesis, along with a countable partition $\calA$ of $\calX$ that does not satisfy the finite support assumption with respect to $\calH$, and show that $(\calH, \calA)$ cannot satisfy representative generatability in the limit for any $0 < \alpha < 1$.}{}

\begin{lemma}[Necessity of Finite Support] \label{lem:finite-supp-imposs}
    There exists a countably infinite partition of $\calX$, $\calA$, and a finite hypothesis class $\calH$ with $|\calH| = 1$ that is not generatable in the limit with representation for any $0 < \alpha < 1$.    
\end{lemma}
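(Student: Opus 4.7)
The plan is to build an explicit counterexample that defeats every generator uniformly across $\alpha<1$. Take $\calX=\mathbb{N}$ and let $\calH=\{h\}$ with $h\equiv 1$, so $\supp{h}=\mathbb{N}$ (satisfying UUS). Partition $\mathbb{N}$ into consecutive finite blocks $A_1,A_2,\ldots$ whose sizes grow fast enough that each block dominates all previous blocks combined. Concretely, define $|A_1|=1$ and $|A_k|=k\sum_{i<k}|A_i|$ for $k\geq 2$; then $\sum_{i\leq k}|A_i|=(k+1)\sum_{i<k}|A_i|$, so the ratio $|A_k|/\sum_{i\leq k}|A_i|=k/(k+1)\to 1$. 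Each $A_k$ is finite and the blocks partition $\mathbb{N}$, so $\calA=\{A_k\}_{k\in\mathbb{N}}$ fails the finite support assumption since $f_{h,\calA}=\sum_k|A_k|=\infty$.

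Fix any $\alpha\in(0,1)$ and any candidate $\alpha$-representative generator $\gen$. I consider the oblivious adversarial enumeration that exhausts the blocks in order: list all elements of $A_1$, then all of $A_2$, then all of $A_3$, and so on. This is a valid enumeration of $\supp{h}$ since $\bigcup_k A_k=\mathbb{N}$. Let $t_k$ be the time step just after the final element of $A_k$ is revealed, so that the unique elements seen by time $t_k$ are exactly $\bigcup_{i\leq k}A_i$ and $d_{t_k}=\sum_{i\leq k}|A_i|$.

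The key observation is that at time $t_k$, the empirical group probability for $A_k$ equals $\groupemp{t_k}(k)=|A_k|/d_{t_k}=k/(k+1)$. Meanwhile, any consistent $\mu_{t_k}$ must be supported on the unseen set $\bigcup_{i>k}A_i$, which is disjoint from $A_k$, forcing $\groupdist{\mu_{t_k}}(k)=0$. Consequently $\|\groupdist{\mu_{t_k}}-\groupemp{t_k}\|_\infty\geq k/(k+1)$. Since $k/(k+1)$ exceeds any fixed $\alpha<1$ for all sufficiently large $k$, and $\gen$ must be $\alpha$-representative at every step, $\gen$ is forced to be inconsistent at each such $t_k$. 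As there are infinitely many such $k$, no finite $t^\star$ can make $\gen$ consistent from $t^\star$ onward, so $(\calH,\calA)$ is not $\alpha$-representatively generatable in the limit.

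The only substantive step is choosing the block sizes so that $|A_k|/\sum_{i\leq k}|A_i|\to 1$, which ensures the same partition works uniformly for every $\alpha<1$; once this is in place, everything else follows from the direct comparison at the times $t_k$ where a fully-seen large block collides with the consistency requirement. I do not anticipate any hidden obstacle beyond bookkeeping, and the construction cleanly illustrates why the finite support assumption is indispensable: it is precisely the phenomenon of a large group being completely exhausted that leaves the generator no room to simultaneously honor representation and refrain from repeating seen elements.
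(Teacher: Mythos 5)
Your construction is correct, and the core mechanism is the same as the paper's: partition $\calX$ into consecutive finite blocks that grow so quickly that each completed block dominates the prefix seen so far, then feed the enumeration block-by-block so that, at the moment a block $A_k$ is exhausted, $\groupemp{t_k}(k)$ is large while any consistent distribution must place zero mass on $A_k$. Where you differ is in how the block sizes are chosen, and this leads to a genuine strengthening. The paper fixes $\alpha$ first and sets $|A_i| = \bigl(\tfrac{1}{1-\alpha}\bigr)^i$, so the partition $\calA$ depends on $\alpha$; the obstruction $\groupemp{t_i}(i) > \alpha$ holds at \emph{every} block but only for that particular $\alpha$. You instead choose $|A_k| = k\sum_{i<k}|A_i|$, which makes the critical ratio $|A_k|/\sum_{i\le k}|A_i| = k/(k+1) \to 1$, so that the \emph{same} partition defeats every $\alpha < 1$ (by taking $k$ large enough that $k/(k+1) > \alpha$, of which there are still infinitely many). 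This matches the quantifier order in the lemma statement more faithfully: you exhibit one $\calA$ that simultaneously fails for all $\alpha \in (0,1)$, whereas the paper's argument, read literally, produces a different $\alpha$-tailored partition for each $\alpha$. You also correctly verify that your $\calA$ violates the finite-support assumption ($f_{h,\calA} = \sum_k |A_k| = \infty$ since the blocks are finite, pairwise disjoint, and cover $\mathbb{N}$). The proof is complete and, if anything, slightly sharper than the one in the paper.
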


\ifbool{arxiv}{\begin{proof}[Proof of Lemma~\ref{lem:finite-supp-imposs}]
    Select any $0 < \alpha < 1$, and let $\calH = \{h\}$, where $\supp{h} = \calX$ (note that this $\calH$ is trivially generatable in the limit if we do not require group representation). Consider any arbitrary enumeration of $\supp{h}$, $u_1, u_2, ...$, with each $u_i$ distinct.  Define a countable partition of $\calX$, $\calA = \{A_1, A_2, ...\}$ such that for each $i \in \N$, 
    \[A_i = \left\{u_j : \sum_{k = 0}^{i-1} \left(\frac{1}{1 - \alpha}\right)^k \leq j < \sum_{k = 0}^i \left(\frac{1}{1 - \alpha}\right)^k\right\}.\]

    Clearly this is a valid partition as all groups are disjoint, and because $\frac{1}{1 - \alpha} > 1$ by definition, for every $u_j$, there exists an $i \in \N$ such that $\sum_{k = 0}^{i-1} \left(\frac{1}{1 - \alpha}\right)^k \leq j < \sum_{k = 0}^i \left(\frac{1}{1 - \alpha}\right)^k$. Note that for each $i \in \N$, 
    \[|A_i| = |A_i \cap \supp{h}| = \sum_{k = 0}^i \left(\frac{1}{1 - \alpha}\right)^k - \sum_{k = 0}^{i-1} \left(\frac{1}{1 - \alpha}\right)^k = \left(\frac{1}{1 - \alpha}\right)^i.\] 

    Choose $u_1, u_2, ...$ be the enumeration of $\supp{h}$ selected by the adversary. Consider any $i \in \N$, and note that at step $t = \sum_{j = 0}^i \left(\frac{1}{1 - \alpha}\right)^j - 1$, all elements of $A_i$ have been exhausted, and $\supp{h} \cap A_i \setminus \{x_1, ..., x_{t}\} = \emptyset$. Moreover, $A_i$ takes up more than an $\alpha$-fraction of the distinct elements seen thus far, as $\groupemp{t}(i)$ can be lower-bounded as
    \begin{align*}
        \frac{|A_i \cap x_{1:t}|}{|x_{1:t}|} &= \frac{\left(\frac{1}{1 - \alpha}\right)^i}{\sum_{j = 0}^i \left(\frac{1}{1 - \alpha}\right)^j - 1}\\
        &= \frac{\left(\frac{1}{1 - \alpha}\right)^i}{\frac{\frac{1}{1 - \alpha}(1 - (\frac{1}{1 - \alpha})^i)}{1 - \frac{1}{1 - \alpha}}} \tag*{formula for sum of geometric series}\\
        &= \frac{\left(\frac{1}{1 - \alpha}\right)^{i+1} - \left(\frac{1}{1 - \alpha}\right)^i}{\left(\frac{1}{1 - \alpha}\right)^{i+1} - \frac{1}{1 - \alpha}}\\
        &> \frac{\left(\frac{1}{1 - \alpha}\right) - 1}{\left(\frac{1}{1 - \alpha}\right)}\\
        &= \alpha
    \end{align*}

    This implies that in order to be representative, the generator must output a distribution $\mu_{t}$ that puts positive mass on some $x \in A_i$, otherwise 
    \[|\groupdist{\mu_{t}}(i) - \groupemp{t}(i)| \geq \groupemp{t}(i) > \alpha.\]

    However, because all elements of $A_i$ have already been exhausted in the sequence, the generator must either violate consistency by putting mass on an $x \in A_i$ that has already been seen in the sequence, or violate representation by putting no mass on $A_i$ despite appearing in greater than an $\alpha$-fraction of the sequence. Thus, the generator cannot satisfy both consistency and representation simultaneously at this timestep (in fact, it cannot generate correctly from \emph{any} of the groups that have been seen so far).

    This happens for each $A_i$, and thus any generator must make infinitely many mistakes at timesteps $\sum_{j = 0}^i \left(\frac{1}{1 - \alpha}\right)^j - 1$ for every $i \in \N$ on this sequence, and thus it cannot satisfy the requirement of representative generation in the limit.  
\end{proof}}{The proof of Lemma~\ref{lem:finite-supp-imposs} can be found in Appendix~\ref{sec:finite-supp-imposs-pf}.} We contrast this impossibility result with a strong positive result: any countable $\calH$ and countable, possibly overlapping $\calA$ satisfying the finite support assumption can be generated in the limit with representation. 

\begin{theorem}\label{thm:countable-countable-gil}
    Let $\calX$ be countable, $\calH \subseteq \{0, 1\}^{\calX}$ be any countable class satisfying the UUS property, and $\calA = \{A_i\}_{i \in \N}$ a countable collection of possibly overlapping subsets of $\calX$ such that $\calH$ has finite support with respect to $\calA$. Then, $(\calH, \calA)$ is representative generatable in the limit. 
\end{theorem}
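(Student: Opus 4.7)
The plan is to proceed via a two-level approach: an outer loop that selects a candidate hypothesis $h_{j_t} \in \calH$ using an enumeration and consistency checks (in the style of the standard generation-in-the-limit algorithm for countable classes), and an inner feasibility subroutine that builds an $\alpha$-representative distribution on the chosen candidate's support minus the seen examples. The finite support assumption drives the feasibility of the inner subroutine.

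\textbf{Feasibility subroutine.} Given a hypothesis $h$ with $f_{h, \calA} < \infty$ and a sample $x_{1:t}$ with $d$ distinct elements, I would build an $\alpha$-representative distribution on $\supp{h}$ minus the seen set as follows. Partition $\supp{h}$ into atoms indexed by subsets $S \subseteq \calA$ (one atom per group-membership signature), and split these into \emph{infinite atoms} and \emph{finite atoms} according to whether $|\bigcap_{A \in S} A \cap \supp{h}|$ is infinite or not. The total empirical mass carried by finite atoms is bounded by $f_{h, \calA}/d$, which vanishes as $d \to \infty$; mass targeted at infinite atoms can always be placed on unseen elements. Solving the resulting linear feasibility problem (with constraints $|\mu(A_i) - \emp{t}(A_i)| \le \alpha$ for each $i \in \N$) yields a valid $\mu_t$ whenever $d$ exceeds a threshold depending on $f_{h, \calA}$ and $\alpha$.

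\textbf{Candidate selection and algorithm.} Enumerate $\calH = \{h_1, h_2, \ldots\}$. At each step $t$, select a candidate $h_{j_t}$ using a rule analogous to the standard countable-class generation-in-the-limit algorithm, designed so that eventually $j_t$ stabilizes at some $j^\star$ with $\supp{h_{j^\star}} \subseteq \supp{h^\star}$ (hence any output supported on $\supp{h_{j^\star}}$ minus seen examples is consistent). Run the feasibility subroutine on $h_{j_t}$ and $x_{1:t}$; if successful, output the resulting $\mu_t$, otherwise output the empirical distribution $\emp{t}$ (which is trivially $\alpha$-representative). Representation is thus maintained at every step by construction.

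\textbf{Analysis and main obstacle.} For consistency in the limit, once the candidate selection stabilizes at an admissible $j^\star$ and $d$ exceeds the feasibility threshold, the algorithm outputs distributions that are both $\alpha$-representative and supported in $\supp{h^\star}$ minus the seen set. The main obstacle is implementing the candidate-selection rule to ensure support containment $\supp{h_{j^\star}} \subseteq \supp{h^\star}$: a naive ``smallest consistent index'' rule does not suffice when $\calH$ contains hypotheses with incomparable supports, so a more sophisticated rule (preferring minimal supports among consistent hypotheses, combined with dovetailing over the enumeration of $\calH$) must be used. The finite support assumption plays a dual role here: it makes the feasibility subroutine tractable for each candidate $h$ by bounding the vanishing ``lost mass'' from finite atoms, and, by Lemma~\ref{lem:finite-supp-imposs}, it is known to be necessary for the theorem's conclusion to hold at all.
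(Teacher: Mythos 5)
Your feasibility subroutine is the right idea and essentially matches the paper's Lemma~\ref{lem:finite-feasibility}: partitioning $\supp{h}$ into signature atoms, bounding the empirical mass on finite atoms by $f_{h,\calA}/d$, and relocating the remaining mass to unseen elements of infinite atoms. The paper even uses the same $O(f_{h,\calA}/\alpha)$ threshold.

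The genuine gap is in how candidate selection interacts with feasibility. Your algorithm first selects $h_{j_t}$ using a consistency-based rule designed to ensure $\supp{h_{j_t}} \subseteq \supp{h^\star}$ eventually, and \emph{then} runs the feasibility check, falling back to $\emp{t}$ if it fails. This can fail permanently: the candidate your rule stabilizes on may have strictly smaller support than $h^\star$, and that smaller support may never be $\alpha$-feasible for the given stream. Concretely, with $\calA = \{A_1, A_2\}$ and a stream that is eventually half $A_1$ and half $A_2$, a consistent candidate $h_{j^\star}$ whose support lies entirely in $A_1$ passes your consistency-based selection but fails the feasibility check at every step, so your generator outputs $\emp{t}$ forever and never becomes consistent. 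Lemma~\ref{lem:finite-feasibility} only guarantees eventual feasibility of the \emph{true} $h^\star$, not of an arbitrary sub-hypothesis selected by a support-containment rule. The paper resolves this by filtering on criticality and feasibility \emph{simultaneously}: it defines $F_t$ as the set of hypotheses that are both critical (in the sense of \cite{kleinberg2024language}) and $\alpha$-feasible at time $t$, and outputs from the largest-index $h_n \in F_t$. Since $h^\star$ is eventually in both sets (Lemmas~\ref{lem:critical-finite-time} and \ref{lem:finite-feasibility}), $F_t$ is eventually nonempty, and by the definition of criticality the chosen $h_n$ automatically satisfies $\supp{h_n} \subseteq \supp{h^\star}$ while being feasible by construction. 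A related issue is that you flag the support-containment rule as your "main obstacle" and leave it unresolved; the paper's answer is precisely the critical-hypothesis mechanism from \cite{kleinberg2024language}, whose correctness (Lemma~\ref{lem:critical-finite-time}) you would need to invoke or re-prove, and which does \emph{not} require the selected index to stabilize, only that the selected hypothesis always has contained support after some finite time.
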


Before providing the proof in full, we provide a sketch of the main ideas. In \ifbool{arxiv}{\cite}{\citet}{kleinberg2024language}, the generator they provide to generate in the limit uses the notion of a \emph{critical hypothesis}:

\begin{definition}[Critical Hypothesis]
    Given an enumeration of $\calH$, $h_1, h_2, ...$, we say that a hypothesis $h_n$ is \emph{critical at step $t$} if $n \leq t$, $h_n$ is consistent with the samples seen thus far, i.e. $\{x_1, ..., x_t\} \subseteq \supp{h_n}$, and for every $i < n$ with $\{x_1, ..., x_t\} \subseteq \supp{h_i}$, we have $\supp{h_n} \subseteq \supp{h_i}$. 
\end{definition}

Given any countable $\calH$, the generator constructed by \ifbool{arxiv}{\cite}{\citet}{kleinberg2024language} works as follows: at time step $t$ the generator finds the critical hypothesis $h_j$ with the largest index $j \leq t$, and outputs an arbitrary unseen element from that hypothesis's support. The correctness of their algorithm follows from a key property of the true language:
\begin{lemma}[\ifbool{arxiv}{\cite}{\citet}{kleinberg2024language}, Claim 4.3]\label{lem:critical-finite-time}
    Given any countable $\calH = \{h_1, h_2, ...\}$ and enumeration $x_1,x_2,...$ of some $h \in \calH$, there exists a timepoint $t \in \N$ such that $h$ is critical for all timesteps $s \geq t$.
\end{lemma}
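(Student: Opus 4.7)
The plan is to leverage the fact that $h$ occupies a fixed, finite position in the enumeration of $\calH$. Let $n$ be the index such that $h = h_n$. Conditions (1) and (2) in the definition of ``critical'' are easy: (1) holds automatically for every $s \geq n$, and (2) holds for every $s \in \N$ because the stream is an enumeration of $\supp{h} = \supp{h_n}$. So the entire content of the lemma is to show that condition (3), the dominance of $\supp{h_n}$ among consistent earlier hypotheses, eventually holds and then persists.

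The key observation is that there are only finitely many indices $i < n$ to worry about, and for each such $i$ one of two things is true: either $\supp{h_n} \subseteq \supp{h_i}$ already (in which case condition (3) is satisfied for this $i$ at every step), or there exists some witness $y_i \in \supp{h_n} \setminus \supp{h_i}$. In the latter case, since the stream enumerates $\supp{h_n}$, the element $y_i$ must be played at some finite time $t_i$, after which $h_i$ is no longer consistent with the prefix and hence drops out of the antecedent of condition (3). I would collect all these finitely many witness times and define $t^\star := \max\{n\} \cup \{t_i : i < n,\ \supp{h_n} \not\subseteq \supp{h_i}\}$.

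The verification for $s \geq t^\star$ is then a case check over each $i < n$: if $\supp{h_n} \subseteq \supp{h_i}$, condition (3) is satisfied directly; otherwise, $\{x_1,\dots,x_s\} \ni y_i \notin \supp{h_i}$, so the hypothesis of condition (3) is false for this $i$ and there is nothing to check. Combined with the trivial verification of (1) and (2), this shows $h_n$ is critical at every step $s \geq t^\star$.

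The main (and only) subtlety is ensuring the max defining $t^\star$ is finite, which rests on two facts used simultaneously: the index $n$ is fixed so the quantification over $i < n$ ranges over a finite set, and for each ``bad'' $i$ the witness $y_i$ appears at a finite time in the enumeration because $\supp{h_n}$ is fully enumerated. Neither of these uses any structure of $\calA$ or representativeness; this is a pure statement about consistency under Kleinberg et al.'s enumeration-based setup.
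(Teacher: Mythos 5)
Your proof is correct and is the standard argument for this claim, which the paper cites from Kleinberg et al.\ (2024), Claim 4.3, rather than reproving. Fixing $h = h_n$, bounding the problem to the finitely many indices $i < n$, and using the enumeration of $\supp{h_n}$ to eliminate each inconsistent competitor at a finite time is exactly the intended reasoning.
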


By the definition of a critical language, if $h$ is critical, then the critical hypothesis $h_j$ with the largest index at step $t$ must satisfy $\supp{h_j} \subseteq \supp{h}$. Thus, any point output by the generator after step $t$ must be consistent, as it comes from a subset of the true language. 

While sufficient for generating in the limit, this last line of reasoning exposes an obstacle in satisfying group representation: while $h_j$ is guaranteed to be a subset of the true hypothesis's support, $\supp{h_j}$ could consist of only elements from a single group, even if the true hypothesis's support and the data stream thus far are more diverse. Thus, we will need to ignore some critical hypotheses that do not allow the possibility of representative generation. We define a \emph{feasible hypothesis} to be an $h$ that admits a distribution over its unseen elements that approximates the group proportions in the empirical distribution:

\begin{definition}[Feasible Hypothesis]
    Given a hypothesis $h_i \in \calH$, we say that a $h_i$ is \emph{$\alpha$-feasible} at step $t$ if there exists a distribution $\mu$ over unseen data points in $\supp{h_i} \setminus \{x_1, ..., x_t\}$ such that $\|\groupdist{\mu} - \groupemp{t}\|_{\infty} \leq \alpha$.
\end{definition}

Note that even the true hypothesis $h$ may not be $\alpha$-feasible at a given timestep. However, like criticality, we can show that there exists a timestep $d \in \N$ such that for all $s \geq d$, $h$ must be feasible:

\begin{lemma}\label{lem:finite-feasibility}
    Consider any countable, \emph{UUS}, $\calH \subseteq \{0, 1\}^{\calX}$ and countable collection of possibly overlapping subsets $\calA = \{A_i\}_{i \in \N}$ and assume $\calH$ has finite support with respect to $\calA$. Let $x_1, x_2, ...$ be an enumeration of $\supp{h}$ for some $h \in \calH$. Then, for any $\alpha > 0$, there exists a $d \in \N$ such that for all $s \geq d$, $h$ is $\alpha$-feasible at timestep $s$. 
\end{lemma}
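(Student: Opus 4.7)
The plan is to leverage the finite support assumption to identify a finite set of ``exceptional'' elements in $\supp{h}$ and argue that, once all such elements have been enumerated, we can construct a representative distribution on the remaining unseen elements by matching each seen regular element to an unseen one with a compatible group profile.

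First, I would define the exceptional set $E \subseteq \supp{h}$ as the set of $x \in \supp{h}$ for which there exists some $S \subseteq \calA$ with $x \in \bigcap_{A \in S} A$ and $|\bigcap_{A \in S} A \cap \supp{h}| < \infty$. The finite support assumption gives $|E| \leq f_{h, \calA} < \infty$, because each $x \in E$ witnesses at least one nonempty finite-intersection term in the sum defining $f_{h, \calA}$. Two useful observations follow: (i) every group $A_i$ with $|A_i \cap \supp{h}| < \infty$ is contained in $E$, by taking $S = \{A_i\}$; and (ii) every $x \notin E$ is \emph{regular} in the sense that $|\bigcap_{A \in G(x)} A \cap \supp{h}| = \infty$, where $G(x) := \{A \in \calA : x \in A\}$.

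Next, I would choose a threshold $d$ large enough that every element of $E$ has appeared in $x_1, \ldots, x_d$, and moreover $|E|/m_s \leq \alpha/2$ for all $s \geq d$, where $m_s$ denotes the number of unique elements in the first $s$ samples. For $s \geq d$, this immediately handles the finite-intersection groups: any such $A_i \subseteq E$ has $A_i \cap (\supp{h} \setminus \{x_1, \ldots, x_s\}) = \emptyset$ and $\groupemp{s}(i) \leq |E|/m_s \leq \alpha/2$, so assigning zero mass to $A_i$ automatically keeps the discrepancy within $\alpha$. The remaining work is to construct $\mu$ supported on the unseen regular elements whose group proportions on the infinite-intersection groups approximately match those of the empirical distribution.

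For this construction, I would exploit regularity: each regular seen element $x^*_j$ has an infinite ``upper cone'' $\bigcap_{A \in G(x^*_j)} A \cap \supp{h}$, which is only finitely depleted at time $s$, so one can assign $x^*_j$ a distinct unseen representative $y_j$ with $G(y_j) \supseteq G(x^*_j)$ and take $\mu$ to be uniform over the $y_j$'s. A Hall-style matching or flow argument, refined by the finite-support structure, should allow us to spread the over-counts on groups $A_i \in G(y_j) \setminus G(x^*_j)$ evenly enough that each per-group discrepancy stays within $\alpha/2$. The main obstacle is this balancing step: a naive matching can over-count aggressively on a single ``extra'' group, so the construction must coordinate the choice of $y_j$'s across all seen elements simultaneously to avoid concentration in any one group. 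This is precisely where the finite-support assumption must do its work, by constraining the collection of ``profile directions'' that need to be balanced concurrently so that a global matching attaining the $\alpha$ bound exists.
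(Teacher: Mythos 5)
Your overall architecture matches the paper's proof closely: you isolate an exceptional set, bound its size by $f_{h,\calA}$, wait until that set is a small fraction of the unique elements, zero it out, and then try to transfer a representative distribution onto unseen regular elements. Note also that your $E$, though defined via ``some $S$ with finite intersection,'' coincides exactly with the set of $x$ whose \emph{full} group profile $G(x)$ has finite intersection with $\supp{h}$: if $S \subseteq G(x)$ has $\bigl|\bigcap_{A \in S} A \cap \supp{h}\bigr| < \infty$, then the smaller set $\bigcap_{A \in G(x)} A \cap \supp{h}$ is finite too. So your $E$ is the paper's $\{x : v(x) \in V\}$, and Steps~1--2 are the same as in the paper.

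The gap is in your Step~3, and it is a real one. You replace each seen regular $x^*_j$ with an unseen $y_j$ satisfying only $G(y_j) \supseteq G(x^*_j)$, and you correctly observe that the extra groups in $G(y_j)\setminus G(x^*_j)$ can be systematically over-represented: there is no a priori reason the ``extra'' memberships spread out, and a generic Hall-type existence argument does not obviously deliver a matching with per-group slack $\alpha/2$. You flag this yourself (``the main obstacle is this balancing step'') and defer it to an unspecified ``Hall-style matching or flow argument refined by the finite-support structure''; as written this is a hope, not a proof, and it is exactly the step where all the hard work of the lemma lies. To close it you would need a concrete argument that the finite support assumption controls the ``profile directions'' that can cause over-concentration, which you do not supply.

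For comparison, the paper avoids the balancing problem entirely by claiming that for each regular seen $x$ one can find an unseen $x'$ with the \emph{exact same} group vector, $v(x') = v(x)$, which would preserve every group proportion perfectly. The justification the paper gives --- that the superset pool $\bigcap_{i: v(x)_i = 1} A_i \cap \supp{h}$ is infinite --- shows only that superset-profile substitutes exist, not exact-profile ones (the exact-profile fiber $\{x': v(x') = v(x)\}$ can be finite even when that intersection is infinite). So your instinct that superset substitution forces a balancing issue is correct and worth having noticed, but your proposal still leaves that issue open rather than resolving it; a complete proof needs either a justification for exact-profile substitutes or an explicit construction controlling the over-counts.
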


\ifbool{arxiv}{\begin{proof}[Proof of Lemma~\ref{lem:finite-feasibility}]
    We introduce the notion of a group vector $v(x) \in \{0, 1\}^{\N}\setminus \{0^{\N}\}$, which given $x \in \calX$, indicates an $x$'s group membership in all groups in $\calA$, with $v(x)_i = \mathbf{1}[x \in A_i]$.

    Recall that $f_{h, \calA}$ denotes the finite support size of $h$ with respect to $\calA$ (Definition~\ref{def:finite-support-size}), which by assumption satisfies $f_{h, \calA} < \infty$. We can equivalently write the definition of $f_{h, \calA}$ in terms of group membership vectors, i.e. 
    \[f_{h, \calA} = \sum_{\substack{v \in \{0, 1\}^{\N}\setminus \{0^{\N}\},\\ |\bigcap_{i \in \N, v_i = 1} A_i \cap \supp{h}| < \infty}} \Bigl |\bigcap_{i \in \N, v_i = 1} A_i \cap \supp{h} \Bigl|.\] 

    Let $V \subseteq \{0, 1\}^{\N}\setminus \{0^{\N}\}$ be the subset of all group membership vectors with finite support, i.e. all $v \in \{0, 1\}^{\N}\setminus \{0^{\N}\}$ such that 
    \[\left|\bigcap_{i \in \N, v_i = 1} A_i \cap \supp{h}\right| < \infty.\]

    The remainder of the proof follows in three key steps:
    \begin{enumerate}
        \item We first show that after enough timesteps, the proportion of the data sequence taken up by unique elements with group membership lying in $V$ must shrink to a very small quantity and stay below that amount. In particular, less than $\alpha/2$.
        \item Next, we show that due to how little mass is placed on these vectors, it's possible to construct a distribution that $\alpha$-approximates the empirical group probabilities, but places no mass on any $x$ with $v(x) \in V$.
        \item It follows that $h$ must be feasible at this point, because all other group vectors have infinite support when intersected with $\supp{h}$.
    \end{enumerate}

    \paragraph{Step 1: $V$ takes up a small proportion of of the empirical distribution.} Consider the timestep $t$ at which point the sequence has included  $d^* = \frac{2f_{h, \calA}}{\alpha}$ unique elements. For any $s \geq t$, the proportion of elements in the empirical distribution with $v(x) \in V$ is upper bounded by

    \[\frac{f_{h, \calA}}{d^*} = \alpha/2.\]

    Thus, for all $s \geq t$, the total proportion of unique elements from $V$ in the sequence must be at most $\alpha/2$, as desired. 

    \paragraph{Step 2: Constructing an $\alpha$-approximate distribution that ignores $V$.} We now show that for all $s \geq t$, because we are guaranteed that the total proportion of unique datapoints from $V$ is at most $\alpha/2$, we can construct a distribution whose group probabilities $\alpha$-approximate $\groupemp{s}$ and places no mass on any $x$ with $v(x) \in V$. For now we will ignore exactly what elements the distribution uses, and just care about their group membership vectors. We construct a distribution $\pi_s$ over these elements as follows:
    \[\pi_s(x) \propto \begin{cases}
        0 & v(x) \in V \text{ or } x \not \in x_{1:s}\\
        \emp{s}(x) & \text{ otherwise}
    \end{cases}.\]
    Clearly by definition $\pi_s$ places no mass on any $x$ with $v(x) \in V$. It remains to show that this is actually a valid approximation, i.e $\|\groupdist{\pi_s} - \groupemp{s}\|_{\infty} \leq \alpha$. 

    To this end, consider any $A_i \in \calA$. We rewrite the proportions of $A_i$ appearing in $\pi_s$ and the empirical distribution:

    \begin{align*}
        &|\groupdist{\pi_s}(i) - \groupemp{s}(i)| \\
        &\leq \Pr_{x \sim \emp{s}}[x \in A_i, v(x) \in V] + \frac{\Pr_{x \sim \emp{s}}[v(x) \in V]}{1 - \Pr_{x \sim \emp{s}}[v(x) \in V]}\Pr_{x \sim \emp{s}}[x \in A_i, v(x) \not\in V]\\
        &\leq \Pr_{x \sim \emp{s}}[x \in A_i, v(x) \in V] + \frac{\Pr_{x \sim \emp{s}}[v(x) \in V]}{1 - \Pr_{x \sim \emp{s}}[v(x) \in V]}\Pr_{x \sim \emp{s}}[v(x) \not\in V]\\
        &\leq 2\Pr_{x \sim \emp{s}}[v(x) \in V] \\
        &\leq \alpha \tag*{guarantee from Step 1}
    \end{align*}
    Thus, because this holds for any $A_i$, we conclude that $\groupdist{\pi_s}$ $\alpha$-approximates $\groupemp{s}$.

    \paragraph{Step 3: Constructing a feasible distribution.} $\pi_s$ passes all group representation requirements, but is not quite what we want, because $\supp{\pi_s}$ is supported on previously seen points in $x_{1:s}$, whereas we want a distribution supported on $\supp{h} \setminus x_{1:s}$. Note that it suffices if for every $x \in \supp{\pi_s}$ we can find an $x' \in \supp{h} \setminus x_{1:s}$ such that $v(x) = v(x')$. Then, the distribution $\mu_s$ defined as $\mu_s(x') = \pi_s(x)$ would exactly match the group vector proportions of $\pi_s$, and thus also satisfy $\|\groupdist{\mu_s} - \groupemp{s}\|_{\infty} \leq \alpha$ as well as $\supp{\mu_s} \subseteq \supp{h} \setminus x_{1:s}$. 

    This is in fact easy to do, as note that by construction, for every $x \in \supp{\pi_s}$, we must have $v(x) \not \in V$, and thus 
    \[\left|\bigcap_{i \in \N, v(x)_i = 1} A_i \cap \supp{h}\right| = \infty.\]
    This means that even after removing $x_{1:s}$, there are an infinite number of $x'$ that we can choose from to replace $x$ for each $x \in \supp{\pi_s}$. 

    Thus, putting all these steps together, we conclude that at the finite timestep $t \in \N$ after we have seen $2f_{h, \calA}/\alpha$ unique elements, for every $s \geq t$ we can find a distribution $\mu_s$ with $\supp{\mu_s} \subseteq \supp{h} \setminus x_{1:s}$ and
    $\|\groupdist{\mu_s} - \groupemp{s}\|_{\infty} \leq \alpha$. Thus, $h$ is $\alpha$-feasible for all $s \geq t$, completing the proof. 
\end{proof}
}{We prove this lemma in Appendix~\ref{sec:finite-feasibility-pf}.} By definition, the feasibility of a language guarantees that there exists a distribution $\mu$ over unseen elements that satisfies the $\alpha$-representative requirement. Thus, we can tweak our algorithm to only consider the critical \emph{and} feasible hypothesis $h_j$ with the largest index $j \leq t$. Combining lemmas~\ref{lem:critical-finite-time} and \ref{lem:finite-feasibility}, we can guarantee that for any $\alpha > 0$ there exists some finite point $t^* = \max\{d, t\}$ such that for all $s \geq t^*$, the true hypothesis $h$ is both critical and $\alpha$-feasible. Thus, for all $s \geq t^*$ at least one such feasible and critical language exists, and by the same reasoning as before, we must have $\supp{h_j} \subseteq \supp{h}$, and so outputting an $\alpha$-representative $\mu$ from the right-most $\alpha$-feasible and critical language guarantees both consistency and representation after $t^*$. Thus, our generator satisfies representative generation in the limit. 


With the building blocks of Lemmas~\ref{lem:critical-finite-time} and \ref{lem:finite-feasibility} in place, the proof of Theorem~\ref{thm:countable-countable-gil} follows as described in the proof sketch. The formal proof can be found in Appendix~\ref{sec:existential-gen-in-limit-pf}.

\subsection{Barriers to Achieving Representative Generation in the Limit with only Membership Queries}
Thus far, we have considered representative generatability only in an information-theoretic sense, without regard for the amount of computation required by our generators. However, \ifbool{arxiv}{\cite}{\citet}{kleinberg2024language} provide an interesting positive result: any countable $\calH$ can be generated in the limit with a generator that only requires a finite number of membership queries of the form ``$x \in \supp{h}$?'' for any $h \in \calH$ at each timestep. \ifbool{arxiv}{In contrast to this positive result, \ifbool{arxiv}{\cite}{\citet}{charikar2024exploring} show that the ability to generate using only membership queries is unique to generation in the limit. In the case of the slightly stronger notion of non-uniform generation, they show that no algorithm can be used to non-uniformly generate for all hypothesis classes of size two.}{In contrast to this positive result, \ifbool{arxiv}{\cite}{\citet}{charikar2024exploring} show that no algorithm using only membership queries can be used to non-uniformly generate for all hypothesis classes of size two.} 

With these results in mind, it's natural to ask whether representative generation in the limit can also be achieved by an algorithm that uses only a membership query oracle. In this new setting where we care about representation as well as consistency, it's natural to assume we can make queries about both group and hypothesis membership, i.e. ask questions of the form ``$x \in \supp{h}$?'' or ``$x \in A_i$?'' for any $h \in \calH$ or $A_i \in \calA$. 

We show that unlike in the standard setting of generation in the limit, the additional representation constraint poses a significant barrier to generating with only membership queries.\ifbool{arxiv}{ Similar to the negative result of~\ifbool{arxiv}{\cite}{\citet}{charikar2024exploring}, we show that no algorithm can exist that satisfies representative generation in the limit for any hypothesis class of size 1, and collection of two groups that partition $\calX$.}{} This negative result is with respect to the generation in the limit setting, while \ifbool{arxiv}{\cite}{\citet}{charikar2024exploring}'s negative result holds only in the stronger non-uniform generation setting without representation. However, both proofs revolve around a similar obstacle: with only membership queries to single groups or single hypotheses, it's difficult to know whether the intersection of a group and a hypothesis's support (or in the case of \ifbool{arxiv}{\cite}{\citet}{charikar2024exploring}, the intersection of the supports of two hypotheses), contains infinitely many elements, or finitely many. 
\ifbool{arxiv}{The algorithm of~\ifbool{arxiv}{\cite}{\citet}{kleinberg2024language} crucially relies on the UUS assumption, in particular the fact that finding an element from $\supp{h} \setminus x_{1:t}$ is always possible because $|\supp{h}| = \infty$, and thus one can simply enumerate the elements of $\calX$ until an unseen point is encountered. In the case of representative generation, however, a generator may need to generate from $(\supp{h} \cap A_i) \setminus x_{1:t}$, which is not always guaranteed to be infinite, making this problem a lot more difficult. We note that in the case where $\calA$ partitions $\calX$, if we made the strong assumption that $\supp{h} \cap A_i$ is either empty or has infinite size for all $A_i \in \calA$ and $h \in \calH$, then this would be similar to the assumption made by \ifbool{arxiv}{\cite}{\citet}{kleinberg2024language}, and we could use membership queries to generate with representation using an algorithm almost identical to their membership-query algorithm. 

However, the following lemma shows that weakening this assumption to allow for groups with finite intersection with hypotheses in $\calH$ introduces some inherent difficulty. In particular, no algorithm that works even just for very simple, finite pairs of $\calH$ and $\calA$ can generate in the limit with representation using only membership queries. }{

The following lemma shows that no algorithm that works even just for very simple, finite pairs of $\calH$ and $\calA$ can generate in the limit with representation using only membership queries. The proof proceeds by contradiction, assuming the existence of such a generator. We then analyze its behavior to construct an enumeration that forces the generator to violate either consistency or representation at each timestep. The complete proof, along with an extended discussion of special cases where representative generation with membership queries is feasible, is provided in Appendix~\ref{sec:gil-query-impossibility-proof}.}
\begin{lemma}[Impossibility of Generating with Group Constraints with Only Membership Queries]\label{lem:gil-query-impossibility}
    For any $\alpha < 1/2$, there cannot exist a (deterministically computed) randomized generator that satisfies $\alpha$-representative generation in the limit for any \emph{UUS} hypothesis class $\calH = \{h\}$ and finite partition of $\calX$, $\calA = \{A_1, A_2\}$, and uses only a finite number of membership queries of the form ``$x \in \supp{h}$?'' or ``$x \in A_i$?'' for $h \in \calH$ and $i \in \{1, 2\}$ at each step. 
\end{lemma}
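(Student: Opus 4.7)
My plan is a proof by contradiction. Suppose a deterministically-computed randomized generator $\gen$ satisfies $\alpha$-representative generation in the limit for every valid $(\calH, \calA)$ with $|\calH| = 1$ and $|\calA| = 2$, using only finitely many membership queries per step. I will construct a specific pair $(\calH, \calA)$ on which $\gen$ fails by simulating $\gen$ and revealing $\calA$ adaptively.

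Take $\calX = \N \cup \{0\}$ with $0 \in A_2$ from the outset, and build $\calA = \{A_1, A_2\}$ adaptively by simulating $\gen$. At each step $t$: (i) enumerate $x_t$ as the smallest element of $\N$ that has neither been enumerated nor committed to $A_2$, committing $x_t \in A_1$; (ii) for each membership query by $\gen$ about an element $y$, answer consistently with existing commitments if any, otherwise commit $y \in A_2$ and answer accordingly; (iii) after $\gen$ outputs $\mu_t$, if $\mu_t$ places mass strictly more than $\alpha$ on the unseen set $\calX \setminus \{x_1, \ldots, x_t\}$, select a finite $S_t \subseteq \supp{\mu_t}$ in the unseen portion with $\mu_t(S_t) > \alpha$ (which exists by summing the largest-mass atoms of $\mu_t$ on the unseen set until the partial sum exceeds $\alpha$) and commit $S_t \subseteq A_2$. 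At the end, let $A_1$ be the set of all enumerated elements, $A_2 = \calX \setminus A_1$, and let $h$ be the indicator of $A_1$, so $\calH = \{h\}$ with $\supp{h} = A_1$.

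The plan is then to verify three things. First, this adaptive procedure produces a valid instance $(\calH, \calA)$: because only finitely many elements are committed to $A_2$ at each step, the enumeration always finds a next element, making $A_1$ infinite (so $h$ satisfies UUS), and $0 \in A_2$ keeps the partition nontrivial. Second, all enumerated $x_t$ are distinct and lie in $A_1$, so $\groupemp{t}(A_2) = 0$ at every step $t$. Third, I invoke the assumption that $\gen$ satisfies representative generation in the limit on $(\calH, \calA)$: this yields a finite $t^*$ such that for every $t \geq t^*$, $\supp{\mu_t} \subseteq \calX \setminus \{x_1, \ldots, x_t\}$. Thus at step $t^*$ the sabotage rule activates and commits some $S_{t^*}$ with $\mu_{t^*}(S_{t^*}) > \alpha$ into $A_2$, giving $\groupdist{\mu_{t^*}}(A_2) \geq \mu_{t^*}(S_{t^*}) > \alpha = \alpha + \groupemp{t^*}(A_2)$, which contradicts $\alpha$-representativeness at step $t^*$.

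The hardest part is justifying that the many adaptive commitments form a single globally consistent partition throughout an infinite interaction, since decisions made at later steps could in principle conflict with earlier ones. The key observation is that the enumeration rule never selects an element committed to $A_2$ at any earlier step, and every new commitment to $A_2$ is made only to elements not previously enumerated; hence the two sides of the partition never intersect, and all query answers remain faithful to the eventual $A_1, A_2$. A related subtlety is the conditional nature of the sabotage step: if $\gen$ repeatedly places most of its $\mu_t$ mass on already-seen elements to dodge sabotage, then it is perpetually inconsistent and directly violates generation in the limit, so at least one of $\alpha$-representativeness or eventual consistency must fail.
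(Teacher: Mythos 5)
Your proof is correct, and it takes a genuinely different route from the paper's. The paper keeps $A_1$ and $\supp{h}$ separate: when an unseen element is queried, the adversary answers ``yes, in $\supp{h}$'' and places it in $A_2$, which then forces the paper to maintain a queue so that these $A_2 \cap \supp{h}$ elements eventually appear in the enumeration (otherwise the stream fails to enumerate $\supp{h}$). As a byproduct, the enumeration alternates between fresh $A_1$ elements and queued $A_2$ elements, so the paper can only guarantee $\groupemp{t}(A_1) \geq 1/2$, which is why its contradiction needs $\alpha < 1/2$. You instead identify $\supp{h}$ with $A_1$: every fresh query is answered ``no, not in $\supp{h}$'' and the queried element goes to $A_2 = \calX\setminus\supp{h}$. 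This sidesteps the queue entirely (there is nothing outside the enumeration that must be added later), the enumeration is trivially valid since $A_1$ \emph{is} the set of enumerated points, and $\groupemp{t}(A_2) \equiv 0$. The sabotage step then simultaneously violates consistency (mass on $A_2 \subseteq \calX\setminus\supp{h}$) and $\alpha$-representativeness ($\groupdist{\mu_{t^*}}(A_2) > \alpha$ while $\groupemp{t^*}(A_2) = 0$). A pleasant bonus is that your argument works for all $\alpha < 1$, not merely $\alpha < 1/2$, so it strictly strengthens the lemma's conclusion. The well-definedness concerns you flag are handled correctly: commitments never conflict because $A_1$-commitments arise only through enumeration, which never touches an $A_2$-committed element, while $A_2$-commitments (from queries or sabotage) are only made to non-enumerated elements; and since only finitely many commitments occur per round, the enumeration can always advance and $A_1$ is infinite, so the UUS property holds. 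One small thing worth making explicit in a full write-up is the argument that every element of $\N$ is eventually enumerated or placed in $A_2$, so that $A_2 = \calX\setminus A_1$ is genuinely the limiting object the simulation commits to; this follows because the enumerated indices are strictly increasing, so any element never committed to $A_2$ would eventually become the smallest available candidate and be enumerated.
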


\ifbool{arxiv}{The proof proceeds by contradiction, assuming the existence of such a generator. We then analyze its behavior to construct an enumeration of a hypothesis that forces the generator to violate either consistency or representation at each timestep. 

\begin{proof}[Proof of Lemma~\ref{lem:gil-query-impossibility}]
    Consider a deterministically computed randomized generator $\gen$ that at step $t$, sees examples $x_1, ..., x_t$ and can make queries of the form ``$x \in \supp{h}$'' or ``$x \in A_1$?''. Note that because we only have two groups making up the partition, querying about $A_2$ as well can provide no extra information. 

    Assume for contradiction that at each timestep $t$, the generator makes a finite number of such membership queries before outputting its generated distribution $\mu_t$, and satisfies representative generation in the limit. 

    We examine the execution of this generator and use it to construct a hypothesis $h$, partition $\calA$, and enumeration $x_1, x_2, ...$ of $\supp{h}$ that forces the generator to make infinitely many mistakes. 
     
    Let $u_1, u_2, ....$ be an enumeration of $\calX$. 

    We now imagine running $\gen$ (this can be run offline prior to execution as the generator is deterministic) and use the run to build up a hypothesis $h$, an enumeration $k_1, k_2, ...$ of $\supp{h}$, and disjoint groups $A_1, A_2$ with $A_1 \cup A_2 = \calX$. 

    We introduce some variables for keeping track of our constructed example:
    \begin{itemize}
        \item A dictionary $H : \calX \rightarrow \{-1, 0, 1\}$ that keeps track of the values assigned for the language $h$ thus far, with $H(x) = 1$ if $h(x) = 1$, $H(x) = 0$ if $h(x) = 0$, and $H(x) = -1$ if the value has not been assigned. We initialize $H(x) = -1$ for all $x \in \calX$.  
        \item A dictionary $A: \calX \rightarrow \{-1, 1, 2\}$ that keeps track of the group membership of each $x$ (with 1 and 2 responding to $A_1$ and $A_2$, respectively, and $-1$ if $x$ has not yet been assigned). We initialize $A(x) = -1$ for all $x \in \calX$.
        \item A list $K$ that will keep track of the enumeration of elements. We begin with $K = \{\}$. 
        \item A queue $Q$, to which we can insert elements and pull them off the queue in a first-in, first-out manner. 
    \end{itemize}
    
    Now, at each timestep $t = 1, 2, ...$, we perform three stages of actions. The first is to add to the enumeration, the second is to answer the queries of the generator, and the last is to handle the generator's outputted distribution $\mu_t$ for that timestep. 

    \paragraph{Stage 1: Adding to the enumeration.}
    If $t$ is odd or $Q$ is empty, we find the smallest $i \geq 1$ such that $A(u_i) = -1$. Such a $u_i$ is guaranteed to exist as $u_1, ...$ is an enumeration of $\calX$, which starts with infinitely many $u_i$ with $A(u_i) = -1$, and at each step we will only fix the membership of a finite number of such $u_i$. Having found this $u_i$, we set $A(u_i) = 1$, $H(u_i) = 1$, and append it to the enumeration $K$, so set $k_t = u_i$. 
    
    Otherwise, if $t$ is even and $Q$ is non-empty, we pull an element $x$ off of the queue, and append it to $K$, thus setting $k_t = x$. 

    \paragraph{Stage 2: Answering $\gen$'s queries.} We now run the generator upon seeing $k_1, ..., k_t$, and answer each membership query as follows. 

    Every time the generator asks ``$x \in \supp{h}$?'' for some $x$, if $H(x) \neq -1$, and the membership has already been fixed, we output the correct membership value. Otherwise, if $H(x) = -1$, we answer ``yes,'' set $H(x) = 1$, $A(x) = 2$, and add it to $Q$. 

    Every time the generator asks ``$x \in A_1$?,'' if $A(x) \neq -1$, we provide the correct answer that has already been fixed. Otherwise, we set $A(x) = 2$, $H(x) = 1$, and add it to $Q$. Note that if $A(x) = -1$, we necessarily have $H(x) = -1$. 

    \paragraph{Stage 3: Handling $\gen$'s output.} After a finite number of queries, $\gen$ is guaranteed by assumption to output a distribution $\mu_t$. We make no assumptions about the representation of $\mu_t$, and thus it could have infinite support. 

    We perform the following actions depending on the contents of $\supp{\mu_t}$:
    \begin{itemize}
        \item If there exists an $x \in \supp{\mu_t}$ with $H(x) = 0$ or $x \in \{x_1, ..., x_t\}$, we do nothing and end execution for the timestep. 
        \item If there exists an $x \in \supp{\mu_t}$ with $H(x) = -1$, i.e. $x$ has not yet been queried by the generator, we set $H(x) = 0$, $A(x) = 2$, and finish execution for timestep $t$.
        \item Otherwise, note that every $x \in \supp{\mu_t}$ has $H(x) = 1$, $A(x) = 2$ by definition of our construction. 
    \end{itemize} 

    We repeat these three steps for each timestep. This completes the definition of the construction. 

    We now verify the necessary facts for this $\calH$, enumeration, and group partition to be well-defined. In particular, $\calH = \{h\}$ must be $UUS$ and $K$ must be a valid enumeration of $\supp{h}$. Clearly, each $x$ can be assigned only one of $A(x) = 1$ or $A(x) = 2$, so we have a valid partition.

    We first consider the UUS property of $\calH$. We consider two cases. First, if the generator makes a finite number of queries at each time step as promised, then $\{x \in \calX : H(x) = -1\}$ will have infinite size at all timesteps, and so at every timestep we are able to find an unseen $x \in \calX$ with $H(x) = -1$ to add to the enumeration, or pull one off the queue. Because this can be repeated indefinitely, $\supp{h}$ will be infinite. On the other hand, if at some finite timestep the generator makes an infinite number of queries, either an infinite number of the queries are on elements with $H(x) = -1$ or $A(x) = -1$, in which case each of these elements get set to $H(x) = 1$ resulting in an infinite support, or there are only a finite number of such queries, and thus the remaining un-queried portion of $\calX$ must be infinite, and we can set all of it to $H(x) = 1$ to again get an infinite support. Thus, in all cases $\supp{h}$ satisfies the UUS property. 

    Lastly, we show that $K$ is a valid enumeration. First, consider the case where at some timestep $t$, the generator makes an infinite number of queries. We can follow the assignment of $H$ above to construct a UUS $\calH$, and then append any enumeration of the remaining unseen elements with $H(x) = 1$ to $K$ to obtain a valid enumeration.
    
    Otherwise, we can assume that at every step, the generator makes a finite number of queries. Clearly by definition of the construction, only elements with $H(x) = 1$ are added to the enumeration, meaning that $\bigcup_{i \in \N}\{k_i\} \subseteq \supp{h}$. Thus, it remains to show that the $K$ covers all of $\supp{h}$.  Consider any $x \in \supp{h}$, i.e. an $x \in \calX$ such that there exists some finite timestep at which $H(x)$ is set to 1. We will show that there exists some $j \in \N$ such that $k_j = x$. Note that because $\supp{h} \subseteq \calX$ and $u_1, ...$ is an enumeration of $\calX$, there exists some $i \in \N$ such that $u_i = x$. At timestep $2i - 1$, we have three possibilities. 
    \begin{itemize}
        \item $k_{2i - 1} = u_i$. 
        \item $k_{2i - 1} \neq u_i$, but there exists a $j < 2i - 1$ such that $k_j = u_i$.
        \item $k_{2i - 1} \neq u_i$, and $u_i$ has not appeared earlier in the enumeration. 
    \end{itemize}

    If we are in either of the first conditions, we are done because we have guaranteed that there exists a $j \in \N$ such that $k_j = x$. Note that in the third case, this can only happen because $u_i$ is currently in the queue. Because at each previous step the generator made a finite number of membership queries, the position of $u_i$ in the queue must be some finite $s \in \N$. Thus, because even timestep outputs an element from the queue if it is nonempty, we are guaranteed that $k_{2i - 1 + 2s - 1} = u_i$, and thus in all cases there exists a $j \in \N$ such that $k_j = x$, and thus $K$ is a valid enumeration of $\supp{h}$.

    \paragraph{Construction of Contradiction.}
    We finally consider what happens when the generator is run on enumeration $K$ with group and hypothesis membership defined by $A$ and $H$ as above. Consider any timestep $t$, where the generator outputs a distribution $\mu_t$.

    We consider the three cases considered in Stage 3. Note that in both of the first two cases, the generator must violate consistency, as there exists an $x \in \supp{\mu_t}$ with $x \not\in \supp{h} \setminus \{x_1, ..., x_t\}$. 

    Otherwise, the only other possibility is given by the third case, which guarantees that every $x \in \supp{\mu_t}$ has $A(x) = 2$, and thus $x \in A_2$. This means that the generator is not representative of the data sequence thus far, because by definition of the enumeration, $\groupemp{t}(1) \geq 1/2$ while $\groupdist{\mu_t}(1) = 0$, and thus $\|\groupemp{t} - \groupdist{\mu_t}\|_{\infty} \geq 1/2 > \alpha$. 

    Thus, the generator fails to generate consistently with group constraints at every timestep. We thus conclude that the assumption was false, and at some iteration the generator must make an infinite number of membership queries. 
    \end{proof}}{}

\section{Discussion and Future Directions}\label{sec:conclusion}
In this paper, we introduced and analyzed the concept of \emph{representative generatability}, which extends the theoretical framework for generation introduced by \ifbool{arxiv}{\cite}{\citet}{kleinberg2024language} and \ifbool{arxiv}{\cite}{\citet}{raman2024generation}. This novel property ensures that the outputs of generative models closely approximate the proportions of certain groups-of-interest in the training data. We provide a complete, information-theoretic characterization of which combinations of hypothesis classes and groups are uniformly and non-uniformly generatable with representation. In addition, we study representative generatability in the limit from both information-theoretic and computational perspectives. Notably, we demonstrate that, unlike the case of non-representative generatability in the limit, membership queries alone are insufficient for computable algorithms to achieve representative generation in the limit.  

Our additional constraint of representational generation highlights key tensions and possibilities between the positive results of \ifbool{arxiv}{\cite}{\citet}{kleinberg2024language}'s model and real-world approaches to generation. Specifically, while real-world approaches typically aim to develop generative models that closely approximate training distributions—and it is indeed natural to expect our generations to resemble training data in certain aspects--\ifbool{arxiv}{\cite}{\citet}{kleinberg2024language}'s notion of generation in the limit imposes no requirement that generated data must resemble previously observed data, only that it must belong to the true language. Our work maintains the generation-in-the-limit framework while introducing an additional constraint: generations must resemble training data with respect to simple statistical tests measuring the prevalence of certain subpopulations. Arguably, our notion of representation is useful to formalize even in a practical setting, as it addresses an important consideration: generative models can potentially under- or over-represent certain subpopulations, even when they demonstrate good overall alignment with the training data.
There are still several directions of future work, three of which we review below. 

\paragraph{Representative Uniform and Non-uniform Generation for Richer Collections of Groups.} 
In Sections~\ref{sec:unif-gen} and \ref{sec:non-unif-gen}, we show that for all finite $\calA$ that form a partition of $\calX$, all finite and countable classes are representative uniformly and non-uniformly generatable, respectively. In the case of representative generation in the limit, however, our positive results extend to a much richer class of group collections: any countable collection of possibly overlapping groups satisfying the finite support assumption. Lemma~\ref{lem:finite-supp-imposs} shows that in full generality, this finite support assumption is necessary for representative generation to be possible. However, this still leaves a large gap between the collections of groups we show are uniformly and non-uniformly generatable with representation (finite partitions) vs. the collections of groups we can show are generatable in the limit with representation (countable overlapping groups with finite support). This raises the question of whether this gap can be closed: are all finite and countable classes representative uniformly and non-uniformly generatable respectively if $\calH$ has finite support with respect to $\calA$? If not, what are the minimal assumptions that need to be placed on $(\calH, \calA)$ so that all finite and countable $\calH$ are representative uniformly and non-uniformly generatable, respectively? More generally, what characterizes representative uniform and non-uniform generation when groups may be overlapping?

\paragraph{Representative Generation with Dynamic Groups.} Our model of representative generation considers a fixed collection of groups $\calA$. However, in practice, group membership may evolve with time, with existing groups growing and shrinking in size, different features signaling membership in particular groups, and even some \emph{new} groups forming. We leave it as an important direction of future work to extend our characterizations and study of representative generation to the case where there exists a \emph{time-indexed} collection of groups $\calA_1, \calA_2, \dots$, capturing the fact that in practice, group memberships may evolve with time. 

\paragraph{Representative Generation Beyond the Supremum Distance.} In this paper, we quantified the quality of group representation of a generator's outputs via the supremum distance between the empirical probabilities over groups and the induced group probabilities of the generator's output. However, there are other natural ways of enforcing group representation. For example, one could consider swapping the supremum distance with the $\ell_1$-distance between group probabilities, or some other notion of  distance that is more appropriate for a particular application in mind. We leave the study of generatability under these alternate notions of group representation as a important direction of future work. 
 

\subsubsection*{Acknowledgments} CP was supported by the Apple Scholars in AI/ML PhD fellowship and the Simons Foundation Collaboration on the Theory of Algorithmic Fairness. VR acknowledges support from the NSF Graduate Research Fellowship. OR was supported by the Simons Foundation Investigators Award 689988 and the Simons Foundation Collaboration on the Theory of Algorithmic Fairness.

\bibliographystyle{alpha}
\bibliography{ref}

\newcommand{\etalchar}[1]{$^{#1}$}
\begin{thebibliography}{GPAM{\etalchar{+}}20}

\bibitem[ACB17]{arjovsky2017wasserstein}
Martin Arjovsky, Soumith Chintala, and L{\'e}on Bottou.
\newblock Wasserstein generative adversarial networks.
\newblock In {\em International conference on machine learning}, pages 214--223. PMLR, 2017.

\bibitem[Ang79]{angluin1979finding}
Dana Angluin.
\newblock Finding patterns common to a set of strings.
\newblock In {\em Proceedings of the eleventh annual ACM Symposium on Theory of Computing}, pages 130--141, 1979.

\bibitem[Ang80]{angluin1980inductive}
Dana Angluin.
\newblock Inductive inference of formal languages from positive data.
\newblock {\em Information and control}, 45(2):117--135, 1980.

\bibitem[BGMMS21]{bender2021dangers}
Emily~M Bender, Timnit Gebru, Angelina McMillan-Major, and Shmargaret Shmitchell.
\newblock On the dangers of stochastic parrots: Can language models be too big?
\newblock In {\em Proceedings of the 2021 ACM conference on fairness, accountability, and transparency}, pages 610--623, 2021.

\bibitem[BHM{\etalchar{+}}21]{bousquet2021theory}
Olivier Bousquet, Steve Hanneke, Shay Moran, Ramon Van~Handel, and Amir Yehudayoff.
\newblock A theory of universal learning.
\newblock In {\em Proceedings of the 53rd Annual ACM SIGACT Symposium on Theory of Computing}, pages 532--541, 2021.

\bibitem[BLW94]{bartlett1994fat}
Peter~L Bartlett, Philip~M Long, and Robert~C Williamson.
\newblock Fat-shattering and the learnability of real-valued functions.
\newblock In {\em Proceedings of the seventh annual conference on Computational learning theory}, pages 299--310, 1994.

\bibitem[CP24]{charikar2024exploring}
Moses Charikar and Chirag Pabbaraju.
\newblock Exploring facets of language generation in the limit.
\newblock {\em arXiv preprint arXiv:2411.15364}, 2024.

\bibitem[DKR{\etalchar{+}}21]{dwork2021outcome}
Cynthia Dwork, Michael~P Kim, Omer Reingold, Guy~N Rothblum, and Gal Yona.
\newblock Outcome indistinguishability.
\newblock In {\em Proceedings of the 53rd Annual ACM SIGACT Symposium on Theory of Computing}, pages 1095--1108, 2021.

\bibitem[GKR{\etalchar{+}}21]{gopalan2021omnipredictors}
Parikshit Gopalan, Adam~Tauman Kalai, Omer Reingold, Vatsal Sharan, and Udi Wieder.
\newblock Omnipredictors.
\newblock {\em arXiv preprint arXiv:2109.05389}, 2021.

\bibitem[Gol67]{gold1967language}
E~Mark Gold.
\newblock Language identification in the limit.
\newblock {\em Information and control}, 10(5):447--474, 1967.

\bibitem[GPAM{\etalchar{+}}20]{goodfellow2020generative}
Ian Goodfellow, Jean Pouget-Abadie, Mehdi Mirza, Bing Xu, David Warde-Farley, Sherjil Ozair, Aaron Courville, and Yoshua Bengio.
\newblock Generative adversarial networks.
\newblock {\em Communications of the ACM}, 63(11):139--144, 2020.

\bibitem[GRB{\etalchar{+}}24]{gallegos2024bias}
Isabel~O Gallegos, Ryan~A Rossi, Joe Barrow, Md~Mehrab Tanjim, Sungchul Kim, Franck Dernoncourt, Tong Yu, Ruiyi Zhang, and Nesreen~K Ahmed.
\newblock Bias and fairness in large language models: A survey.
\newblock {\em Computational Linguistics}, pages 1--79, 2024.

\bibitem[GRSW22]{gopalan2022multicalibrated}
Parikshit Gopalan, Omer Reingold, Vatsal Sharan, and Udi Wieder.
\newblock Multicalibrated partitions for importance weights.
\newblock In {\em International Conference on Algorithmic Learning Theory}, pages 408--435. PMLR, 2022.

\bibitem[HJKRR18]{hebert2018multicalibration}
Ursula H{\'e}bert-Johnson, Michael Kim, Omer Reingold, and Guy Rothblum.
\newblock Multicalibration: Calibration for the (computationally-identifiable) masses.
\newblock In {\em International Conference on Machine Learning}, pages 1939--1948. PMLR, 2018.

\bibitem[KJV{\etalchar{+}}21]{kirk2021bias}
Hannah~Rose Kirk, Yennie Jun, Filippo Volpin, Haider Iqbal, Elias Benussi, Frederic Dreyer, Aleksandar Shtedritski, and Yuki Asano.
\newblock Bias out-of-the-box: An empirical analysis of intersectional occupational biases in popular generative language models.
\newblock {\em Advances in neural information processing systems}, 34:2611--2624, 2021.

\bibitem[KM24]{kleinberg2024language}
Jon Kleinberg and Sendhil Mullainathan.
\newblock Language generation in the limit.
\newblock {\em arXiv preprint arXiv:2404.06757}, 2024.

\bibitem[KMV24a]{kalavasis2024characterizations}
Alkis Kalavasis, Anay Mehrotra, and Grigoris Velegkas.
\newblock Characterizations of language generation with breadth.
\newblock {\em arXiv preprint arXiv:2412.18530}, 2024.

\bibitem[KMV24b]{kalavasis2024limits}
Alkis Kalavasis, Anay Mehrotra, and Grigoris Velegkas.
\newblock On the limits of language generation: Trade-offs between hallucination and mode collapse.
\newblock {\em arXiv preprint arXiv:2411.09642}, 2024.

\bibitem[KNRW18]{kearns2018preventing}
Michael Kearns, Seth Neel, Aaron Roth, and Zhiwei~Steven Wu.
\newblock Preventing fairness gerrymandering: Auditing and learning for subgroup fairness.
\newblock In {\em International conference on machine learning}, pages 2564--2572. PMLR, 2018.

\bibitem[LRT24]{raman2024generation}
Jason Li, Vinod Raman, and Ambuj Tewari.
\newblock Generation through the lens of learning theory.
\newblock {\em arXiv preprint arXiv:2410.13714}, 2024.

\bibitem[MFC23]{mei2023bias}
Katelyn Mei, Sonia Fereidooni, and Aylin Caliskan.
\newblock Bias against 93 stigmatized groups in masked language models and downstream sentiment classification tasks.
\newblock In {\em Proceedings of the 2023 ACM Conference on Fairness, Accountability, and Transparency}, pages 1699--1710, 2023.

\bibitem[RST15]{rakhlin2015online}
Alexander Rakhlin, Karthik Sridharan, and Ambuj Tewari.
\newblock Online learning via sequential complexities.
\newblock {\em J. Mach. Learn. Res.}, 16(1):155--186, 2015.

\bibitem[SCNP19]{sheng2019woman}
Emily Sheng, Kai-Wei Chang, Prem Natarajan, and Nanyun Peng.
\newblock The woman worked as a babysitter: On biases in language generation.
\newblock In {\em Proceedings of the 2019 Conference on Empirical Methods in Natural Language Processing and the 9th International Joint Conference on Natural Language Processing (EMNLP-IJCNLP)}, pages 3407--3412, 2019.

\bibitem[SCNP21]{sheng2021societal}
Emily Sheng, Kai-Wei Chang, Prem Natarajan, and Nanyun Peng.
\newblock Societal biases in language generation: Progress and challenges.
\newblock In {\em Proceedings of the 59th Annual Meeting of the Association for Computational Linguistics and the 11th International Joint Conference on Natural Language Processing (Volume 1: Long Papers)}, pages 4275--4293, 2021.

\bibitem[TTT20]{thanh2020catastrophic}
Hoang Thanh-Tung and Truyen Tran.
\newblock Catastrophic forgetting and mode collapse in gans.
\newblock In {\em 2020 international joint conference on neural networks (ijcnn)}, pages 1--10. IEEE, 2020.

\bibitem[WXF{\etalchar{+}}24]{wang2024taming}
Peihao Wang, Dejia Xu, Zhiwen Fan, Dilin Wang, Sreyas Mohan, Forrest Iandola, Rakesh Ranjan, Yilei Li, Qiang Liu, Zhangyang Wang, et~al.
\newblock Taming mode collapse in score distillation for text-to-3d generation.
\newblock In {\em Proceedings of the IEEE/CVF Conference on Computer Vision and Pattern Recognition}, pages 9037--9047, 2024.

\bibitem[ZAD{\etalchar{+}}24]{zhou2024bias}
Mi~Zhou, Vibhanshu Abhishek, Timothy Derdenger, Jaymo Kim, and Kannan Srinivasan.
\newblock Bias in generative ai.
\newblock {\em arXiv preprint arXiv:2403.02726}, 2024.

\end{thebibliography}

\appendix
\section{Connections to Prompted Generation}\label{sec:prompted-gen}
Existing works have also explored variants of the generation setting that capture generation tailored to specific prompts at each time step. This concept is referred to as ``prompted generation'' by \ifbool{arxiv}{\cite}{\citet}{kleinberg2024language}, and generalized to ``multiclass generation'' by \ifbool{arxiv}{\cite}{\citet}{raman2024generation}.

Prompted generation modifies the standard generation setting by expanding the representation of a language from a binary hypothesis $h: \calX \rightarrow \{0, 1\}$ to a multiclass hypothesis $h: \calX \rightarrow [k]$. At each time step, the adversary provides an element $x_t$, as well as its associated label $h(x_t)$. The generator's goal is to generate a consistent element with the same label, i.e. an $x \in \calX$ with $h(x) = h(x_t)$. 

As noted by Remark 5.1 in \ifbool{arxiv}{\cite}{\citet}{raman2024generation}, the multiclass framework could be shaped to be similar to the representative generation setting by selecting a partition $\calA$ of $\calX$ and transforming each $h: \calX \rightarrow \{0, 1\}$ into a multiclass hypothesis such that $h(x) = i$ if $h(x) = 1$ and $x \in A_i$, and 0 otherwise. However, the existence of such a conversion is less clear in the case of overlapping groups. Additionally, whereas representative generation only requires generations that approximate the empirical distribution, prompted generation could require the generator to output an element with a label that has appeared in only a tiny fraction of the data. For this reason, the generation guarantees for these approaches differ in nature. Prompted generation requires a certain number of elements to be seen per group before generating consistently. On the other hand, representative generation offers consistency guarantees that depend solely on the total number of elements seen, regardless of their group membership. 

\section{Proof of Theorem \ref{thm:alphagrpconstunifgen}} \label{app:alphagrpconstunifgen}
We separate the two directions (necessity and sufficiency) of Theorem~\ref{thm:alphagrpconstunifgen} into two proofs below.

\begin{proof}[Proof of Necessity in Theorem~\ref{thm:alphagrpconstunifgen}] Let $\calX$ be countable, $\calH \subseteq \{0, 1\}^\calX$ be any class satisfying the UUS property, and $\calA = \{A_i\}_{i \in \mathbb{N}}$ be any countable infinite \footnote{An identical proof follows if $\calA$ is instead a finite partition of $\calX$.} partition of $\calX$. Suppose that $\mathsf{GC}_{\alpha}(\calH, \calA) = \infty$. It suffices to show that for every generator $\gen$ and $d \in \mathbb{N}$, there exists $d^{\star} \geq d$, an $h \in \calH$, and a sequence $x_1, x_2, \dots $ with $\{x_1, x_2, \dots\} \subseteq \supp{h}$ such that either
\begin{itemize}
\item[(1)] for every $t \in \mathbb{N}$ where $|\{x_1, \dots, x_t\}| = d^{\star}$, there exists an $s \geq t$ where 
$$\Pr_{\hat{x}_s \sim \gen(x_{1:s})}\left[\hat{x}_s \in  \supp{h} \setminus \{x_1, \dots, x_s\}\right] < 1$$ or 
\item[(2)] there exists a $t \in \mathbb{N}$ such that $||\groupdist{\gen(x_{1:t})} - \groupemp{t}||_{\infty} > \alpha.$
\end{itemize}

To that end, fix a generator $\gen$ and a number $d \in \mathbb{N}$. Since $\mathsf{GC}_{\alpha}(\calH, \calA) = \infty$, we know there must exist some $d^{\star} \geq d$ and a sequence of distinct examples $x_1, \dots, x_{d^{\star}}$ such that either

\begin{itemize}
\item[(a)] $\exists \, i \in \mathbb{N}$ such that $\langle x_1, \dots, x_{d^{\star}} \rangle_{\calH} \cap A_i \setminus \{x_1, \dots, x_{d^{\star}}\} = \emptyset$ and $\groupemp{d^{\star}}(i) > \alpha$ or 
\item[(b)] $\alpha |\mathbb{N}/S| < \sum_{i \in S} \groupemp{d^{\star}}(i)$ where  $S = \{i \in \mathbb{N} : \langle x_1, \dots, x_{d^{\star}} \rangle_{\calH} \cap A_i \setminus \{x_1, \dots, x_{d^{\star}}\} = \emptyset\}.$
\end{itemize}


Consider passing to $\gen$ the sequence $x_1, \dots, x_{d^{\star}}.$  If 
$$\Pr_{\hat{x}_{d^{\star}} \sim \gen(x_{1:d^{\star}})}\left[\hat{x}_{d^{\star}} \in \langle x_1, \dots, x_{d^{\star}} \rangle_{\calH} \setminus \{x_1, \dots, x_{d^{\star}}\}\right] = 1,$$

then pick any $h \in \calH$ such that $\{x_{1:d^{\star}}\} \subseteq \supp{h}$ and any completion of the stream $x_{d^{\star} + 1}, x_{d^{\star} + 2}, \dots$ such that $\{x_{d^{\star} + j}\}_{j \in \mathbb{N}} \subseteq \supp{h} \setminus \{x_{1:d^{\star}}\}.$  On the other hand, if 
$$\Pr_{\hat{x}_{d^{\star}} \sim \gen(x_{1:d^{\star}})}\left[\hat{x}_{d^{\star}} \in \langle x_1, \dots, x_{d^{\star}} \rangle_{\calH} \setminus \{x_1, \dots, x_{d^{\star}}\}\right] < 1,$$ 

then there must exist an $h \in \calH$ such that $\{x_{1:d^{\star}}\} \subseteq \supp{h}$  while  
$$\Pr_{\hat{x}_{d^{\star}} \sim \gen(x_{1:d^{\star}})}\left[\hat{x}_{d^{\star}} \in \supp{h} \setminus \{x_{1:d^{\star}}\}\right] < 1.$$

Pick this $h \in \calH$ and complete the stream like before using this $h \in \calH.$  To see why such an $h \in \calH$ must exist, note that if $\Pr_{\hat{x}_{d^{\star}} \sim \gen(x_{1:d^{\star}})}\left[\hat{x}_{d^{\star}} \in \langle x_1, \dots, x_{d^{\star}} \rangle_{\calH} \setminus \{x_1, \dots, x_{d^{\star}}\}\right] < 1$, then there must exist an $x \notin \langle x_1, \dots, x_{d^{\star}} \rangle_{\calH} \setminus \{x_1, \dots, x_{d^{\star}}\}$ which $\gen(x_{1:d^{\star}})$ puts positive mass on. But because this $x \notin \langle x_1, \dots, x_{d^{\star}} \rangle_{\calH} \setminus \{x_1, \dots, x_{d^{\star}}\}$, there must be an $h \in \calH$ which contains $x_{1:d^{\star}}$ in its support but not $x$. We now show that the selected hypothesis and stream satisfies either condition (1) or (2) in three cases.  

\vspace{5pt}

\noindent \textbf{Case 1:} Suppose on the input $x_1, \dots, x_{d^{\star}}$, we have that $$\Pr_{\hat{x}_{d^{\star}} \sim \gen(x_{1:d^{\star}})}\left[\hat{x}_{d^{\star}} \in \langle x_1, \dots, x_{d^{\star}} \rangle_{\calH} \setminus \{x_1, \dots, x_{d^{\star}}\}\right] < 1.$$ Consider the hypothesis $h \in \calH$ and the stream $x_1, x_2, \dots$ chosen above for this case. Note that $\{x_1, x_2, \dots\} \subseteq \supp{h}$ by definition. Moreover, since $x_{d^{\star} + 1} \neq x_{d^{\star}}$, $t = d^{\star}$ is the only such time point where $|\{x_1, \dots, x_{t}\}| = d^{\star}.$ Finally, on round $s = d^{\star} \geq t$, we have that $\Pr_{\hat{x}_{s} \sim \gen(x_{1:s})}\left[\hat{x}_{s} \in \supp{h} \setminus \{x_{1:s}\}\right] < 1$, satisfying condition (1). 

\vspace{5pt}

\noindent \textbf{Case 2:} Suppose on the input $x_1, \dots, x_{d^{\star}}$, we have that $$\Pr_{\hat{x}_{d^{\star}} \sim \gen(x_{1:d^{\star}})}\left[\hat{x}_{d^{\star}} \in \langle x_1, \dots, x_{d^{\star}} \rangle_{\calH} \setminus \{x_1, \dots, x_{d^{\star}}\}\right] = 1$$ and the input $x_1, \dots, x_{d^{\star}}$ satisfies condition (a). Consider the hypothesis $h \in \calH$ and the stream $x_1, x_2, \dots$ chosen above for this case. Note that $\{x_1, x_2, \dots\} \subseteq \supp{h}$ by definition. Let $i \in \mathbb{N}$ be the group satisfying the property in condition (a). Observe that on time point $t = d^{\star}$, we have that $||\groupdist{\gen(x_{1:t})} - \groupemp{t}||_{\infty} \geq  \groupemp{t}(i) > \alpha$ because $\langle x_1, \dots, x_{d^{\star}} \rangle_{\calH} \setminus \{x_1, \dots, x_{d^{\star}}\} \cap A_i  = \emptyset$ and $\groupdist{\gen(x_{1:d^{\star}})}$ puts all its mass on $\langle x_1, \dots, x_{d^{\star}} \rangle_{\calH} \setminus \{x_1, \dots, x_{d^{\star}}\}$. Thus, condition (2) is met and $\gen$ violates $\alpha$-representation.

\vspace{5pt}

\noindent \textbf{Case 3:} Suppose on the input $x_1, \dots, x_{d^{\star}}$, we have that $$\Pr_{\hat{x}_{d^{\star}} \sim \gen(x_{1:d^{\star}})}\left[\hat{x}_{d^{\star}} \in \langle x_1, \dots, x_{d^{\star}} \rangle_{\calH} \setminus \{x_1, \dots, x_{d^{\star}}\}\right] = 1$$ and the input $x_1, \dots, x_{d^{\star}}$ satisfies condition (b). Consider the hypothesis $h \in \calH$ and the stream $x_1, x_2, \dots$ chosen above for this case. Note that $\{x_1, x_2, \dots\} \subseteq \supp{h}$ by construction. Let $t = d^{\star}$. We claim that if condition (b) holds, we have that  $||\groupdist{\gen(x_{1:t})} - \groupemp{t}||_{\infty} > \alpha.$ For the sake of contradiction, suppose that $||\groupdist{\gen(x_{1:t})} - \groupemp{t}||_{\infty} \leq \alpha.$ Then, we have that $\groupdist{\gen(x_{1:t})}(i) \leq \groupemp{t}(i) + \alpha$ for all $i \in \mathbb{N}/S$ and $\groupdist{\gen(x_{1:t})}(i) = 0$ for all $i \in S$, where the latter is true by definition of $S$. If condition (b) holds, it must be the case that $|\mathbb{N}\setminus S| < \infty$. Thus, we can write:

$$\sum_{i \in \mathbb{N}} \groupdist{\gen(x_{1:t})}(i) = \sum_{i \in \mathbb{N} \setminus S} \groupdist{\gen(x_{1:t})}(i) \leq \sum_{i \in \mathbb{N} \setminus S} \groupemp{t}(i) + \alpha |\mathbb{N}\setminus S|.$$

\noindent If condition (b) is true, then $\sum_{i \in \mathbb{N} \setminus S} \groupemp{t}(i) < 1 - \alpha |\mathbb{N} \setminus S|$ giving that $\sum_{i \in \mathbb{N}} \groupdist{\gen(x_{1:t})}(i) < 1,$ a contradiction to the fact that $\groupdist{\gen(x_{1:t})}$ is a probability measure (recall that when $\calA$ is a partition of $\calX$, the induced group probabilities of any $\mu \in \Delta\calX$ form a probability measure). Thus, it must be the case that $||\groupdist{\gen(x_{1:t})} - \groupemp{t}||_{\infty} > \alpha$ and as in Case 2, $\gen$ violates $\alpha$-representation and condition (2) is met.  

\vspace{5pt}

This completes all cases. The overall proof is complete after noting that the generator $\gen$ and number $d \in \mathbb{N}$ were picked arbitrarily.
\end{proof}

\begin{proof}[Proof of Sufficiency in Theorem~\ref{thm:alphagrpconstunifgen}]  Let $\calX$ be countable, $\calH \subseteq \{0, 1\}^\calX$ be any class satisfying the UUS property, and $\calA = \{A_i\}_{i \in \mathbb{N}}$ be any countable  infinite partition of $\calX$. Suppose that $\mathsf{GC}_{\alpha}(\calH, \calA) < \infty$ for some $\alpha > 0$.  Let $d := \mathsf{GC}_{\alpha}(\calH, \calA)$ Then, by definition, we have that for every $c \geq d+ 1$ and sequence of distinct examples $x_1, x_2, \dots, x_{c}$ such that $\langle x_1, x_2, \dots, x_{c}\rangle_{\calH} \neq \bot$, 
both of the following conditions hold true:

\begin{itemize}
\item[(1)] $\max_{i \in S} \groupemp{c}(i) \leq \alpha$  and
\item[(2)] $\alpha \, |\mathbb{N}/S| \geq \sum_{i \in S} \groupemp{c}(i),$ 
\end{itemize}

\noindent where $S = \{i \in \mathbb{N} : \langle x_1, \dots, x_c \rangle_{\calH} \cap A_i \setminus \{x_1, \dots, x_c\} = \emptyset\}.$ 

We will use this fact to construct an $\alpha$-representative uniform generator satisfying the properties in Definition \ref{def:alphaunifgen}. 

Let $x_1, x_2, \dots$ be any stream of examples. Consider the following generator $\gen$. For each round $t$ until $d+1$ unique examples have been observed, $\gen$ computes and plays from any $\mu_t \in \Delta \calX$ such that $\groupdist{\mu_t}$ is an $\alpha$-approximation of the group empirical distribution $\groupemp{t}$,  i.e. $\|\groupdist{\mu_t} - \groupemp{t}\|_{\infty} \leq \alpha$. Note that such a $\mu_t$ is always guaranteed to exist, as we can choose precisely $\mu_t = \emp{t}$.
Suppose on round $t^{\star}$, we have that $|\{x_1, \dots, x_{t^{\star}}\}| = d + 1.$ For all rounds $t \geq t^{\star}$, $\gen$ checks whether $\langle x_1, \dots, x_t \rangle_{\calH} = \bot.$ If this is true, then $\gen$ computes and plays from any $\mu_t \in \Delta \calX$ such that $\groupdist{\mu_t}$ is an $\alpha$-approximation of the group empirical distribution $\groupemp{t}$. Otherwise, $\gen$ first computes the group empirical distribution $\groupemp{t}(i)$ and then the set $S_t = \{i \in \mathbb{N} : \langle x_1, \dots, x_t \rangle_{\calH} \cap A_i \setminus \{x_1, \dots, x_t\} = \emptyset\}.$ If $S_t := \emptyset$, $\gen$ picks $z_i \in \langle x_1, \dots, x_t \rangle_{\calH} \cap A_i \setminus \{x_1, \dots, x_t\}$ for all $i \in \mathbb{N}$ 
and constructs the distribution $\mu_t \in \Delta \calX$ such that $\mu_t(z_i) = \groupemp{t}(i)$ and $\mu_t(x) = 0$ for all $x \in \calX \setminus \{z_1, z_2, \dots\}.$ Otherwise, if $S_t \neq \emptyset$, $\gen$ picks $z_i \in \langle x_1, \dots, x_t \rangle_{\calH} \cap A_i \setminus \{x_1, \dots, x_t\}$ for all $i \in \mathbb{N} \setminus S_t$ and constructs a distribution $\mu_t \in \Delta \calX$ in the following way. First, $\gen$ picks a measure $\mu^{\prime}: \calX \rightarrow [0, 1]$ such that  $\mu^{\prime}_t(z_i) = \groupemp{t}(i)$ for all $i \in \mathbb{N}\setminus S_t$ and $\mu^{\prime}_t(x) = 0$ for all $x \in \calX \setminus \{z_1, z_2, \dots\}.$ At this point, $\sum_{x \in \calX} \mu^{\prime}_t(x) = \sum_{i \in \mathbb{N}\setminus S_t} \groupemp{t}(i)$ and thus, there is still $\sum_{i \in S_t} \groupemp{t}(i)$ amount of mass to be placed. To complete the distribution, $\gen$ distributes the remaining $\sum_{i \in S_t} \groupemp{t}(i)$ mass among $\{z_1, z_2, \dots\}$ and obtains a probability measure $\mu_t \in \Delta \calX$ such that $0 \leq \mu_t(z_i) - \groupemp{t}(i) \leq \alpha$ and $\sum_{x \in \calX} \mu_t(x) = 1.$ We will prove why this is possible below. 

We now show that such a generator is $\alpha$-representative and consistent after observing $d^{\star} := \mathsf{GC}_{\alpha}(\calH, \calA) + 1$ distinct examples. We first prove consistency.  

\vspace{5pt}
\noindent \textbf{Proof of consistency:} Let $\gen$ be the generator described above. Let $h \in \calH$ be the target hypothesis and $x_1, x_2, \dots \subseteq \supp{h}$ be the stream chosen by the adversary. Without loss of generality, suppose there are at least $d^{\star}$ distinct examples in the stream and let $t^{\star}$ be the first time point such that $|\{x_1, \dots, x_{t^{\star}}\}| = d^{\star}.$ We need to show for all $s \geq t^{\star}$:

$$\Pr_{\hat{x}_s \sim \gen(x_{1:s})}\left[\hat{x}_s \in  \supp{h} \setminus \{x_1, \dots, x_s\}\right] = 1.$$
   
Fix some $s \geq t^{\star}.$ Then, observe that by construction, $\gen$ always picks and plays from a distribution $\mu_s \in \Delta \calX$ such that $\supp{\mu_s} \subseteq \langle x_1, \dots, x_s \rangle_{\calH} \setminus \{x_1, \dots, x_s\}.$ Since $\langle x_1, \dots, x_s \rangle_{\calH} \subseteq \supp{h}$, the proof of consistency is complete.

We now prove that $\gen$ is $\alpha$-representative. 

\vspace{5pt}
\noindent \textbf{Proof of $\alpha$-representativeness:} Fix any (not necessarily valid) sequence of examples $x_1, x_2, \dots$. It suffices to show that for every $t \in \mathbb{N}$, we have that 

$$||\groupdist{\gen(x_{1:t})} - \groupemp{t}||_{\infty} \leq \alpha.$$

Let $t^{\star} \in \mathbb{N}$ be the smallest time point such that $|\{x_1, \dots, x_{t^{\star}}\}| = d^{\star}$.  By definition, observe that $\gen$ satisfies $\alpha$-representativeness for all $t < t^{\star}.$ Fix some $t \geq t^{\star}.$ There are three cases to consider. Suppose that $\langle x_1, \dots, x_{t} \rangle_{\calH} = \bot.$ Then by definition, $\gen$ plays an $\alpha$-approximation of $\groupemp{t}$ and hence is $\alpha$-representative. Suppose that $\langle x_1, \dots, x_{t} \rangle_{\calH} \neq \bot$ and $S_t = \emptyset.$ Then, by construction, $\gen$ picks and plays from a distribution $\mu_t \in \Delta \calX$ such that $\groupdist{\mu_t} = \groupemp{t}$ and thus $\alpha$-representativeness is trivially satisfied. Finally, consider the case where $\langle x_1, \dots, x_{t} \rangle_{\calH} \neq \bot$ and $S_t \neq \emptyset.$ We claimed above that in this scenario, $\gen$ first computes an incomplete measure $\mu^{\prime}_t$ and then distributes the remaining $\sum_{i \in S_t} \groupemp{t}(i)$ mass to obtain a probability measure $\mu_t$ such that $0 \leq \mu_t(z_i) - \groupemp{t}(i) \leq \alpha$ and $\sum_{x \in \calX} \mu_t(x) = 1.$ To see why this is possible, first note that since $|\{x_1, \dots, x_t\}| \geq \mathsf{GC}_{\alpha}(\calH, \calA) + 1$, we know that conditions (1) and (2) hold. Then, consider two cases: (i) $\sum_{i \in S_t} \groupemp{t}(i) > \alpha$ and (ii) $\sum_{i \in S_t} \groupemp{t}(i) \leq \alpha.$ In case (i), we know that $\mu^{\prime}_t(z_i) < 1 - \alpha$ for all $i \in \mathbb{N} \setminus S_t.$ Hence, we can obtain the probability measure $\mu_t$ by adding at most $\alpha$ mass to $z_1$, and then $z_2$, and so on until all of $\sum_{i \in S_t} \groupemp{t}(i)$ has been accounted for since  $\alpha \, |\mathbb{N}/S| \geq \sum_{i \in S_t} \groupemp{t}(i)$. In case (ii), there must exist an $i \in \mathbb{N} \setminus S_t$ such that $\mu^{\prime}_t(i) \leq 1 - \sum_{i \in S_t} \groupemp{t}(i)$. Hence we can obtain the probability measure $\mu_t$, by adding all of the mass of $\sum_{i \in S_t} \groupemp{t}(i)$ on $z_i$ since $\sum_{i \in S_t} \groupemp{t}(i) \leq \alpha.$ The analysis above gives that  $\gen$ plays a measure $\mu_t \in \Delta \calX$ such that $|\groupdist{\mu_t}(i) -  \groupemp{t}(i)|\leq \alpha$ for all $i \in \mathbb{N}\setminus{S_t}$ and $\groupdist{\mu_t}(i) = 0$ for all $i \in S_t.$ However, since $|\{x_1, \dots, x_t\}| \geq \mathsf{GC}_{\alpha}(\calH, \calA) + 1$, condition (1) holds and thus $\max_{i \in S_t} \groupemp{t}(i) \leq \alpha$. This gives that $|\groupdist{\mu_t}(i) -  \groupemp{t}(i)|\leq \alpha$ for all $i \in S_t$ implying that $||\groupdist{\gen(x_{1:t})} - \groupemp{t}||_{\infty} \leq \alpha$. Since $t \geq t^{\star}$, this concludes the proof of $\alpha$-representativeness and the overall proof. \end{proof}

\section{Proof of Theorem \ref{thm:grpconstnonunifgen}} \label{app:grpconstnonunifgen}
We separate the two directions (necessity and sufficiency) of Theorem~\ref{thm:grpconstnonunifgen} into two proofs below.

\begin{proof}[Proof of Sufficiency in Theorem~\ref{thm:grpconstnonunifgen}] Let $\calX$ be countable, $\calH \subseteq \{0, 1\}^\calX$ be any class satisfying the UUS property, and $\calA = \{A_i\}_{i \in \mathbb{N}}$ be any countable partition of $\calX$. 

 Fix some $\alpha > 0.$ Suppose there exists a non-decreasing sequence of classes $\calH_1 \subseteq \calH_2 \subseteq \cdots$ such that $\calH = \bigcup_{i = 1}^{\infty} \calH_i$ and $(\calH_i, \calA)$ is $\alpha$-representative uniformly generatable for all $i \in \mathbb{N}.$ Then, there exists a $\alpha$-representative uniform generator $\gen_i$ for each $(\calH_i, \calA)$. For every $i \in \mathbb{N}$, let $n_i$ be the number of distinct examples that $\gen_i$ needs to see before it is consistent (i.e. $n_i$ is the $d^{\star}$ in Definition \ref{def:alphaunifgen}). Observe we can assume without loss of generality that $n_1, n_2, \dots$ is a non-decreasing sequence by using the $\alpha$-representative uniform generator constructed in the proof of Theorem~\ref{thm:alphagrpconstunifgen}.  
 
 We will now use $\gen_1, \gen_2, \dots$ to construct a $\alpha$-representative non-uniform generator $\calQ$. To that end, let $x_1, x_2, \dots$ be any stream of examples. Consider the following generator $\calQ.$ On time point $t \in \mathbb{N}$, $\calQ$ first computes the number of distinct examples $d_t := 
 |\{x_1, \dots, x_t\}|$ in the stream so far.  Then, $\calQ$ computes

 $$i_t = \max \, \{i \in [t] : n_i \leq d_t\} \cup \{1\}$$

 \noindent and plays $\gen_{i_t}(x_1, \dots, x_t)$, the output of $\gen_{i_t}$ on input $x_1, \dots, x_t$. 

 We now show that $\calQ$ is an $\alpha$-representative non-uniform generator that satisfies Definition \ref{def:nonunifgen}. Let us start by proving that $\calQ$ is $\alpha$-representative. Let $x_1, x_2, \dots$ be any (not necessarily valid) stream of examples. It suffices to show that for every $t \in \mathbb{N}$, we have that $||\groupdist{\calQ(x_{1:t})} - \groupemp{t}||_{\infty} \leq \alpha$. Recall that $\calQ(x_1, \dots, x_t) = \gen_{i_t}(x_1, \dots, x_t).$ Since $\gen_{i_t}$ is an $\alpha$-representative generator for $(\calH_{i_t}, \calA)$, it must be the case that 
$$||\groupdist{\gen_{i_t}(x_{1:t})} - \groupemp{t}||_{\infty} \leq \alpha.$$

We now complete the proof by establishing the consistency guarantees of $\calQ.$ To that end,  let $h \in \calH$ and $x_1, x_2, \dots \subseteq \supp{h}$ be the hypothesis and stream picked by the adversary. Let $j^{\star} \in \mathbb{N}$ be the smallest index such that $h \in \calH_{j^{\star}}$ and let $d^{\star} := \max\{j^{\star}, n_{j^{\star}}\}.$ We claim that $\calQ$  is consistent after observing  $d^{\star}$ unique examples. To see why, suppose without loss of generality that $x_1, x_2, \dots$ contains at least $d^{\star}$ distinct elements and let $t^{\star}$ be the smallest time point such that $|\{x_1, \dots, x_{t^{\star}}\}| = d^{\star}.$ We need to show that for all $s \geq t^{\star}$, we have that $\Pr_{\hat{x}_s \sim \calQ(x_{1:s})}\left[\hat{x}_s \in  \supp{h} \setminus \{x_1, \dots, x_s\}\right] = 1.$ Fix some $s \geq t^{\star}.$ Then, observe that $d_s \geq n_{j^{\star}}$ and $t^{\star} \geq j^{\star}.$ Accordingly, we have that $i_s \geq j^{\star}$ and hence $h \in \calH_{i_s}$. Since $n_{i_s} \leq d_s$, we also know that 

$$\Pr_{\hat{x}_s \sim \gen_{i_s}(x_{1:s})}\left[\hat{x}_s \in  \supp{h} \setminus \{x_1, \dots, x_s\}\right] = 1.$$

Finally, since $\calQ(x_{1:s}) = \gen_{i_s}(x_{1:s})$ by construction, we have that $\calQ$ is consistent on round $s$. Since $s \geq t^{\star}$ is chosen arbitrarily, this is true for all such $s$, completing the proof of consistency and the overall. proof that $\calQ$ is an $\alpha$-representative non-uniform generator for $(\calH, \calA).$
\end{proof}

\begin{proof}[Proof of Necessity in Theorem~\ref{thm:grpconstnonunifgen}]  Let $\calX$ be countable, $\calH \subseteq \{0, 1\}^\calX$ be any class satisfying the UUS property, and $\calA = \{A_i\}_{i \in \mathbb{N}}$ be any countable partition of $\calX$. Suppose that $(\calH, \calA)$ is representative non-uniformly generatable. We need to show that for every $\alpha > 0$, there exists a non-decreasing sequence of classes $\calH_1 \subseteq \calH_2 \subseteq \cdots$ such that $\calH = \bigcup_{i = 1}^{\infty} \calH_i$ and $(\calH_i, \calA)$ is $\alpha$-representative uniformly generatable  $\forall \, i \in \mathbb{N}.$ To that end, fix some $\alpha > 0$. If $\calH$ is representative non-uniformly generatable, then for this error level $\alpha$, there exists a $\alpha$-representative non-uniform generator $\gen.$ For every $h \in \calH$, let $d_h \in \mathbb{N}$ be such that for any sequence $x_1, x_2, \dots$ with $\{x_1, x_2, \dots\} \subseteq \supp{h}$, if there exists $t^{\star} \in \mathbb{N}$ where $|\{x_1, \dots, x_{t^{\star}}\}| = d_h$, then $\gen$ is consistent from $t^{\star}$. 
Now, define $\calH_i := \{h \in \calH: d_h \leq i\}$ for all $i \in \mathbb{N}$. Note that $\calH_1 \subseteq \calH_2 \subseteq \cdots$ and that $\bigcup_{i=1}^{\infty} \calH_i = \calH.$ Finally, observe that $\gen$ is an $\alpha$-representative uniform generator for $\calH_i$, and hence, $(\calH_i, \calA)$ is $\alpha$-representative uniformly generatable.  \end{proof}



\section{Proof of Theorem~\ref{thm:countable-countable-gil}}\label{sec:existential-gen-in-limit-pf}
\begin{proof}[Proof of Theorem~\ref{thm:countable-countable-gil}]
    Choose some $\alpha > 0$, and consider the following mapping from a sequence of examples $x_1, ..., x_t$ to a distribution over $\calX$:

    \begin{enumerate}
    \item Given examples $x_1, ..., x_t$, let $C_t \subseteq \{h_1, ..., h_t\}$ be the set of critical hypotheses at step $t$. 
    \item Let $F_t \subseteq C_t$ be the subset of critical hypotheses that are also $\alpha$-feasible. 
    \item if $F_t$ is empty, output the distribution $\mu_t = \emp{t}$.
    \item Otherwise, let $h_n \in F_t$ be the hypothesis in $F_t$ with the largest index $n \leq t$. Output the distribution $\mu_t$ over $\supp{h_n} \setminus x_{1:t}$ that witnesses $h_n$'s $\alpha$-feasibility.
    \end{enumerate}

    We now show that the generator $\gen$ that follows this mapping and outputs $\mu_t$ at step $t$ for all $t\in \N$ satisfies representative generation in the limit. To do this, we need to verify that $\gen$ is $\alpha$-representative, and for any enumeration $x_1, x_2, ...$ of an $h \in \calH$ and there exists some $t^* \in \N$ such that $\gen$ is consistent after timestep $t^*$. 

    \paragraph{Property 1: $\alpha$-Representative.} We first show that the generator's output is representative at every $t \in \N$. This is trivially true if $\gen$ outputs $\mu_t = \emp{t}$ at Step 3, and by the definition of $\alpha$-feasibility, the $\mu_t$ output at step 4 also $\alpha$-approximates the empirical distribution of groups. Thus, in either case the $\mu_t$ output satisfies the representation requirement. Because this holds true for any datastream $x_1, x_2, ...$, we conclude that $\gen$ is $\alpha$-representative.

    \paragraph{Property 2: Consistent.} It remains to show that there exists some $t^* \in \N$ such that for all $s \geq t^*$, $\supp{\mu_s} \subseteq \supp{h} \setminus x_{1:s}$. Let $t \in \N$ be the timestep guaranteed to exist by Lemma~\ref{lem:critical-finite-time} such that for all $s \geq t$, $h$ is a critical hypothesis, i.e. $h \in C_s$. Let $d \in \N$ be the finite timestep guaranteed to exist by Lemma~\ref{lem:finite-feasibility} such that for all $s \geq d$, $h$ is $\alpha$-feasible. Let $t^* = \max\{d, t\}$. It follows that for all $s \geq t^*$, we have $h \in F_s$, as it is both critical and $\alpha$-feasible. 

    Thus, for any timestep $s \geq t^*$, we are guaranteed that $F_s$ is non-empty, and $\mu_s$ is guaranteed to satisfy $\supp{\mu_s} \subseteq \supp{h_n}\setminus x_{1:s}$, where $h_n \in F_t$ is the hypothesis with the largest index $n \leq t$. It follows from the definition of a critical hypothesis that we must have $\supp{h_n} \subseteq \supp{h}$, and thus $\mu_s$ satisfies consistency. 

    Thus, we have shown that $\gen$ as described above is an $\alpha$-representative generator and is consistent for all $s \geq t^*$, and thus because $\alpha > 0$ was chosen arbitrarily, we conclude that $(\calH, \calA)$ is generatable in the limit with representation.
\end{proof}


\end{document}